%%%%%%%% ICML 2022 EXAMPLE LATEX SUBMISSION FILE %%%%%%%%%%%%%%%%%

\documentclass{article}

% Recommended, but optional, packages for figures and better typesetting:
\usepackage{microtype}
\usepackage{graphicx}
\usepackage{subfigure}
\usepackage{booktabs} % for professional tables

% hyperref makes hyperlinks in the resulting PDF.
% If your build breaks (sometimes temporarily if a hyperlink spans a page)
% please comment out the following usepackage line and replace
% \usepackage{icml2022} with \usepackage[nohyperref]{icml2022} above.
\usepackage{hyperref}
% Optional math commands from https://github.com/goodfeli/dlbook_notation.
%%%%% NEW MATH DEFINITIONS %%%%%

\usepackage{amsmath,amsfonts,bm}

% Mark sections of captions for referring to divisions of figures

% Highlight a newly defined term

% Figure reference, lower-case.

% Figure reference, capital. For start of sentence

% Section reference, lower-case.

% Section reference, capital.

% Reference to two sections.

% Reference to three sections.

% Reference to an equation, lower-case.
\def\eqref#1{equation~\ref{#1}}
% Reference to an equation, upper case

% A raw reference to an equation---avoid using if possible

% Reference to a chapter, lower-case.

% Reference to an equation, upper case.

% Reference to a range of chapters

% Reference to an algorithm, lower-case.

% Reference to an algorithm, upper case.

% Reference to a part, lower case

% Reference to a part, upper case

\def\floor#1{\lfloor #1 \rfloor}
\def\1{\bm{1}}

% Random variables

% rm is already a command, just don't name any random variables m

% Random vectors

% Elements of random vectors

% Random matrices

% Elements of random matrices

% Vectors

% Elements of vectors

% Matrix

% Tensor
\DeclareMathAlphabet{\mathsfit}{\encodingdefault}{\sfdefault}{m}{sl}
\SetMathAlphabet{\mathsfit}{bold}{\encodingdefault}{\sfdefault}{bx}{n}

% Graph

% Sets

% Don't use a set called E, because this would be the same as our symbol
% for expectation.

% Entries of a matrix

% entries of a tensor
% Same font as tensor, without \bm wrapper

% The true underlying data generating distribution

% The empirical distribution defined by the training set

% The model distribution

% Stochastic autoencoder distributions

 % Laplace distribution

% Wolfram Mathworld says $L^2$ is for function spaces and $\ell^2$ is for vectors
% But then they seem to use $L^2$ for vectors throughout the site, and so does
% wikipedia.

 % See usage in notation.tex. Chosen to match Daphne's book.

\DeclareMathOperator*{\argmax}{arg\,max}

% Attempt to make hyperref and algorithmic work together better:

\newcommand{\widgraph}[2]{\includegraphics[keepaspectratio,width=#1]{#2}}

%%%%%%%%%%%%%%%%%%%%%%%%%%%%%%%%%%%%%%%%%%%%%%%%%

% Use the following line for the initial blind version submitted for review:
\usepackage[accepted]{icml2022}

% If accepted, instead use the following line for the camera-ready submission:
% \usepackage[accepted]{icml2022}

% For theorems and such
\usepackage{amsmath}
\usepackage{amssymb,bbm}
\usepackage{mathtools}
\usepackage{amsthm}
\usepackage{amsfonts}
\usepackage{multirow}
\usepackage{multicol}
% \usepackage{breqn}

% if you use cleveref..
\usepackage[capitalize,noabbrev]{cleveref}

%%%%%%%%%%%%%%%%%%%%%%%%%%%%%%%%
% THEOREMS
%%%%%%%%%%%%%%%%%%%%%%%%%%%%%%%%

\newtheorem{theorem}{Theorem}
\newtheorem{example}{Example}

\newtheorem{definition}{Definition}
%opening
% \theoremstyle{plain}
% \newtheorem{theorem}{Theorem}[section]
% \newtheorem{proposition}[theorem]{Proposition}
% \newtheorem{lemma}[theorem]{Lemma}
% \newtheorem{corollary}[theorem]{Corollary}
% \theoremstyle{definition}
% \newtheorem{definition}[theorem]{Definition}
% \newtheorem{assumption}[theorem]{Assumption}
% \theoremstyle{remark}
% \newtheorem{remark}[theorem]{Remark}
\def\RR{\mathbb{R}}

% \DeclareMathOperator*{\argmax}{arg\,max}
% \DeclareMathOperator*{\argmin}{arg\,min}

% Todonotes is useful during development; simply uncomment the next line
%    and comment out the line below the next line to turn off comments
%\usepackage[disable,textsize=tiny]{todonotes}
\usepackage[textsize=tiny]{todonotes}

% The \icmltitle you define below is probably too long as a header.
% Therefore, a short form for the running title is supplied here:
\icmltitlerunning{Improving Mini-batch Optimal Transport via Partial Transportation}

\begin{document}

\twocolumn[
\icmltitle{Improving Mini-batch Optimal Transport via Partial Transportation}

% It is OKAY to include author information, even for blind
% submissions: the style file will automatically remove it for you
% unless you've provided the [accepted] option to the icml2022
% package.

% List of affiliations: The first argument should be a (short)
% identifier you will use later to specify author affiliations
% Academic affiliations should list Department, University, City, Region, Country
% Industry affiliations should list Company, City, Region, Country

% You can specify symbols, otherwise they are numbered in order.
% Ideally, you should not use this facility. Affiliations will be numbered
% in order of appearance and this is the preferred way.
\icmlsetsymbol{equal}{*}

\begin{icmlauthorlist}
\icmlauthor{Khai Nguyen}{yyy,equal}
\icmlauthor{Dang Nguyen}{comp,equal}
\icmlauthor{The-Anh Vu-Le}{comp}
\icmlauthor{Tung Pham}{comp}
%\icmlauthor{}{sch}
% \icmlauthor{Firstname8 Lastname8}{sch}
\icmlauthor{Nhat Ho}{yyy}
%\icmlauthor{}{sch}
%\icmlauthor{}{sch}
\end{icmlauthorlist}

\icmlaffiliation{yyy}{Department of Statistics and Data Sciences, The University of Texas at Austin}
\icmlaffiliation{comp}{VinAI Research}

\icmlcorrespondingauthor{Khai Nguyen}{khainb@utexas.edu}

% You may provide any keywords that you
% find helpful for describing your paper; these are used to populate
% the "keywords" metadata in the PDF but will not be shown in the document
\icmlkeywords{Optimal Transport, Wasserstein, Minibatching, Generative models, Domain Adaptation}

\vskip 0.3in
]

% this must go after the closing bracket ] following \twocolumn[ ...

% This command actually creates the footnote in the first column
% listing the affiliations and the copyright notice.
% The command takes one argument, which is text to display at the start of the footnote.
% The \icmlEqualContribution command is standard text for equal contribution.
% Remove it (just {}) if you do not need this facility.

% \printAffiliationsAndNotice{}  % leave blank if no need to mention equal contribution
\printAffiliationsAndNotice{\icmlEqualContribution} % otherwise use the standard text.

\begin{abstract}
Mini-batch optimal transport (m-OT) has been widely used recently to deal with the \emph{memory issue} of OT in large-scale applications. Despite their practicality, m-OT suffers from misspecified mappings, namely, mappings that are optimal on the mini-batch level but are partially wrong in the comparison with the optimal transportation plan between the original measures. Motivated by the misspecified mappings issue, we propose a novel mini-batch method by using partial optimal transport (POT) between mini-batch empirical measures, which we refer to as \emph{mini-batch partial optimal transport} (m-POT). Leveraging the insight from the partial transportation, we explain the source of misspecified mappings from the m-OT and motivate why limiting the amount of transported masses among mini-batches via POT can alleviate the incorrect mappings. Finally, we carry out extensive experiments on various applications such as deep domain adaptation, partial domain adaptation, deep generative model, color transfer, and gradient flow to demonstrate the favorable performance of m-POT compared to current mini-batch methods.

% to compare m-POT with m-OT and recently proposed mini-batch method, mini-batch unbalanced optimal transport (m-UOT). We observe that m-POT is better than m-OT in deep domain adaptation applications while at least comparable  to m-UOT. On other applications, such as deep generative model and color transfer, m-POT yields more favorable performance than m-OT while m-UOT is inapplicable. 
\end{abstract}

\section{Introduction}
\label{sec:introduction}
From its origin in mathematics and economics, optimal transport (OT) has recently become a useful and popular tool in machine learning applications, such as (deep) generative models~\cite{arjovsky2017wasserstein, tolstikhin2018wasserstein}, computer vision/ graphics~\cite{solomon2016entropic, Nguyen_3D}, clustering problem~\cite{ho2017multilevel, Ho_Probabilistic}, and domain adaptation~\cite{courty2016optimal, damodaran2018deepjdot, le2021lamda}. An important impetus for such popularity is the recent advance in the computation of OT. In particular, the works of~\cite{altschuler2017near, Dvurechensky-2018-Computational, lin2019efficient} demonstrate that we can approximate optimal transport with a computational complexity  of the order $\mathcal{O}(n^2/ \varepsilon^2)$ where $n$ is the maximum number of supports of probability measures and $\varepsilon > 0$ is the desired tolerance. It is a major improvement over the standard computational complexity $\mathcal{O}(n^3 \log n)$ of computing OT via interior point methods~\cite{pele2009}.
Another approach  named sliced OT in ~\cite{bonneel2015sliced, nguyen2021distributional, nguyen2021improving, deshpande2019max} is able to reduce greatly the computational cost to the order of $\mathcal{O}(n \log n)$  by exploiting the closed-form of one dimensional OT on the projected data. However, due to the projection step, sliced optimal transport is not able to keep all the main differences between high dimensional measures, which leads to a decline in practical performance.

%Another reason for the popularity of optimal transport in machine learning applications are sliced optimal transports~\cite{bonneel2015sliced, nguyen2021distributional, nguyen2021improving, deshpande2019max}, which greatly improve the computational complexity of the OT while do not suffer from the curse of dimensionality. The main idea of sliced optimal transport is to project probability measures into a unit sphere and then define proper divergences between these projected probability measures. Sliced optimal transport generally has low computational complexity, which is at the order of $\mathcal{O}(n \log n)$. However, due to the projection step, sliced optimal transport is not able to maintain all important information of high dimensional probability measures, which leads to a decline in practical performance.
 
Despite these major improvements on computation, it is still impractical to use optimal transport and its variants in large-scale applications when $n$ can be as large as a few million, such as deep domain adaptation and deep generative model. The bottleneck mainly comes from the \emph{memory issue}, namely, it  is impossible for us to store $n\times n$ cost matrix when $n$ is extremely large. To deal with this issue, practitioners often replace the original large-scale computation of OT with cheaper computation on subsets of the whole dataset, which is widely referred to as mini-batch approaches \cite{arjovsky2017wasserstein, genevay2018learning,sommerfeld2019optimal}. In more detail, instead of computing the original large-scale optimal transport problem, the mini-batch approach splits it into smaller transport problems (sub-problems). Each sub-problem is to solve optimal transport between subsets of supports (or mini-batches), which belong to the original measures. After solving all the sub-problems, the final transportation cost (plan) is obtained by aggregating evaluated mini-batch transportation costs (plans) from these sub-problems. Due to the small number of samples in each mini-batch, computing OT between mini-batch empirical measures is doable when memory constraints and computational constraints exist. This mini-batch approach is formulated rigorously under the name of mini-batch OT (m-OT) and some of its statistical properties are investigated in~\cite{fatras2020learning, fatras2021minibatch}.

Despite its computational practicality, the trade-off of the m-OT approach is the  \emph{misspecified mappings} in comparison with the full OT transportation plan. In particular, a mini-batch is a sparse representation of the original supports; hence, solving an optimal transport problem between empirical mini-batch measures tends to create transport mappings that do not match  the global optimal transport mapping between the original probability measures. The existence of misspecified mappings had been noticed in \cite{fatras2021unbalanced}, hence,~\cite{fatras2021unbalanced} proposed to use unbalanced optimal transport (UOT)~\cite{chizat2018unbalanced} as a replacement of OT for the transportation between mini-batches, which they referred to as mini-batch UOT (m-UOT), and showed that m-UOT can reduce the effect of misspecified mappings from the m-OT. They further provided an objective loss based on m-UOT,  which achieves considerably better results on deep domain adaptation. However, it is intuitively hard to understand the transportation plan from m-UOT. Also, we observe that m-UOT suffers from an issue that its hyperparameter ($\tau$) is not robust to the scale of the cost matrix, namely, the value of m-UOT's hyperparameter needs to be chosen to be large if the distances between supports are large, and vice versa. Therefore, 
in applications such as  generative models where the supports of measures change significantly, it remains a challenge to use m-UOT.  %and applications needs to utilize a transportation map such as color transfer, 
% we should not fix the regularized parameter of m-UOT. 

\textbf{Contributions.} In this work, we develop a novel  mini-batch framework to alleviate the misspecified matchings issue of m-OT, the aforementioned issue of m-UOT, and to be robust to the scale of the cost matrix. In short, our contributions can be summarized as follows:

1. We propose to use partial optimal transport (POT) between empirical measures formed by mini-batches that can reduce the effect of misspecified mappings. The new mini-batch framework, named \emph{mini-batch partial optimal transport} (m-POT), has two practical applications: (i) the first application is an efficient mini-batch transportation cost used for objective functions in deep learning problems; (ii) the second application is a meaningful mini-batch transportation plan used for barycentric mappings in color transfer problem. Finally, via some simple examples, we further argue why partial optimal transport (POT) can be a natural solution for the misspecified mappings.

2. We conduct extensive experiments on applications that benefit from using mini-batches, including deep domain adaption, partial domain adaptation, deep generative model, color transfer, and gradient flow to compare m-POT with m-OT and its variant m-UOT~\cite{fatras2021unbalanced}. From experimental results, we observe that m-POT is better than m-OT on all deep domain adaptation tasks. In particular, m-POT gives at least $5.28$ higher on the average accuracy than m-OT. Comparison with m-UOT, it further pushes the accuracy a little bit higher on all datasets. On partial domain adaptation tasks, the m-POT also leads to a remarkable improvement of $2.02$ on average over recent methods in the literature. Finally, experiments on the deep generative model, color transfer, and gradient flow also demonstrate the favorable performance of m-POT.

3. We introduce a new training approach for deep domain adaptation with optimal transport losses. In particular, we propose the \textit{two-stage} implementation that first solves a relatively large mini-batch optimal transport problem on computer memory (RAM) then utilizes the obtained alignment to estimate the gradient from smaller mini-batches on high-speed computational devices (e.g., GPU). Combining with partial transportation, we observe considerable improvement in terms of classification accuracy in target domains.

% As a result, the practitioners may consider using POT as a replacement for OT in practical problems that uses an OT loss with mini-batches.

\textbf{Organization.} The paper is organized as follows. In Section~\ref{sec:background}, we provide backgrounds on (unbalanced) optimal transport and their mini-batch versions, then we highlight the limitations of each mini-batch method. In Section~\ref{sec:mPOT}, we propose mini-batch partial optimal transport to alleviate the limitations of previous mini-batch methods. 
% We describe applications of the proposed mini-batch method to several machine learning applications in Section~\ref{sec:mini-batchapps}. 
We provide extensive experiment results of mini-batch POT in Section~\ref{sec:experiment} and conclude the paper with a few discussions in Section~\ref{sec:Discussion}. Extra experimental, theoretical results and settings are deferred to the Appendices.

\textbf{Notation:} For two discrete probability measures $\boldsymbol{\alpha}$ and $\boldsymbol{\beta}$, $\Pi(\boldsymbol{\alpha},\boldsymbol{\beta}):=\big\{ \pi \in \mathbb{R}_+^{|\boldsymbol{\alpha}|\times|\boldsymbol{\beta}|} : \pi 1_{|\boldsymbol{\beta}|} = \boldsymbol{\alpha}, \pi^\top 1_{|\boldsymbol{\alpha}|} = \boldsymbol{\beta} \big\}$ is the set of transportation plans between $\mu$ and $\nu$, where $|\boldsymbol{\alpha}|$ denotes the number of supports of $\boldsymbol{\alpha}$, $||\boldsymbol{\alpha}||$ denotes total masses of $\boldsymbol{\alpha}$. Also, we denote $\boldsymbol{u}_n$ as the uniform distribution over $n$ supports (similar definition with $\boldsymbol{u}_m$). For a set of $m$ samples $X^m:=\{x_1,\ldots,x_m\}$, $P_{X^m}$ denotes the empirical measures $\frac{1}{m} \sum_{i=1}^m \delta_{x_i}$. For any $x \in \mathbb{R}$, we denote by $\floor{x}$ the greatest integer less than or equal to $x$. For any probability measure $\mu$ on the Polish measurable space $(\mathcal{X},\Sigma)$, we denote $\mu^{\otimes m} (m \geq 2)$ as the product measure on the product measurable space $(\mathcal{X}^{ m},\Sigma^{m})$.

\section{Background}
\label{sec:background}

In this section, we first restate the definition of Kantorovich optimal transport (OT) and unbalanced Optimal transport (UOT) between two empirical measures. After that, we review the definition of the mini-batch optimal transport (m-OT), mini-batch unbalanced optimal transport (m-UOT), and discuss misspecified matchings issue of m-OT and some challenges of m-UOT. 

\begin{figure*}[!t]
    \begin{center}
    \begin{tabular}{c}
    \widgraph{0.95\textwidth}{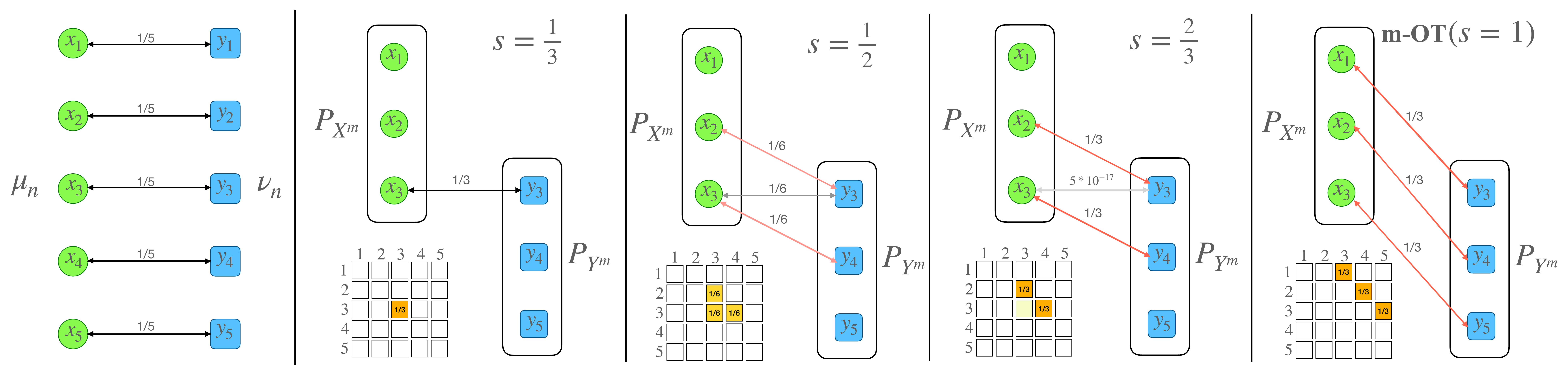} 
    \end{tabular}
    \end{center}
    \vskip -0.1in
    \caption{
        \footnotesize{ The illustration of Example~\ref{example:illustration} for m-POT for different values of $s$ including $s=1$ (m-OT). The green points and blue points are respectively the supports of the empirical measures $\mu_n$ and $\nu_n$. Black solid arrows and associate weights represent the optimal mappings between $\mu_n$ and $\nu_n$. Red solid arrows represent misspecified mappings. The $5\times 5$ matrix is the incomplete transportation matrix $\pi^{\text{POT}_s}_{P_{X^m},P_{Y^m}}$ which is created from solving POT between $P_{X^m}$ and $P_{Y^m}$. 
%   The boldness of color of arrows and entries of the transportation matrix represents their mass values.
    }} 
    \label{fig:mOT}
    \vskip -0.2in
\end{figure*}

\subsection{Optimal Transport and Unbalanced Optimal Transport}
\label{subsec:OTdistances}
Let $\mathcal{X}:=\{x_i\}_{i=1}^n$, $\mathcal{Y}:=\{y_j\}_{j=1}^{n}$ be two interested samples. The corresponding empirical measures are denoted by $\mu_n:=\frac{1}{n} \sum_{i=1}^n \delta_{x_i}$ and $\nu_n:=\frac{1}{n}\sum_{j=1}^{n}  \delta_{y_j}$.

\textbf{Optimal Transport:} The Kantorovich optimal transport \cite{villani2009optimal,peyre2019computational} between $\mu_n$ and $\nu_n$ is defined as follows: 
$
    \text{OT}(\mu_n,\nu_n) := \min_{\pi \in \Pi(\boldsymbol{u}_n,\boldsymbol{u}_n)} \langle  C,\pi\rangle, 
$ where $C$ is the distance matrix (or equivalently cost matrix) between $\mathcal{X}$ and $\mathcal{Y}$ that is produced by a ground metric (e.g., Euclidean distance or other designed distances).

\textbf{Unbalanced Optimal Transport:} The unbalanced optimal transport \cite{chizat2018unbalanced} between $\mu_n$ and $\nu_n$ is defined as follows: 
$
    \text{UOT}^{\tau}_\phi(\mu_n,\nu_n)  := \min_{\pi \in \mathbb{R}_+^{n\times n}} \langle C,\pi \rangle + \tau \text{D}_\phi (\pi_{1} , \mu_{n}) 
    + \tau \text{D}_\phi (\pi_{2} ,\nu_{n}),
$
where $C$ is the distance matrix, $\tau > 0$ is a regularized parameter, $D_\phi$ is a certain probability divergence (e.g., KL divergence and total variational distance), and $\pi_{1}$, $\pi_{2}$ are respectively the marginal distributions of non-negative measure $\pi$.
The computational cost of both OT and UOT problems are of order $\mathcal{O}(n^2/\varepsilon^2)$ and $\mathcal{O}(n^2/\varepsilon)$, respectively. Its solutions are obtained by running the Sinkhorn algorithm \cite{cuturi2013sinkhorn}, which updates directly the whole $n\times n$ matrix. It means that storing an $n\times n$ matrix is unavoidable in this approach, thus the memory capacity needs to match the matrix size.

\subsection{Mini-batch Optimal Transport}
\label{subsec:mOT_mUOT}

In real applications, the number of samples $n$ is usually very large (e.g., millions). It is due to the large-scale empirical measures or detailed discretization of continuous measures. Therefore, solving directly OT between $\mu_n$ and $\nu_n$ is generally impractical due to the limitation of computational devices, namely, memory constraints, and vast computation. As a solution, the original $n$ samples of two measures are divided (via sampling with or without replacement) into subsets of $m$ samples, which we refer to as mini-batches. The mini-batch size $m$ is often chosen to be the largest number that the computational device can process. Then, a mini-batch framework is developed to aggregate the optimal transport between pairs of the corresponding mini-batches into a global result.

\textbf{Motivating examples: } We now provide some motivating examples to further illustrate the practical importance of mini-batch methods. The first example is regarding training a deep learning model. In practice, it is trained by a loss that requires computing a large-scale OT, e.g., deep generative models~\cite{genevay2018learning} and deep domain adaptation~\cite{damodaran2018deepjdot}. 
The size of the cost matrix cannot be large in practice since memory is also utilized  to store models and data. The second example is color transfer application when the numbers of pixels in both source and target images are very large (e.g., millions). The mini-batch approach is used to transport a small number of pixels from source images to a small number of pixels from target images~\cite{fatras2020learning}. That process is repeated for a large number of iterations.

\textbf{Related methods}: As another option, stochastic optimization can be utilized to solve the Kantorovich dual form with parametric functions, i.e., Wasserstein GAN~\cite{arjovsky2017wasserstein,leygonie2019adversarial}. Due to the limitation of parametrization, it has been shown that this approach provides a very different type of discrepancy from the original Wasserstein distance~\cite{mallasto2019well,stanczuk2021wasserstein}. Recently, input convex neural networks are developed to approximate the Brenier potential~\cite{makkuva2020optimal}. Nevertheless, due to limited power in approximating Brenier potential~\cite{korotin2021continuous}, recent works have indicated that input convex neural networks are not sufficient for computing OT. Finally, both approaches require special choices of the ground metric of OT. Namely, Wasserstein GAN has to use the $\mathcal{L}_1$ norm  to make the constraint of dual form into the Lipchitz constraint and Brenier potential exists only when the ground metric is $\mathcal{L}_2$.

Next, we revise the definition of mini-batch optimal transport (m-OT)~\cite{fatras2020learning, fatras2021minibatch}. To ease the ensuing presentation, some notations in that paper are adapted into our paper. To build a mini-batch of $1 \leq m \leq n$ points, we sample $X^m:=\{x_1,\ldots,x_m\}$ with or without replacement from $\mathcal{X}^m$ (similarly, $Y^m$ are drawn from $\mathcal{Y}^m$) where $m$ is the mini-batch size.
\begin{definition}
\label{def:mOT}
(Mini-batch Optimal Transport) For $1 \leq m\leq n$ and $k \geq 1$, $X^m_1,\ldots,X^m_k$ and $Y^m_1,\ldots,Y^m_k$ are sampled with or without replacement from $\mathcal{X}^m$ and $\mathcal{Y}^m$, respectively. The m-OT transportation cost and transportation plan between $\mu_n$ and $\nu_n$ are defined as follow:
\begin{align}
    \text{m-OT}_k(\mu_n,\nu_n) &= \frac{1}{k} \sum_{i=1}^k \text{OT}(P_{X_i^m},P_{Y_{i}^m}); \nonumber \\
    \pi^{\text{m-OT}_k} &= \frac{1}{k} \sum_{i=1}^k \pi^{\text{OT}}_{P_{X_i^m},P_{Y_i^m}}, \label{eq: mini-batch_OT}
\end{align}
where $\pi^{\text{OT}}_{P_{X_i^m},P_{Y_i^m}}$ is a transportation matrix that is returned by solving $\text{OT}(P_{X_i^m},P_{Y_i^m})$. Note that, $\pi^{\text{OT}}_{P_{X_i^m},P_{Y_i^m}}$ is expanded to a $n \times n$ matrix that has padded zero entries to indices which are different from those of $X_i^m$ and $Y_i^m$.
\end{definition}

We would like to recall that $k=1$ is the choice that practitioners usually used in real applications.

\textbf{Misspecified matchings issue of m-OT: } m-OT suffers from the problem which we refer to as \emph{misspecified mappings}. In particular, misspecified mappings are non-zero entries in $\pi_k^{\text{m-OT}}$ while they have values of zero in the optimal transport plan $\pi$ between original measures $\mu_n$ and $\nu_n$. We consider the following simple example:
\begin{example}
\label{example:illustration} Let $\mu_n, \nu_n$ be two empirical distributions with 5 supports on 2D: $\left\{(0,1),(0,2),(0,3),(0,4), (0,5)\right\}$ and $\{(1,1),(1,2),(1,3),(1,4),(1,5)\}$. The optimal mappings between $\mu_n$ and $\nu_n$, $\{(0,i)-(1,i)\}_{i=1}^5$ are shown in Figure~\ref{fig:mOT} . Assuming that we use mini-batches of size 3 for m-OT. We specifically consider a pair of mini-batches $X^m=\{(0,1),(0,2),(0,3)\}$ and $Y^m=\{(1,3),(1,4),(1,5)\}$. Solving OT between $X^m$ and $Y^m$ turns into 3 misspecified mappings $(0,1)-(1,3)$, $(0,2)-(1,4)$, and $(0,3)-(1,5)$ that have masses $1/3$ (see Figure~\ref{fig:mOT}).
\end{example}

\subsection{Mini-batch Unbalanced Optimal Transport}
Currently,~\cite{fatras2021minibatch} mitigated the misspecified matching issue by proposing to use unbalanced optimal transport as the transportation type between samples of mini-batches. The mini-batch unbalanced optimal transport is defined as follow:

\begin{definition}
\label{mUOT}
(Mini-batch Unbalanced Optimal Transport) For $1 \leq m\leq n$, $k \geq 1$, $\tau >0$, a given divergence $D_\phi$, $X^m_1,\ldots,X^m_k$ and $Y^m_1,\ldots,Y^m_k$ are sampled with or without replacement from $\mathcal{X}^m$ and $\mathcal{Y}^m$, respectively. The m-UOT transportation cost and transportation plan between $\mu_n$ and $\nu_n$ are defined as follow:
\begin{align}
    \text{m-UOT}_k^{\phi,\tau}(\mu_n,\nu_n) &= \frac{1}{k} \sum_{i=1}^k \text{UOT}_\phi^\tau(P_{X_i^m},P_{Y_i^m}); \nonumber \\
    \pi^{\text{m-UOT}_k^{\phi,\tau}} &= \frac{1}{k} \sum_{i=1}^k \pi^{\text{UOT}_\phi^\tau}_{P_{X_i^m},P_{Y_i^m}}, \label{eq: mini-batch_UOT}
\end{align}
where  $\pi^{\text{UOT}_\phi^\tau}_{P_{X_i^m},P_{Y_i^m}}$ is a transportation matrix that is returned by solving $\text{UOT}_\phi^\tau(P_{X_i^m},P_{Y_i^m})$. Note that $\pi^{\text{UOT}_\phi^\tau}_{P_{X_i^m},P_{Y_i^m}}$ is expanded to a $n \times n$ matrix that has padded zero entries to indices which are different from of $X_i^m$ and $Y_i^m$.
\end{definition}

\textbf{Example: } In Example~\ref{example:illustration}, UOT can reduce masses on misspecified matchings by relaxing the marginals of the transportation plan. Using $D_\phi$ as KL divergence, we show the illustration of m-UOT results in Figure~\ref{fig:mUOT} in Appendix~\ref{subsec:visual}. We would like to recall that the regularized coefficient $\tau$ controls the degree of the marginal relaxation in m-UOT.

\textbf{Discussion on m-UOT:} % Despite being able to alleviate the misspecified mappings issue of m-OT, m-UOT still has some disadvantages. First of all, transportation plan of m-UOT is not easy to interpret since the total masses of the m-UOT's transportation plan cannot be known. Secondly, the performance of m-UOT depends on the choice of $\tau \in \mathbb{R}^+$ that admits a wide range of value. In addition, the value of $\tau$ is also affected by the value of the cost matrix $C$, e.g., when $C$ contains large values, $\tau$ should be respectively large to balance two terms in UOT (see equation~(\ref{eq:UOT})).
The m-UOT has some issues which originally come from the nature of UOT. First, the ``transport plan" for the UOT is hard to interpret since the UOT is developed for measures with different total masses. Second, the magnitude of regularized parameter $\tau$ depends on the cost matrix in order to make the regularization effective. Hence we need to search for $\tau$ in the wide range of $\mathbb{R}^+$, which is a problem when the cost matrix changes its magnitude. We illustrate a simulation to demonstrate that the transportation plan of UOT for a fixed parameter $\tau$ changes after scaling supports by a constant in Figure~\ref{fig:mUOTscale} in Appendix~\ref{subsec:visual}. In deep partial DA, m-UOT needs to regularize the scale of the feature space of the feature extractor. Also, m-UOT has not been applied to deep generative models and color transfer due to this challenge.
% In addition, the transportation plan of UOT tends to have non-zero entries due to the effect of KL divergence (a similar phenomenon tends to occur with other popular divergences that need to use a density ratio). 
% % Finally, directly solving UOT is not stable and in practice we need to use entropic regularization for a stabler solver~\cite{chizat2018scaling} (see our experiment results in Section~\ref{sec:experiment} and the Appendices). 
% Because of these weaknesses, it is not very convenient to apply m-UOT into practical applications. Also, it is non-trivial to explain m-UOT's behaviors when we try to alleviate misspecified matchings.
%%%%%%%%%%%%%%%%%%%%%%%%%%%%%%%%%%%%%%%%%%%%%%%%%%%%%%%%

\section{Mini-batch Partial Optimal Transport}
\label{sec:mPOT}
In this section we propose a novel mini-batch approach, named \emph{mini-batch partial optimal transport} (m-POT), that uses \emph{partial optimal transport} (POT) as the transportation at the mini-batch level. We first review the definition of partial optimal transport in Section~\ref{subsec:POT}. Then, we define mini-batch partial optimal transport and discuss its properties in Section~\ref{subsec:mPOT}. Moreover, we illustrate that POT can be a natural choice of transportation among samples of mini-batches via simple simulations.
%%%%%%%%%%%%%%%%%%%%%%%%%
\subsection{Partial Optimal Transport}
\label{subsec:POT}
Now, we restate the definition of partial optimal transport (POT) that is defined in \cite{figalli2010optimal}. Similar to the definition of transportation plans, we define the notion of partial transportation plans. Let $0 < s \leq 1$ to be transportation fraction. Partial transportation plan between two discrete probability measures $\boldsymbol{\alpha}$ and $\boldsymbol{\beta}$ is  $\Pi_s(\boldsymbol{\alpha},\boldsymbol{\beta}):=\big\{ \pi \in \mathbb{R}_+^{|\boldsymbol{\alpha}|\times|\boldsymbol{\beta}|} : \pi 1_{|\boldsymbol{\beta}|} \leq \boldsymbol{\alpha}, \pi^\top 1_{|\boldsymbol{\alpha}|} \leq \boldsymbol{\beta}, 1^\top \pi 1 = s \big\}$. With previous notations, the partial optimal transport between $\mu_n$ and $\nu_n$ is defined as follow:
\begin{align}
\label{eq:POT}
    \text{POT}_s(\mu_n,\nu_n) = \min_{\pi \in \Pi_s(\boldsymbol{u}_n,\boldsymbol{u}_n)} \langle C, \pi \rangle,
\end{align}
where $C$ is the distance matrix.
Equation~(\ref{eq:POT}) can be solved by adding dummy points (according to \cite{chapel2020partial}) to expand the cost matrix $\overline{C}=\left[\begin{array}{cc}
C & 0 \\
0 & A
\end{array}\right]$, where $A>0$. In this case, solving the POT turns into solving the following OT problem: 
\begin{align}
    \min_{\pi \in \Pi(\bar{\boldsymbol{\alpha}},\bar{\boldsymbol{\alpha}})} \langle \bar{C},\pi \rangle, \label{eq:POT_equivalence}
\end{align} with $\bar{\boldsymbol{\alpha}}=[\boldsymbol{u}_n,1-s]$. Furthermore, the optimal partial transportation plan in equation~(\ref{eq:POT}) can be derived from removing the last row and column of the optimal transportation plan in equation~(\ref{eq:POT_equivalence}).

\subsection{Mini-batch Partial Optimal Transport}
\label{subsec:mPOT}
The partial transportation naturally fits the mini-batch setting since it can decrease the transportation masses of misspecified mappings (cf. the illustration in Figure~\ref{fig:mOT} when two  mini-batches contain optimal matching of the original transportation plan). Specifically, reducing the number of masses to be transported, i.e., reducing $s$ (from right images to left images in Figure~\ref{fig:mOT}), returns globally better mappings. With the right choice of the transport fraction, we can select mappings between samples that are as optimal as doing full optimal transportation. Moreover, POT is also stable to compute since it boils down to OT. Therefore, there are several solvers that can be utilized to compute POT. 

% \begin{figure*}[!t]
% \begin{center}

%   \begin{tabular}{c}
% \widgraph{0.95\textwidth}{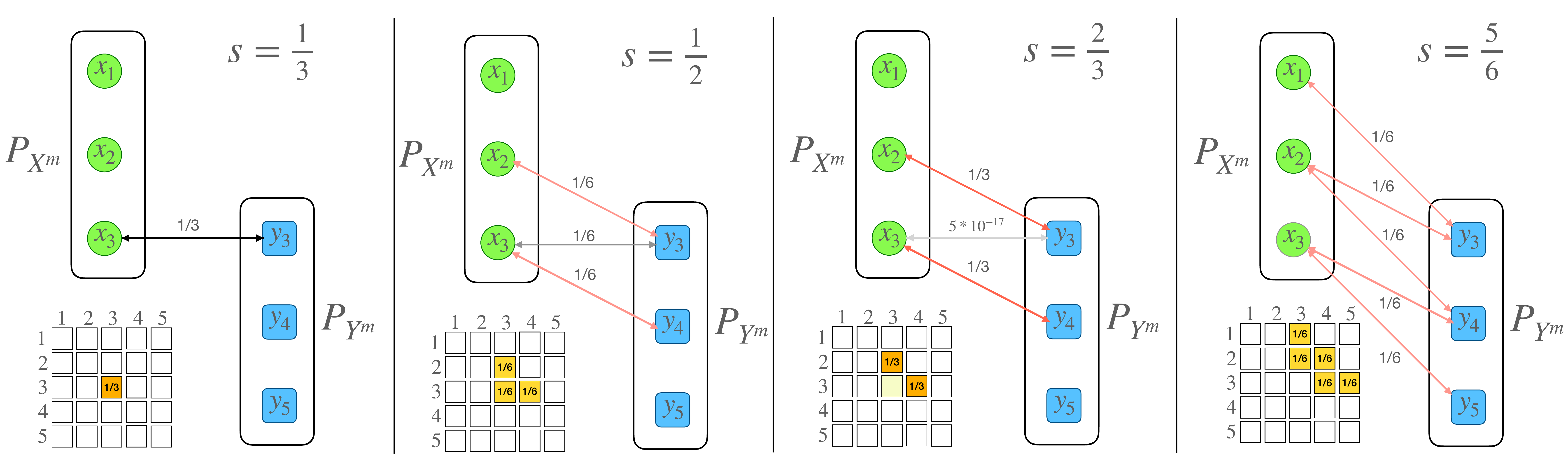} 
%   \end{tabular}
%   \end{center}
%   \vskip -0.2in
%   \caption{
%   \footnotesize{The illustration of Example~\ref{example:illustration} for m-POT. The meanings of green, blue points and arrows are similar to those in Figure~\ref{fig:mOT}. The parameter $s$ is a fraction of masses of POT. The $5\times 5$ matrix is the incomplete transportation matrix $\pi^{\text{POT}_s}_{P_{X^m},P_{Y^m}}$ which is created from solving POT between $P_{X^m}$ and $P_{Y^m}$. The boldness of color of arrows and entries of the transportation matrix represent their mass values.
% }
% } 
%   \label{fig:mOT}
%   \vskip -0.2in
% \end{figure*}

\begin{figure*}[t!]
    \begin{center}
        \begin{tabular}{c}
        \widgraph{0.85\textwidth}{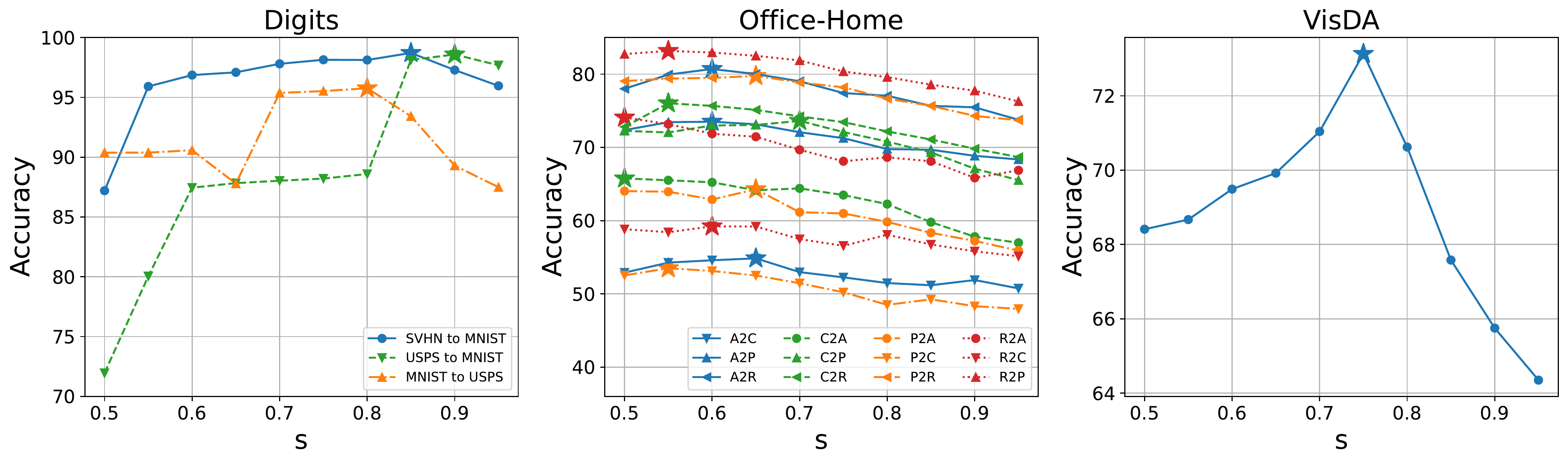} 
        \end{tabular}
    \end{center}
    \vskip -0.2in
    \caption{
        \footnotesize{Performance of m-POT on the deep DA when changing the fraction of masses $s$. The optimal values of $s$, which achieve the best accuracy, are marked by the $\star$ symbol. In the left figure, the optimal ratios for digits datasets lie between $0.8$ and $0.9$. In the middle figure, the best performing values are smaller, from $0.5$ to $0.7$, for the Office-Home dataset. On the VisDA dataset in the right figure, the optimal fraction of masses is $0.75$.}
    } 
    \label{fig:DA_s}
    \vskip -0.2in
\end{figure*}
 
Now, we define \emph{mini-batch partial optimal transport} (m-POT) between $\mu_n$ and $\nu_n$ as follow:
\begin{definition}
\label{mPOT}
(Mini-batch Partial Optimal Transport) For $1 \leq m\leq n$, $k \geq 1$, $0 < s \leq 1$, $X^m_1,\ldots,X^m_k$ and $Y^m_1,\ldots,Y^m_k$ are sampled with or without replacement from $\mathcal{X}^m$ and $\mathcal{Y}^m$, respectively. The m-POT transportation cost and transportation plan between $\mu_n$ and $\nu_n$ are defined as follow:
\begin{align}
    \text{m-POT}_k^{s}(\mu_n,\nu_n) &= \frac{1}{k} \sum_{i=1}^k POT_s(P_{X_i^m},P_{Y_{i}^m}); \nonumber \\
    \pi^{\text{m-POT}_k^s} &= \frac{1}{k} \sum_{i=1}^k \pi^{POT_s}_{P_{X_i^m},P_{Y_i^m}},
\end{align}
where $\pi^{POT_s}_{P_{X_i^m},P_{Y_i^m}}$ is a transportation matrix that is returned by solving $POT_s(P_{X_i^m},P_{Y_i^m})$. Note that $\pi^{POT_s}_{P_{X_i^m},P_{Y_i^m}}$ is expanded to a $n \times n$ matrix that has padded zero entries to indices which are different from of $X_i^m$ and $Y_i^m$.
\end{definition}

\paragraph{Computational complexity of m-POT:} 
From the equivalence form of POT in equation~(\ref{eq:POT_equivalence}), we have an equivalent form of m-POT in Definition~\ref{mPOT} as follows:
\begin{align}
    \text{m-POT}_k^{s}(\mu_n,\nu_n) = \frac{1}{k} \sum_{i = 1}^{k} \min_{\pi \in \Pi(\bar{\boldsymbol{\alpha}}_{i},\bar{\boldsymbol{\alpha}}_{i})} \langle \bar{C}_{i},\pi \rangle. \label{eq:equivalent_mPOT}
\end{align}
Here, $\overline{C}_{i}=\left[\begin{array}{cc}
C_{i} & 0 \\
0 & A_{i}
\end{array}\right] \in \mathbb{R}_{+}^{(m+1) \times (m+1)}$ and $\bar{\boldsymbol{\alpha}}_{i} = [\boldsymbol{u}_{m},1-s]$ where $C_{i}$ is a cost matrix formed by the differences of elements of $X_{i}^{m}$ and $Y_{i}^{m}$ and $A_{i} > 0$ for all $i \in [k]$. The computational complexity of approximating each OT problem in equation~(\ref{eq:equivalent_mPOT}) using entropic regularization is at the order of $\mathcal{O} \left(\frac{(m+1)^2}{\varepsilon^2} \right)$~\cite{altschuler2017near, lin2019efficient} where $\varepsilon > 0$ is the tolerance. Therefore, the total computational complexity of approximating mini-batch POT is at the order of $\mathcal{O} \left(\frac{k (m+1)^2}{\varepsilon^2} \right)$. It is comparable to the computational complexity of m-OT, which is of the order of $\mathcal{O} \left(\frac{k \cdot m^2}{\varepsilon^2} \right)$ and slightly larger than that of m-UOT, which is $\mathcal{O} \left(\frac{k \cdot m^2}{\varepsilon} \right)$~\cite{pham2020unbalanced}, in terms of $\varepsilon$.

\textbf{Concentration of m-POT:} We first provide a guarantee on the concentration of the m-POT's value for any given mini-batch size $m$ and given the number of mini-batches $k$.
\begin{theorem}
\label{theorem:concentration_bound}
For any given number of minibatches $k \geq 1$ and minibatch size $1 \leq m \leq n$, % there exists universal constant $C$ such that with probability $1 - \delta$ we have 
assume that the entries of $C_{ij}$ are obtained from the distance $d(X_i,Y_j)$. Furthermore, we assume that  $d(X_i,\mathbb{E}X_i)$ and $d(Y_j,\mathbb{E}Y_j)$ have sub-exponential distribution $SE(v^2,\gamma)$ (see Definition~\ref{def:supexponent} in Appendix~\ref{sec:concentration}). Then
\begin{align*}
     \mathbb{P} \Bigg{(}&\big|\text{m-POT}_k^{s}(\mu_n,\nu_n) - \text{m-POT}^{s}(\mu,\nu)\big| \geq \\ &D_n \sqrt{\frac{8\log(4/\delta)}{\lfloor n/m\rfloor}} + D_n \sqrt{\frac{2\log(4/\delta)}{k}} \Bigg{)} \leq \delta
\end{align*}
where $D_n = s\Big[d(\mathbb{E}X_i,\mathbb{E}Y_j)+ 2\max\Big\{\gamma \big[\log(2n) + \log(8/\delta)\big], \frac{v^2}{\gamma} \Big\} \Big]$ and \\
$\text{m-POT}^{s}(\mu,\nu) : = \mathbb{E}_{X \sim \mu^{\otimes m}, Y \sim \nu^{\otimes m}} \left[\text{POT}_{s}(P_{X^{m}}, P_{Y^{m}}) \right]$.
\end{theorem}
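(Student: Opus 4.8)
The plan is to separate the two independent sources of randomness --- the draw of the $n$ base samples forming $\mu_n,\nu_n$ and the subsequent draw of the $k$ mini-batches --- into two concentration arguments, after first pinning down the deterministic envelope $D_n$ on the per-mini-batch cost. To this end I would introduce the intermediate quantity $\text{m-POT}^{s}(\mu_n,\nu_n):=\mathbb{E}\big[\text{POT}_s(P_{X^m},P_{Y^m})\big]$, the expected cost of a single mini-batch whose $m$ supports are drawn (uniformly, without replacement) from the \emph{fixed} empirical measures $\mu_n,\nu_n$, and then use the triangle inequality
\[
\big|\text{m-POT}_k^{s}(\mu_n,\nu_n)-\text{m-POT}^{s}(\mu,\nu)\big|
\le
\big|\text{m-POT}_k^{s}(\mu_n,\nu_n)-\text{m-POT}^{s}(\mu_n,\nu_n)\big|
+\big|\text{m-POT}^{s}(\mu_n,\nu_n)-\text{m-POT}^{s}(\mu,\nu)\big|,
\]
calling the first term on the right $T_{\mathrm{batch}}$ and the second $T_{\mathrm{data}}$.

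The first ingredient is the envelope. Since any partial plan transports total mass exactly $s$, we have $0\le\text{POT}_s(P_{X^m},P_{Y^m})\le s\,\max_{i,j}C_{ij}$, and the triangle inequality gives $\max_{i,j}d(x_i,y_j)\le d(\mathbb{E}X,\mathbb{E}Y)+\max_i d(x_i,\mathbb{E}X_i)+\max_j d(y_j,\mathbb{E}Y_j)$, where all $x_i$ share the common mean $\mathbb{E}X$ (and all $y_j$ the mean $\mathbb{E}Y$) because they are i.i.d. Invoking the sub-exponential tail of $SE(v^2,\gamma)$ together with a union bound over the $2n$ supports controls both maxima by $\max\{\gamma[\log(2n)+\log(8/\delta)],\,v^2/\gamma\}$ on an event $\mathcal{E}$ of probability at least $1-\delta/2$, which is precisely what forces $\text{POT}_s\le D_n$ there.

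For $T_{\mathrm{batch}}$ I would condition on the data: the $k$ mini-batch costs are then i.i.d., each lying in $[0,D_n]$ with conditional mean $\text{m-POT}^{s}(\mu_n,\nu_n)$, so Hoeffding's inequality yields the $D_n\sqrt{2\log(4/\delta)/k}$ term with failure probability at most $\delta/2$. For $T_{\mathrm{data}}$ the key observation is that $\text{m-POT}^{s}(\mu_n,\nu_n)$ is exactly a two-sample U-statistic of order $(m,m)$ in the i.i.d. data $x_1,\dots,x_n$ and $y_1,\dots,y_n$, with bounded kernel $\text{POT}_s$ and with mean equal to $\text{m-POT}^{s}(\mu,\nu)$. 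Hoeffding's representation of such a U-statistic as an average of $\lfloor n/m\rfloor$ independent identically distributed blocks, combined with the bounded-differences inequality, produces the $D_n\sqrt{8\log(4/\delta)/\lfloor n/m\rfloor}$ rate. A final union bound over the two $\delta/2$ budgets gives the stated probability $\delta$.

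The delicate point, and the step I expect to be the main obstacle, is that the envelope $D_n$ is valid only on $\mathcal{E}$, yet $\mathcal{E}$ is measurable with respect to the very data that drives $T_{\mathrm{data}}$, so naively conditioning on $\mathcal{E}$ would destroy the independence of the U-statistic blocks. I would circumvent this by running both Hoeffding arguments on the clipped kernel $\min(\text{POT}_s,D_n)$, which is bounded by $D_n$ \emph{almost surely} and coincides with $\text{POT}_s$ on $\mathcal{E}$, so that the clipping error is absorbed into $\mathbb{P}(\mathcal{E}^{c})\le\delta/2$. Checking the two-sample U-statistic bound with this clipped, data-dependent envelope is the part requiring the most care; once $D_n$ and the block-independence structure are established, the remaining estimates are routine.
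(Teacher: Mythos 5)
Your proposal follows the paper's proof essentially step for step: the same triangle-inequality decomposition into a mini-batch term and a data term, the same envelope $D_n$ obtained by combining $\text{POT}_s \le s\,\max_{i,j}C_{ij}$ with the metric triangle inequality, sub-exponential tails, and a union bound over the $2n$ supports, then Hoeffding's inequality conditionally on the data for the mini-batch term and Hoeffding's inequality for two-sample U-statistics (the $\lfloor n/m\rfloor$-block representation) for the data term, with the same $\delta$-budget bookkeeping. The one place you deviate is in reconciling the data-dependent envelope with the U-statistic bound: the paper simply applies both Hoeffding inequalities conditioned on the event $E_n=\{\max_{i,j}\text{POT}_s(P_{X^m_{(i)}},P_{Y^m_{(j)}})\le D_n\}$ and multiplies conditional probabilities (accepting that conditioning on a data-measurable event technically perturbs the i.i.d.\ structure that the U-statistic argument needs), whereas your clipped kernel $\min(\text{POT}_s,D_n)$ is a cleaner fix for exactly that wrinkle --- though note that $D_n$ is deterministic, clipping shifts the center of concentration to the clipped mean, so the bias $\mathbb{E}[(\text{POT}_s-D_n)_+]$ must still be bounded separately (it is exponentially small thanks to the $\log(2n)+\log(8/\delta)$ terms inside $D_n$) rather than being literally absorbed into $\mathbb{P}(\mathcal{E}^c)$.
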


The proof of Theorem~\ref{theorem:concentration_bound} is in Appendix~\ref{sec:concentration_value}. Furthermore, in Appendix~\ref{sec:concentration_plan}, we study the concentration of the m-POT's transportation plan. We demonstrate that the row/ column sum of the m-POT's transportation plan concentrates around the row/ column sum of the full m-POT's transportation plan (cf. Definition~\ref{def:full_bactch_mPOT} in Appendix~\ref{def:full_bactch_mPOT}).

\noindent
\textbf{Practical consideration for m-POT: } First of all, as indicated in equation~(\ref{eq:equivalent_mPOT}), m-POT can be converted to m-OT with mini-batch size $m+1$. Therefore, it is slightly more expensive than m-OT and m-UOT in terms of memory and computation. The second issue of m-POT is the dependence on the choice of fraction of masses $s$ because $s$ plays a vital role in alleviating misspecified mappings from m-OT. At the first glance, choosing $s$ may seem as challenging as choosing $\tau$ in m-UOT; however, it appears that searching for $s$ is actually easier than $\tau$. 
% Specifically, the range of $s$, which is $(0,1]$, is easier to do parameter searching compared to that of $\tau$, which is $\mathbb{R}^+$. 
For example, we show that the transportation plan of POT for a fixed parameter $s$ is the same while the transportation plan of UOT for a fixed parameter $\tau$ changes significantly when we scale the supports of two measures by a constant in Figure~\ref{fig:mPOTscale} in Appendix~\ref{subsec:visual}. Moreover, in partial deep domain adaptation, m-UOT needs to use an additional regularizer coefficient for controlling the scale of the feature space of the neural networks while m-POT does not need that parameter.

\begin{table}[!t]
    \caption{DA results in classification accuracy on digits datasets (higher is better). Experiments were run 3 times. Detailed results for each run are given in Table~\ref{table:DA_digits_details} in Appendix~\ref{subsec:addexp_DA}.}
    \vskip 0.1in
    \label{table:DA_digits_summary}
    \centering
    \scalebox{0.75}{
        \begin{tabular}{ccccc}
            \toprule
            Method & \scalebox{0.8}{SVHN to MNIST} & \scalebox{0.8}{USPS to MNIST} & \scalebox{0.8}{MNIST to USPS} & Avg \\
            \midrule
            DANN & 95.80 $\pm$ 0.29 & 94.71 $\pm$ 0.12 & 91.63 $\pm$ 0.53 & 94.05 \\
            ALDA & 98.81 $\pm$ 0.08 & 98.29 $\pm$ 0.07 & 95.29 $\pm$ 0.16 & 97.46 \\
            m-OT & 94.18 $\pm$ 0.32 & 96.71 $\pm$ 0.24 & 86.93 $\pm$ 1.16  & 92.60 \\
            m-UOT & 98.89 $\pm$ 0.13 & 98.54 $\pm$ 0.20 & 95.83 $\pm$ 0.05 & 97.75 \\
            % m-POT & 98.46 $\pm$ 0.18 & 98.43 $\pm$ 0.11 & 95.79 $\pm$ 0.05 & 97.56 \\
            % em-POT & \textbf{98.98 $\pm$ 0.08} & \textbf{98.63 $\pm$ 0.13} & \textbf{96.04 $\pm$ 0.02} & \textbf{97.88} \\
            m-POT (Ours) & \textbf{98.98 $\pm$ 0.08} & \textbf{98.63 $\pm$ 0.13} & \textbf{96.04 $\pm$ 0.02} & \textbf{97.88} \\
            \bottomrule
        \end{tabular}
    }
    \vskip -0.2in
\end{table}
 \begin{figure*}[!t]
\begin{center}

  \begin{tabular}{c}
\widgraph{0.8\textwidth}{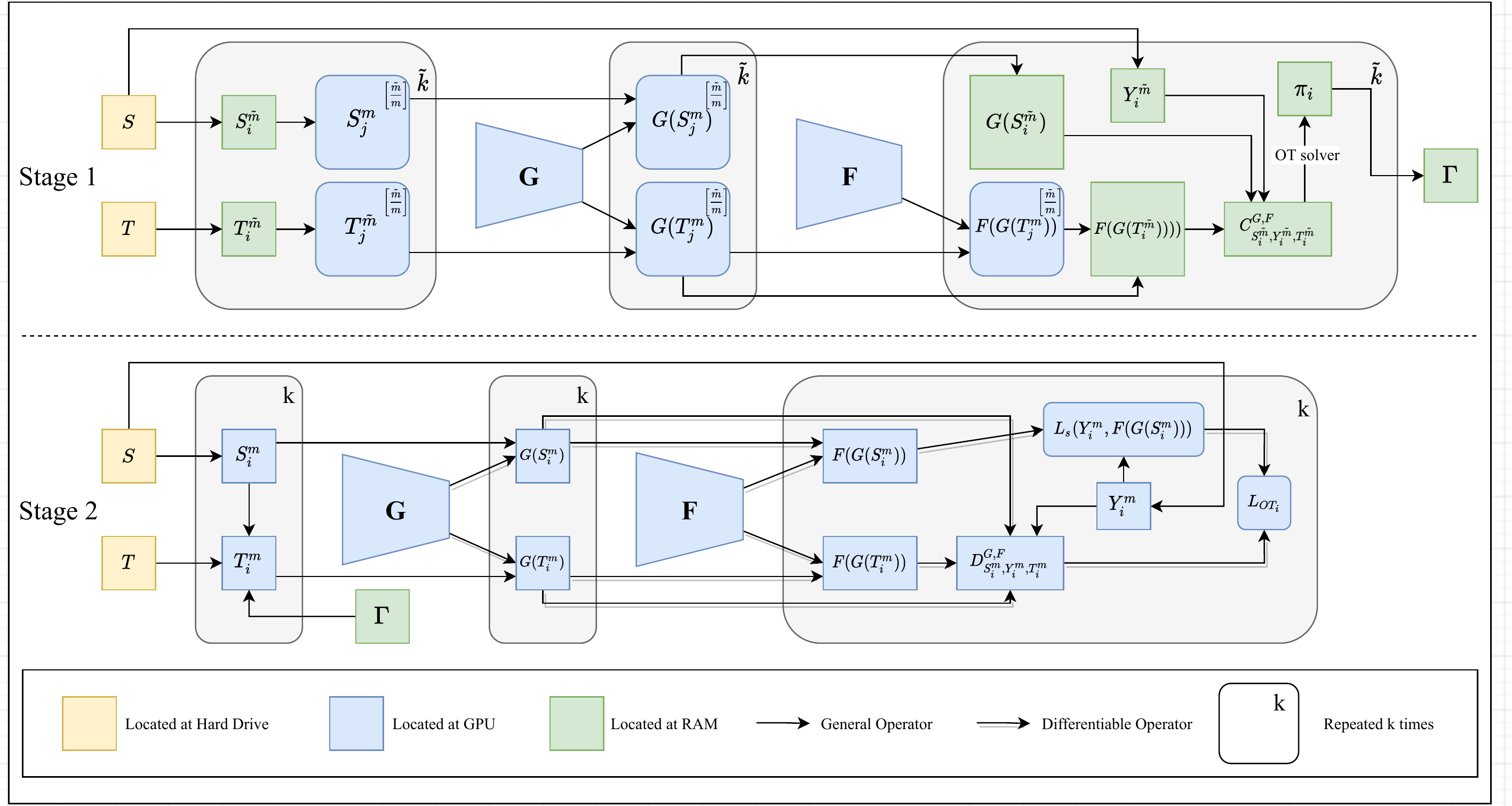} 
  \end{tabular}
  \end{center}
  \vskip -0.2in
  \caption{
  \footnotesize{The pseudo computational graph for the two-stage deep domain adaptation.
}
} 
  \label{fig:twostageDA}
  \vskip -0.2in
\end{figure*}
\begin{table}
    \caption{DA results in classification accuracy on the VisDA dataset (higher is better). Experiments were run 3 times. Detailed results for each run are given in Table~\ref{table:DA_visda_details} in Appendix~\ref{subsec:addexp_DA}.}
    \vskip 0.1in
    \label{table:DA_visda_summary}
    \centering
    \scalebox{0.8}{
        \begin{tabular}{cc}
            \toprule
            Method & Accuracy \\
            \midrule
            DANN & 67.63 $\pm$ 0.34 \\
            ALDA & 71.22 $\pm$ 0.12 \\
            m-OT & 62.42 $\pm$ 0.12 \\
            m-UOT & 72.34 $\pm$ 0.32 \\
            m-POT (Ours) & 73.59 $\pm$ 0.15 \\
            TS-OT (Ours) & 69.14 $\pm$ 0.72 \\
            TS-UOT (Ours) & 70.91 $\pm$ 0.11 \\
            TS-POT (Ours) & \textbf{75.96 $\pm$ 0.44} \\
            \bottomrule
        \end{tabular}
    }
    \vskip -0.2in
\end{table}

\section{Experiments}
\label{sec:experiment}

In this section, we focus on discussing experimental results on two applications, namely, deep domain adaption (deep DA) and partial domain adaptation (PDA). In particular, we compare the performance of m-POT with previous mini-batch optimal transport approaches, m-OT, and m-UOT, and some baseline methods on deep domain adaptation and partial domain adaptation. In the supplementary, we show that m-POT is better than m-OT on the deep generative model in Appendix~\ref{subsec:exp_dgm}. Furthermore, we also conduct simulations on mini-batch transportation matrices and experiments on color transfer application to show that m-POT can provide a good barycentric mapping in Appendix~\ref{subsec:color_transfer}.  Finally, a simple experiment on gradient flow is also carried out to illustrate the benefit of m-POT compared to m-OT and m-UOT in Appendix~\ref{subsec:gflow}. The details of applications and their algorithms are given in Appendix~\ref{sec:mini-batchapps}. In Appendix~\ref{sec:add_exp}, we report detailed results on all applications. The detailed experimental settings of all applications are deferred to Appendix~\ref{sec:setting}.\footnote{Python code is published at \url{https://github.com/UT-Austin-Data-Science-Group/Mini-batch-OT}.} 

\subsection{Deep Domain Adaptation}
\label{subsec:expDA}

In the following experiments, we conduct deep domain adaptation on three datasets including digits, Office-Home, and VisDA. To evaluate the performance, we use the classification accuracy in the adapted domain of the best performing checkpoint. Details about datasets, architectures of neural networks, and hyper-parameters settings are given in Appendix~\ref{subsec:setting_DA}. We compare our method against other DA methods: DANN~\cite{ganin2016domain}, CDAN-E~\cite{long2018conditional}, m-OT (DeepJDOT)~\cite{damodaran2018deepjdot}, ALDA~\cite{chen2020adversarial}, ROT~\cite{balaji2020robust}, and m-UOT (JUMBOT)~\cite{fatras2021unbalanced}. The same pre-processing techniques as in m-UOT are applied. We run each experiment 3 times and report the average accuracy in Tables~\ref{table:DA_digits_summary}-\ref{table:DA_office_summary} while the detailed results for each run are provided in Tables~\ref{table:DA_digits_details}-\ref{table:DA_visda_details} in Appendix~\ref{subsec:addexp_DA}.

\begin{table*}[t]
    \caption{DA results in classification accuracy on the Office-Home dataset (higher is better). Experiments were run 3 times. Detailed results for each run are given in Table~\ref{table:DA_office_details} in Appendix~\ref{subsec:addexp_DA}. (*) denotes the result taken from JUMBOT's  paper~\citep{fatras2021unbalanced}. }
    \vskip 0.1in
    \label{table:DA_office_summary}
    \centering
    \scalebox{0.75}{
        \begin{tabular}{cccccccccccccc}
        \toprule
        Method & A2C & A2P & A2R & C2A & C2P & C2R & P2A & P2C & P2R & R2A & R2C & R2P & Avg \\
        \midrule
        RESNET-50 (*) & 34.90 & 50.00 & 58.00 & 37.40 & 41.90 & 46.20 & 38.50 & 31.20 & 60.40 & 53.90 & 41.20 & 59.90 & 46.1 \\
        DANN & 47.92 & 67.08 & 74.85 & 53.80 & 63.47 & 66.42 & 52.99 & 44.35 & 74.43 & 65.53 & 52.96 & 79.41 & 61.93 \\
        CDAN-E (*) & 52.50 & 71.40 & 76.10 & 59.70 & 69.90 & 71.50 & 58.70 & 50.30 & 77.50 & 70.50 & 57.90 & 83.50 & 66.60 \\
        ALDA & 54.04 & 74.89 & 77.14 & 61.37 & 70.62 & 72.75 & 60.32 & 51.03 & 76.66 & 67.90 & 55.94 & 81.87 & 67.04 \\
        ROT (*) & 47.20 & 71.80 & 76.40 & 58.60 & 68.10 & 70.20 & 56.50 & 45.00 & 75.80 & 69.40 & 52.10 & 80.60 & 64.30 \\
        m-OT & 51.75 & 70.01 & 75.79 & 59.60 & 66.46 & 70.07 & 57.60 & 47.88 & 75.29 & 66.82 & 55.71 & 78.11 & 64.59 \\
        m-UOT & 54.99 & 74.45 & 80.78 & 65.66 & \textbf{74.93} & 74.91 & 64.70 & 53.42 & 80.01 & 74.58 & 59.88 & 83.73 & 70.17 \\
        m-POT (Ours) & 55.65 & 73.80 & 80.76 & 66.34 & 74.88 & 76.16 & 64.46 & 53.38 & \textbf{80.60} & 74.55 & 59.71 & 83.81 & 70.34 \\
        TS-OT (Ours) & 53.89 & 71.01 & 77.13 & 59.82 & 69.20 & 71.95 & 59.18 & 51.17 & 76.54 & 66.46 & 56.97 & 80.19 & 66.13 \\
        TS-UOT (Ours) & 56.35 & 73.56 & 80.16 & 65.02 & 73.12 & 76.50 & 63.66 & 54.49 & 79.97 & 71.24 & 60.11 & 82.92 & 69.76 \\
        TS-POT (Ours) & \textbf{57.06} & \textbf{76.13} & \textbf{81.53} & \textbf{68.44} & 72.82 & \textbf{76.53} & \textbf{66.21} & \textbf{54.87} & 80.39 & \textbf{75.57} & \textbf{60.50} & \textbf{84.31} & \textbf{71.20} \\
        \bottomrule
        \end{tabular}
    }
    \vskip -0.2in
\end{table*}

\begin{table*}[t]
    \caption{PDA results in classification accuracy on the Office-Home dataset (higher is better). Experiments were run 3 times. Detailed results for each run are given in Table~\ref{table:PDA_office_details} in Appendix~\ref{subsec:addexp_PDA}. (*) denotes the result taken from JUMBOT's paper~\citep{fatras2021unbalanced}.}
    \vskip 0.1in
    \label{table:PDA_office_summary}
    \centering
    \scalebox{0.79}{
        \begin{tabular}{cccccccccccccc}
        \toprule
        Method & A2C & A2P & A2R & C2A & C2P & C2R & P2A & P2C & P2R & R2A & R2C & R2P & Avg \\
        \midrule
        RESNET-50 (*) & 46.30 & 67.50 & 75.90 & 59.10 & 59.90 & 62.70 & 58.20 & 41.80 & 74.90 & 67.40 & 48.20 & 74.20 & 61.40 \\
        PADA (*) & 51.90 &  67.00 & 78.70 & 52.20 & 53.80 & 59.00 & 52.60 & 43.20 & 78.80 & 73.70 & 56.60 & 77.10 & 62.10 \\
        ETN (*) & 59.20 & 77.00 & 79.50 & 62.90 & 65.70 & 75.00 & 68.30 & 55.40 & 84.40 & 75.70 & 57.70 & 84.50 & 70.40 \\
        BA3US & 59.34 & 78.73 & \textbf{88.42} & 72.70 & 72.34 & 83.54 & 73.19 & 60.20 & 85.92 & 79.13 & 63.00 & 85.90 & 75.20 \\
        m-OT & 48.00 & 65.99 & 77.47 & 59.23 & 57.85 & 66.57 & 58.43 & 45.25 & 74.10 & 68.08 & 49.89 & 74.25 & 62.09 \\
        m-UOT & 61.53 & 80.34 & 85.33 & 75.60 & 72.89 & 79.79 & 74.56 & 61.95 & 86.49 & 80.78 & 67.38 & 84.89 & 75.96 \\
        m-POT (Ours) & \textbf{64.60} & \textbf{80.62} & 87.17 & \textbf{76.43} & \textbf{77.61} & \textbf{83.58} & \textbf{77.07} & \textbf{63.74} & \textbf{87.63} & \textbf{81.42} & \textbf{68.50} & \textbf{87.38} & \textbf{77.98} \\
        \bottomrule
        \end{tabular}
    }
    \vskip -0.2in
\end{table*}

% \begin{figure*}[t]
%     \centering
%     \begin{subfigure}
%         \centering
%         \includegraphics[width=0.6\columnwidth]{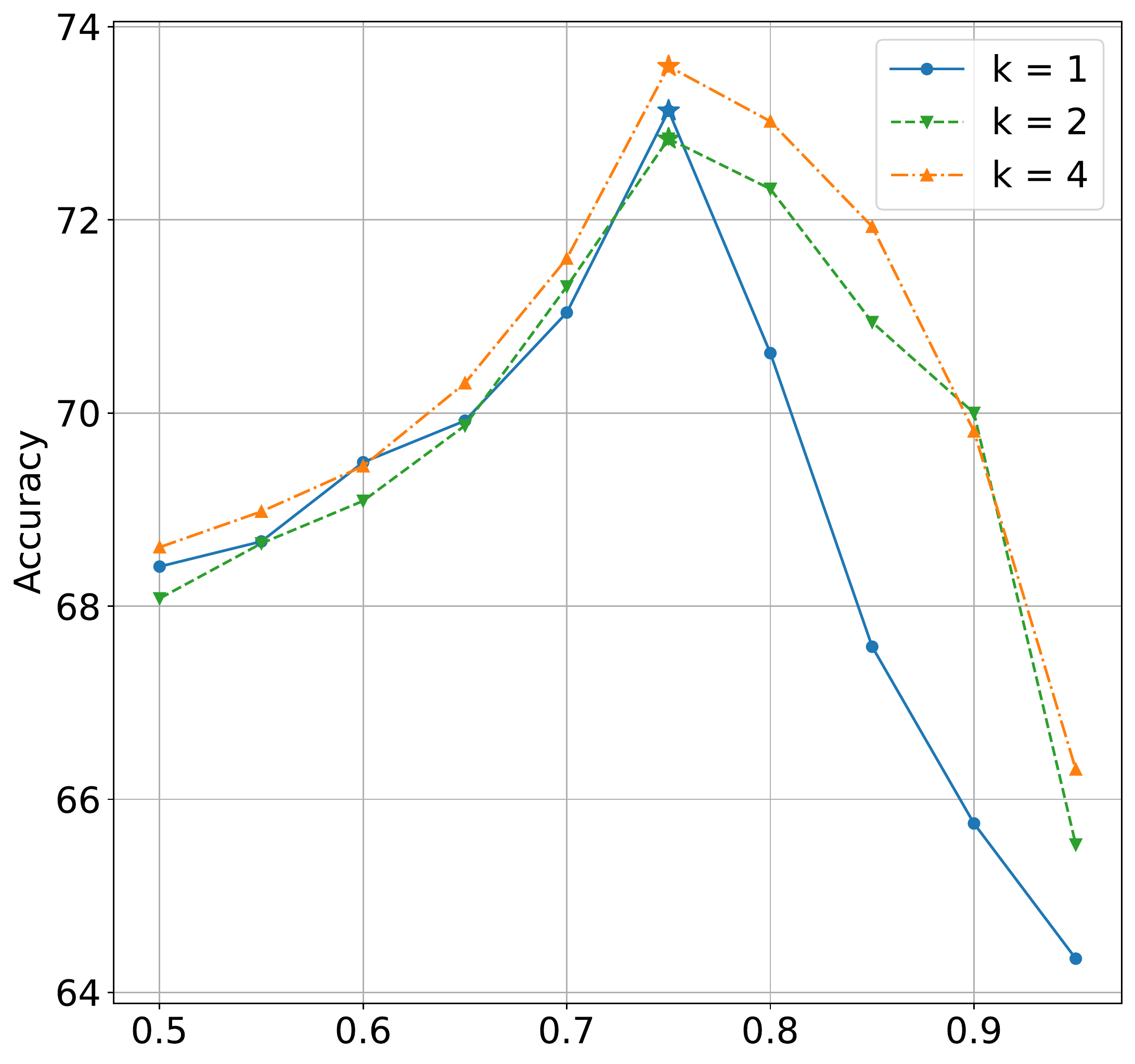}
%         \label{fig:VisDA_change_mass}
%     \end{subfigure}
%     \hfill
%     \begin{subfigure}
%         \centering
%         \includegraphics[width=0.6\columnwidth]{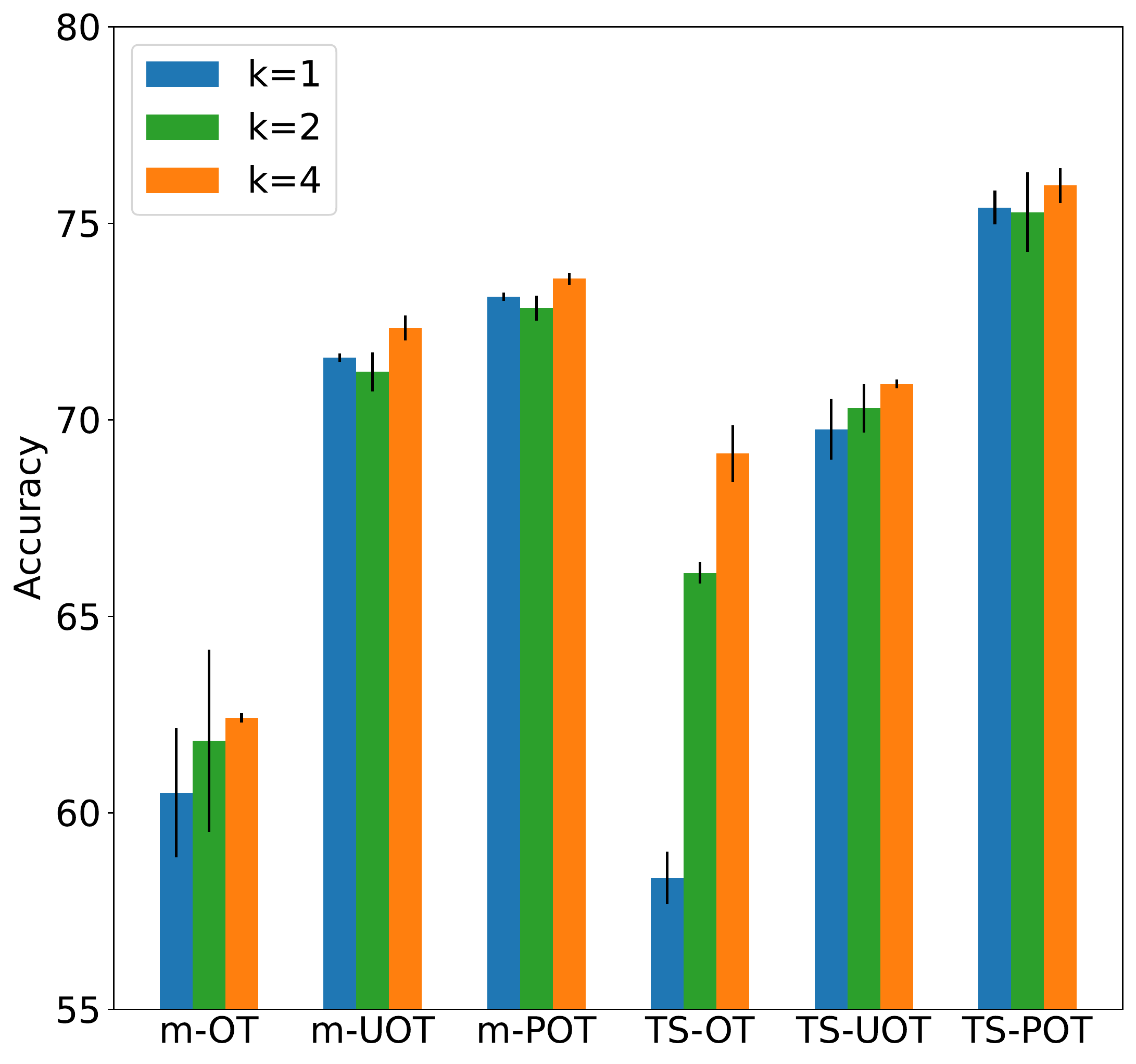}
%         \label{fig:VisDA_change_k}
%     \end{subfigure}
%     \hfill
%     \begin{subfigure}
%         \centering
%         \includegraphics[width=0.6\columnwidth]{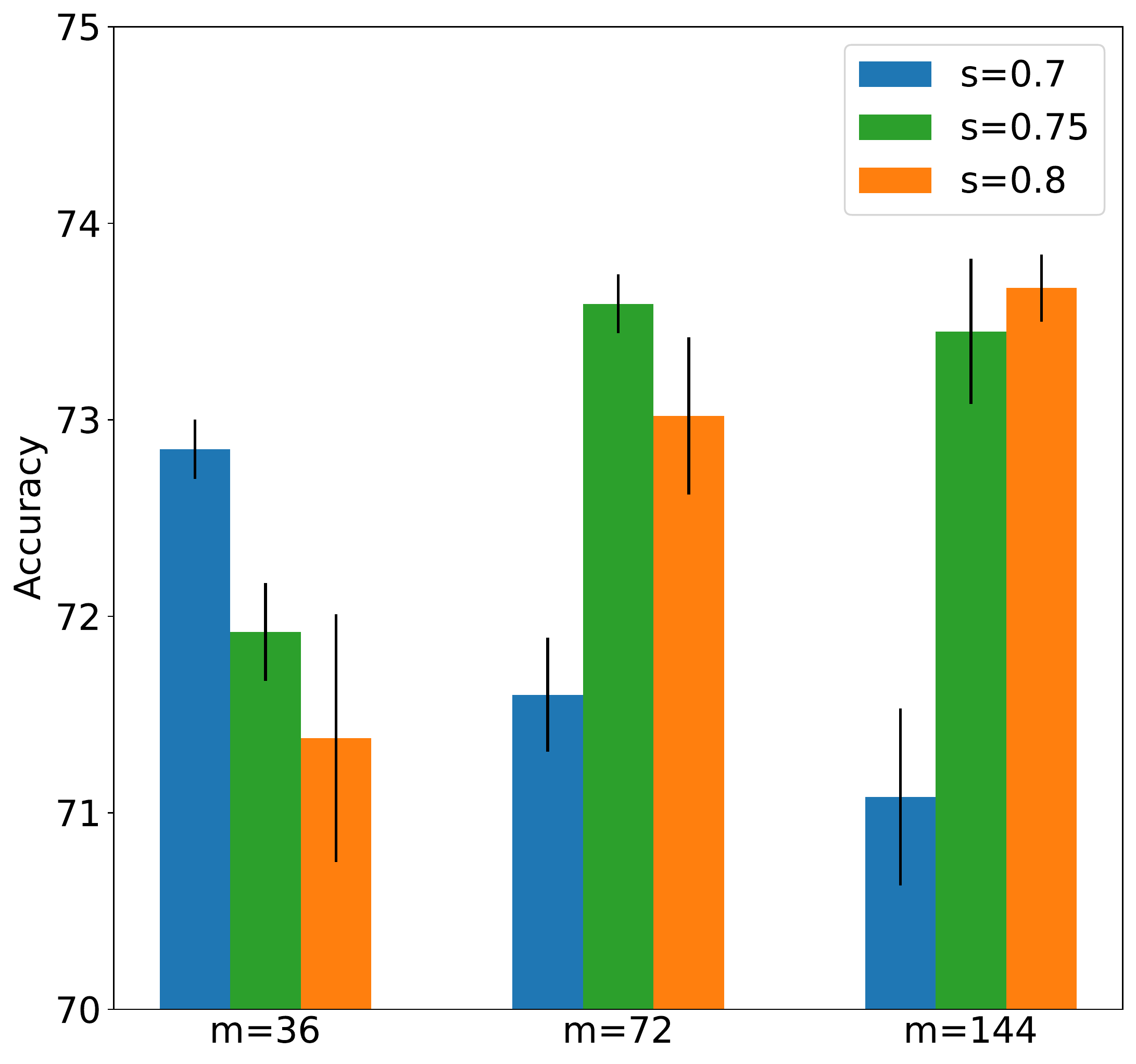}
%         \label{fig:VisDA_change_m}
%     \end{subfigure}
%     \caption{Ablation studies on the VisDA dataset. (Left) The effect of the number of mini-batches $k$ on the choice of the fraction of masses $s$ for m-POT. (Middle) The effect of the number of mini-batches $k$ on the performance of mini-batch methods. (Right) The effect of mini-batch size $m$ on the choice of the fraction of masses $s$ for m-POT.}
%     \label{fig:ablation_studies_VisDA}
% \end{figure*}

\textbf{Classification results:} According to Tables~\ref{table:DA_digits_summary}-\ref{table:DA_office_summary}, m-POT is substantially better than m-OT on all datasets. In greater detail, m-POT results in significant increases of $5.28, 5.75,$ and $11.17$ on digits, Office-home, and VisDA datasets, respectively. This phenomenon is expected since m-POT can mitigate the issue of misspecified matchings while m-OT cannot. 
% For two baselines DANN and ALDA, m-POT also produces more favorable performance in all experiments.
Compared to m-UOT, m-POT yields slightly higher performance, namely, m-POT gives 0.13 higher accuracy (average) on digits datasets and 0.17 higher accuracy (average) on the Office-Home dataset. For the VisDA dataset, m-POT beats m-UOT by a safe margin of 1.25.   

\textbf{Performance of baselines: } We reproduce some baseline methods and the rest are taken from ~\citep[Section~5.2]{fatras2021unbalanced}. Although our results do not match their numbers, we still manage to have some better results than those in the original papers. For digits datasets, both m-OT and m-UOT achieve higher accuracy on the adaptation from USPS to MNIST. On the Office-Home dataset, m-OT has higher accuracies in 12 out of 12 scenarios while ALDA and m-UOT lead to better performance in 8 and 7 scenarios.

\textbf{TSNE embedding: } In Figure~\ref{fig:tsne_digits} in Appendix~\ref{subsec:addexp_DA}, we visualize the TSNE embedding on the latent space of three mini-batch methods on the SVHN to MNIST adaptation task. It can be seen clearly from Figure~\ref{fig:tsne_digits} that many classes overlap each other in the representation of m-OT. For m-UOT, it also suffers from overlapping clusters though its problem is less severe than that of m-OT. The visualization of m-POT shows that it is more discriminative than the other two methods, leading to the highest accuracy of approximately $99\%$ on the task from SVHN to MNIST.

\textbf{The role of $s$: } We demonstrate the effect of changing the value $s$ on the classification accuracy of m-POT. In this experiment, the number of mini-batches $k$ is set to 1 for all datasets. We observe a general pattern in all DA experiments as follows. When $s$ comes closer to the optimal mass, the accuracy increases then drops as s moves away. As discussed in Section~\ref{sec:mPOT}, the right value of $s$ is important in making m-POT perform well. In Figure~\ref{fig:DA_s}, we show the accuracy of different values of $s$ on all datasets. From that figure, the best value of $s$ is the value that is not too big and not too small. The reason is that a large value of $s$ creates more misspecified matchings while a small value of $s$ might drop off too many mappings. The latter leads to a lazy gradient, thus slowing the learning process.

% \textbf{Computational speed: } We report the speed of m-OT, m-UOT, and m-POT on digits datasets in Appendix~\ref{subsec:time}. m-OT is the fastest method, followed by m-UOT while m-POT is generally the slowest one as expected from our analysis in Section~\ref{subsec:mPOT}.

\textbf{The two-stage implementation: } The conventional implementation utilizes mini-batch methods on the GPU level, namely, optimal mappings are solved on GPU's memory. However, GPU's memory is also used to store deep neural nets and computational graphs which are memory-consuming. Therefore, the scale of the OT problem is limited. By the observation that the CPU's memory (RAM) is normally much bigger than GPU's memory, utilizing mini-batch methods on the CPU level can have a much bigger size of mini-batches that improves the quality of the sample alignment. After having the sample mapping, smaller mini-batches can be used for a fast differentiable loss computation on GPU. The pseudo computational graph is given in Figure~\ref{fig:twostageDA} and the detailed description and algorithms are given in Appendix~\ref{subsec: mini-batchDA}. The term TS-OT, TS-UOT, and TS-POT is utilized to refer to the two-stage version of m-OT, m-UOT, and m-POT, respectively. To show the effectiveness of the two-stage implementation, we vary the number of mini-batches $k \in \{1, 2, 4\}$ and compare its performance with the conventional implementation. Because m-POT already has a high accuracy ($\sim 98\%$) and a large mini-batch size ($m = 500$) on digits datasets, we only compare m-POT and TS-POT on Office-Home and VisDA datasets. Our focus here is not to obtain state-of-the-art performance, but rather to compare between the new and old implementation on the deep DA. Therefore, we do not fine-tune the fraction of masses for TS-POT and simply adapt the same value of $s$ from m-POT. Similarly, the same set of hyperparameters of m-UOT is applied to TS-UOT. Detailed experiments and comparisons can be found in Tables~\ref{table:DA_efficient_office_detail} and~\ref{table:DA_efficient_visda_detail} in Appendix~\ref{subsec:addexp_DA}. 
% Generally, increasing the number of mini-batches does improve the classification accuracy of all methods except m-UOT. In addition, the improvement of the two-stage implementation is more substantial than that of the conventional implementation. 
According to Tables~\ref{table:DA_visda_summary} and~\ref{table:DA_office_summary}, the two-stage implementation also consistently yields better performance on both Office-Home and VisDA datasets than the conventional one when the transportation type is either OT or POT. On the VisDA dataset, for example, the two-stage implementation shows an increase of 6.72 and 2.67 in classification accuracy of m-OT and m-POT, respectively. Furthermore, when combined with m-POT, TS-POT achieves the best accuracy on both datasets, resulting in an increase of at least 0.86 and 2.67 over competitors on Office-Home and VisDA datasets, respectively. On the other hand, the two-stage implementation does not work well with UOT, i.e., the performance of TS-UOT does not surpass m-UOT because its transportation plan is not as sparse as that of OT or POT.
% Note that our two-stage implementation is originally designed to boost the performance of m-OT and m-POT only.

\subsection{Partial Domain Adaptation}
\label{subsec:expPDA}
In this section, we compare three mini-batch methods on the partial domain adaptation (PDA) application which is a challenging variant of the deep DA because the source and target domains do not share the same label space. Following the setting in BA3US~\citep{liang2020balanced}, we select the first 25 categories (in alphabetic order) of the Office-Home dataset as the partial target domain. The neural network architecture and training procedure are similar to the deep domain adaptation on the Office-Home dataset. In this experiment, we only use one mini-batch and the entropic regularization coefficient of m-POT is set to a much larger value than that on the deep DA. Therefore, the two-stage implementation of the PDA will be left for future works. Further settings can be found in Appendix~\ref{subsec:setting_PDA}. For baselines, we compare against other PDA methods: PADA~\cite{cao2018partial}, ETN~\cite{cao2019learning}, BA3US~\cite{liang2020balanced}, m-OT (DeepJDOT)~\cite{damodaran2018deepjdot}, and m-UOT (JUMBOT)~\cite{fatras2021unbalanced}. It can be seen clearly from Table~\ref{table:PDA_office_summary} that m-POT yields the highest performance on 11 out of 12 tasks, leading to an average improvement of 2.02 over competitors. In addition, m-POT outperforms m-OT with a huge margin of 15.89. This again strengthens the advantage of our proposed mini-batch scheme over the conventional m-OT. It is worth mentioning that on the PDA application, m-UOT introduces an additional hyperparameter to control the scale of the cost matrix while this hyperparameter can be removed for m-OT and m-POT.

\section{Discussion}
\label{sec:Discussion}

In this paper, we have introduced a novel mini-batch approach that is referred to as mini-batch partial optimal transport (m-POT). The new mini-batch approach is motivated by the issue of misspecified mappings in the conventional mini-batch optimal transport approach (m-OT). Via extensive experiment studies, we demonstrate that m-POT can perform better than current mini-batch methods including m-OT and m-UOT in domain adaptation applications. Furthermore, we propose the two-stage training approach for the deep DA that outperforms the conventional implementation. In other applications, including deep generative model, color transfer, and gradient flow, m-POT also show consistently favorable performance compared to m-OT and m-UOT. There are a few natural future directions arising from our work: (i) first, we will develop efficient algorithms to choose the fraction of masses $s$ of m-POT adaptively; (ii) second, we would like to explore further the dependence structure between mini-batches \cite{nguyen2021transportation}.
% \section{Acknowledgements}
% \label{sec:acknowledge}
% This work was partially supported by the Institute for Foundations of Machine Learning (IFML) grant to NH.
\clearpage
% \textbf{Reproducibility Statement: }We provide our source codes for all experiments in the paper in the supplementary of the paper. We also provide careful instruction for running our codes in an independent "Readme" file. The details of experimental settings, computational infrastructure, and other used public libraries are given in Appendix~\ref{sec:setting}. All datasets in the paper are public, hence, they are easy to be downloaded.

% \textbf{Ethics Statement: } Given the methodological nature of the work, we do not foresee any negative societal and ethical impacts of the work. 

\bibliography{example_paper}
\bibliographystyle{icml2022}
%%%%%%%%%%%%%%%%%%%%%%%%%%%%%%%%%%%%%%%%%%%%%%%%%%%%%%%%%%%%
\clearpage
\appendix
\onecolumn
\begin{center}
{\bf{\LARGE{Supplement to ``Improving Mini-batch Optimal Transport via Partial Transportation"}}}
\end{center}

In this supplement, we first derive concentration bounds for m-POT's value and m-POT's transportation plan in Appendix~\ref{sec:concentration}. Next, we discuss applications of OT that have been used in mini-batch fashion and their corresponding algorithms in Appendix~\ref{sec:mini-batchapps}. In Appendix~\ref{sec:add_exp}, we provide additional experimental results. In particular, we demonstrate the illustration of transportation plans from m-OT, m-UOT, and m-POT in Appendix~\ref{subsec:visual} and Appendix~\ref{subsec:matrix}. Moreover, we present detailed results on deep domain adaptation in Appendix~\ref{subsec:addexp_DA}, partial domain adaptation in Appendix~\ref{subsec:addexp_PDA}, deep generative model in Appendix~\ref{subsec:exp_dgm}. Furthermore, we conduct experiments on color transfer application and  gradient flow application to show the favorable performance of m-POT in Appendix~\ref{subsec:color_transfer}
and Appendix~\ref{subsec:gflow} respectively. The comparison between m-POT and m-OT (m-UOT) with one additional sample in a mini-batch is given in Appendix~\ref{subsec:equi_OT}. The computational time of mini-batch methods in applications is reported in Appendix~\ref{subsec:time}. Finally, we report the experimental settings including neural network architectures, hyper-parameter choices in Appendix~\ref{sec:setting}.

\section{Concentration of the m-POT}
\label{sec:concentration}
In this appendix, we provide concentration bounds for the m-POT's value and m-POT's transportation plan. To ease the presentation, we first define the \emph{full mini-batch} version of the m-POT's value and m-POT's transportation plan, namely, when we draw all the possible mini-batches from the set of all $m$ elements of the data. We denote by $\binom{X^n}{m}$ and $\binom{Y^n}{m}$ the set of all $m$ elements of $\{X_{1}, X_{2}, \ldots, X_{n}\}$ and $\{Y_{1}, Y_{2}, \ldots, Y_{n}\}$ respectively. We define $\binom{X^n}{m} = \{X_{(1)}^{m}, X_{(2)}^{m}, \ldots, X_{(S)}^{m}\}$ and $\binom{Y^n}{m} = \{Y_{(1)}^{m}, Y_{(2)}^{m}, \ldots, Y_{(S)}^{m}\}$ where $S : = \binom{n}{m}$.
\begin{definition}
\label{def:full_bactch_mPOT}
(Full Mini-batch Partial Optimal Transport) For $1 \leq m\leq n$, $0 < s \leq 1$, the full mini-batch POT transportation cost and transportation plan between $\mu_n$ and $\nu_n$ are defined as follow:
\begin{align}
    \text{m-POT}^{s}(\mu_n,\nu_n) &= \frac{1}{S^2} \sum_{i = 1}^{S} \sum_{j=1}^{S} POT_s(P_{X_{(i)}^m},P_{Y_{(j)}^m}); \nonumber \\
    \pi^{\text{m-POT}^s} &= \frac{1}{S^2} \sum_{i=1}^{S} \sum_{j = 1}^{S} \pi^{POT_s}_{P_{X_{(i)}^m},P_{Y_{(j)}^m}},
\end{align}
where $\pi^{POT_s}_{P_{X_{(i)}^m},P_{Y_{(j)}^m}}$ is a transportation matrix that is returned by solving $POT_s(P_{X_{(i)}^m},P_{Y_{(j)}^m})$. Note that $\pi^{POT_s}_{P_{X_{(i)}^m},P_{Y_{(j)}^m}}$ is expanded to a $n \times n$ matrix that has padded zero entries to indices which are different from of $X_{(i)}^m$ and $Y_{(j)}^m$.
\end{definition}
For the purpose of our theory, we also have the following definition for sub-exponential random variable.
\begin{definition}(Sub-exponential Random Variable)
\label{def:supexponent}
A random variable $X$ with mean $\mu = \mathbb{E}(X)$ is sub-exponential $\text{SE}(v^2,\gamma)$ where $v, \gamma$ are non-negative parameters if the following holds:
\begin{align*}
    \mathbb{E}[\exp(\lambda(X - \mu)] \leq \exp(v^2 \lambda^2/ 2)
\end{align*}
for all $|\lambda| < 1/ \alpha$.
\end{definition}
A simple example of sub-exponential random variable is chi-square random variable, which is $\text{SE}(v^2,\gamma)$ with $v = 2$ and $\gamma = 4$.
\subsection{Concentration of the m-POT's value}
\label{sec:concentration_value}
We provide the proof for Theorem~\ref{theorem:concentration_bound} using some results from~\cite{de2012decoupling}. We define the population mini-batch partial optimal transport:
\begin{definition}
Assume that $\mu$ and $\nu$ are two probability measures on $\mathcal{P}_p (\mathcal{X})$ for given positive integer $m\geq 1$ Then, the population mini-batch partial OT (m-POT) discrepancy between $\mu$ and $\nu$ is defined as follows:
\label{def:popmPOT}
\begin{align}
    \text{m-POT}^{s}(\mu,\nu) : = \mathbb{E}_{X \sim \mu^{\otimes m}, Y \sim \nu^{\otimes m}} \left[\text{POT}_{s}(P_{X^{m}}, P_{Y^{m}}) \right].
\end{align}
\end{definition}

% Recall that in the main text, we define 

Here we assume that matrix $C$ is a metric that means $C_{ij} = d(X_i,Y_j)$, and the distance $d(X_i,\mathbb{E}X_i)$ and $d(Y_j,\mathbb{E}Y_j)$ have sub-exponential distributions $SE(v^2,\gamma)$. 

\begin{proof}[Proof of Theorem~\ref{theorem:concentration_bound}]
An application of triangle inequality leads to
\begin{align}
    \big|\text{m-POT}_{k}^{s}(\mu_{n},\nu_{n}) - \text{m-POT}^{s}(\mu,\nu)\big| &\leq \nonumber  \big|\text{m-POT}_{k}^{s}(\mu_{n},\nu_{n}) - \text{m-POT}^{s}(\mu_{n},\nu_{n})\big|  \nonumber + \big|\text{m-POT}^{s}(\mu_{n},\nu_{n}) - \text{m-POT}^{s}(\mu,\nu)\big| \nonumber \\
    &= T_{1} + T_{2}. \label{eq:key_inequality_concentration_bound}
\end{align}
To deal with $T_1$ and $T_2$, we first find an upper bound for $\|C\|_{\max}$ and $\text{POT}_s(X_i^m,Y_j^m)$. Let's denote $\mathbb{E}X_i =\mu_X$ and $\mathbb{E}Y_j = \mu_Y$. By the triangle's inequality, we have
\begin{align*}
   d(X_i,Y_j) \leq d(X_i, \mu_X) + d(\mu_X, \mu_Y) + d(Y_j,\mu_Y).
\end{align*}
It means that
\begin{align*}
    \|C\|_{\max} \leq \max_{1\leq i\leq n} d(X_i,\mu_X) + d(\mu_X, \mu_Y) + \max_{1\leq j\leq n} d(Y_j,\mu_Y).
\end{align*}
The second term is a constant. For the first and third term, for $t > \frac{v^2}{\gamma}$, we have
\begin{align*}
    \mathbb{P}\big(\max_{1\leq i \leq n} d(X_i,\mu_X) \geq t \big) \leq \sum_{i=1}^n \mathbb{P}\big(d(X_i,\mu_X)\geq t \big) \leq n \mathbb{P}\big( d(X_i,\mu_X) \geq t \big)\leq 2n \exp\big(\frac{-t}{2\gamma}\big).
\end{align*}
Let's choose $t_2^* = \max\Big\{ 2\gamma \big[\log(2n) + \log(\frac{8}{\delta})\big], \frac{v^2}{\gamma} \Big\}$, then
\begin{align*}
    \mathbb{P}\big(\max_{1\leq i \leq n} d(X_i,\mu_X) \geq t_2^* \big) \leq \frac{\delta}{8}\\
    \mathbb{P}\big(\max_{1\leq i \leq n} d(Y_j,\mu_Y) \geq t_2^* \big) \leq \frac{\delta}{8}.
\end{align*}
It deduce that  
\begin{align}\label{ineq:up-bound-C}
 \|C\|_{\max} \leq \max_{1\leq i\leq n} d(X_i,\mu_X) + d(\mu_X, \mu_Y) + \max_{1\leq j\leq n} d(Y_j,\mu_Y) \leq d(\mu_X,\mu_Y) + 2t_2^*  
\end{align}
with probability at least $1- \frac{\delta}{4}$. Recall that 
\begin{align*}
    \text{POT}_s(P_{X^m_{(i)}}, P_{Y^m_{(j)}}) =\min_{\pi \in \Pi(\overline{\alpha},\overline{\alpha})} \big\langle \overline{C},\pi\big\rangle
\end{align*}
where $\overline{\alpha} = [\mathbf{u}_m,1-s]$ and $\overline{C} = \begin{pmatrix} C & \mathbf{0}_m \\ \mathbf{0}_m^{\top} & A \end{pmatrix}$. Define
\begin{align*}
    \overline{\pi} = \begin{pmatrix}s \mathbf{u}_{m\times m} & (1-s)\mathbf{u}_m \\
    (1-s)\mathbf{u}_m^{\top} & 0
    \end{pmatrix}
\end{align*}
where $\mathbf{u}_m = \frac{1}{m}\mathbf{1}_m$ and 
$\mathbf{u}_{m\times m} = \frac{1}{m^2} \mathbf{1}_{m\times m}$. Then 
$\overline{\pi}$ is the transport plan between  $\overline{\alpha}$ and  $\overline{\alpha}$. Denote $X^m_{(i)} = \{X_{i_1},\ldots,X_{i_m} \}$ and $Y^m_{(j)} = \{Y_{j_1},\ldots,Y_{j_m} \}$. It means that
\begin{align*}
    \text{POT}_s(P_{X^m_{(i)}},P_{Y^m_{(j)}}) &\leq \big\langle \overline{C},\overline{\pi}\big\rangle = \frac{1}{m^2}\sum_{\ell_1,\ell_2=1}^m s d(X_{i_{\ell_1}},Y_{j_{\ell_2}}) \\
    &\leq \frac{s}{m^2} \sum_{\ell_1,\ell_2=1}^m\Big\{d(X_{i_{\ell_1}},\mu_X) + d(\mu_X,\mu_Y) + d(\mu_Y,Y_{j_{\ell_2}}) \Big\}\\
    &=sd(\mu_X,\mu_Y) + \frac{s}{m} \sum_{\ell_1=1}^m d(X_{i_{\ell_1}},\mu_X) + \frac{s}{m}\sum_{\ell_2=1}^m d(Y_{j_{\ell_2}},\mu_Y) \\
    &\leq s \Big\{d(\mu_X,\mu_Y) + \max_{1\leq \ell_1\leq n} d(X_{i_{\ell_1}},\mu_X) + \max_{1\leq \ell_2 \leq n}d(Y_{j_{\ell_2}},\mu_Y) \Big\}.
\end{align*}
Combine it with \eqref{ineq:up-bound-C}, we obtain
\begin{align*}
    \max_{1\leq i,j\leq n}\text{POT}_s(P_{X^m_{(i)}},P_{Y^m_{(j)}}) \leq  s\big(d(\mu_X,\mu_Y) + 2t_2^*\big):=D_n
\end{align*}
with probability at least $1 - \frac{\delta}{4}$. Define
\begin{align*}
    E_n = \Big\{\max_{1\leq i,j\leq n}\text{POT}_s(P_{X^m_{(i)}},P_{Y^m_{(j)}}) \leq  s\big(d(\mu_X,\mu_Y) + 2t_2^*\big) \Big\}.
\end{align*}
Then $\mathbb{P}(E_n) \geq 1 - \frac{\delta}{4}$.

\textbf{Bound for $T_{2}$:} We consider the product space of $(X,Y)$. For $n$ samples $Z_i=(X_i, Y_i)$ for $1\leq i\leq n$. The U-statistics for $\text{POT}_s$ is defined as
\begin{align*}
   U_n(\text{POT}) = \frac{(n-m)!}{n!} \sum_{(i_1,\ldots,i_m)\in {\binom{n}{m}}}POT_s(Z_{i_1},\ldots,Z_{i_m} ).
\end{align*}
A reference for Hoeffding's inequality for U-statistics could be found in page 165 of ~\cite{de2012decoupling} where  $U_n(h)$ is defined in page 97 of ~\cite{de2012decoupling}.
We apply the Hoeffding's inequality for U-statistics for $POT_s$ conditioned on the event $E_n$ to obtain
\begin{align*}
    \mathbb{P}\left(T_2 \geq t |E_n\right) \leq \exp\left\{ - \frac{\lfloor n/m \rfloor t^2}{8D_n^2} \right\}.
\end{align*}
It follows that
\begin{align*}
    \mathbb{P}\left( T_2 \geq D_n \sqrt{\frac{8\log(4/\delta)}{\lfloor n/m \rfloor }}  \bigg| E_n \right) \leq \frac{\delta}{4}.
\end{align*}
Denote $\Big\{T_2 \leq D_n \sqrt{\frac{8\log(4/\delta)}{\lfloor n/m \rfloor }} \Big\} = F_{n,2}$.
We deduce that
\begin{align*}
    \mathbb{P}(F_{n,2}) &\geq \mathbb{P}(F_{n,2}\cap E_n) = \mathbb{P}(F_{n,2}|E_n) \mathbb{P}(E_n) \geq \Big(1- \frac{\delta}{4} \Big) \Big(1 -\frac{\delta}{4} \Big)\geq 1 - \frac{2\delta}{4}.
\end{align*}

%\begin{align*}
%    \mathbb{P} \left(T_{2} \geq B \sqrt{\frac{\log(2/ \delta)}{2 \floor{\frac{n}{m}}}} \right) \leq \delta,
%\end{align*}
%where $B = \max_{1 \leq i, j \leq S} \text{POT}_s(P_{X_{(i)}^m},P_{Y_{(j)}^m})$. We now prove that for any $X^{m}$ and $Y^{m}$, the $\text{POT}_{s}(P_{X^{m}}, P_{Y^{m}})$ is always bounded by a constant with high probability. In fact, from the equivalence between POT and OT, we can rewrite $\text{POT}_{s}(P_{X^{m}}, P_{Y^{m}})$ as follows:
%\begin{align*}
%    \text{POT}_{s}(P_{X^{m}}, P_{Y^{m}}) = \min_{\pi \in \Pi(\bar{\alpha}, \bar{\alpha})} \langle \bar{C}, \pi \rangle,
%\end{align*}
%where $\bar{\alpha} = [\mathbf{u}_{m}, 1 - s]$ and $\bar{C}=\left[\begin{array}{cc}
%C & 0 \\
%0 & A
%\end{array}\right]$. Here, we choose $A = \|C\|_{\infty} + 1$ and $C$ is the cost matrix of the differences of the supports of $P_{X^{m}}$ and $P_{Y^{m}}$. From here, we have
%\begin{align*}
%    \min_{\pi \in \Pi(\bar{\alpha}_{i}, \bar{\alpha}_{i})} \langle \bar{C}, \pi \rangle_{F} \leq \langle \bar{C}, \bar{\alpha} \bar{\alpha}^{\top} \rangle_{F} \leq (\|C\|_{\infty} + 1) (2 - s)^2.
%\end{align*}
%Since $\|C\|_{\infty} < \infty$ with high probability, these bounds indicate that we can find universal constant $C$ such that $\text{POT}_{s}(P_{X^{m}}, P_{Y^{m}}) \leq C$ with high probability. Therefore, we have $B \leq C$ and it indicates that
%\begin{align}
 %   \mathbb{P} \left(T_{2} \geq C \sqrt{\frac{\log(2/ \delta)}{2 \floor{\frac{n}{m}}}} \right) \leq \delta. \label{eq:key_inequality_concentration_bound_first}
%\end{align}
\textbf{Bound for $T_{1}$:} Direct calculation shows that
\begin{align*}
    T_{1} = \left|\frac{1}{k} \sum_{i = 1}^{k} \sum_{1 \leq j,  l \leq S} \left(1_{\{(X_{i}^{m}, Y_{i}^{m}) \equiv (X_{(j)}^{m}, Y_{(l)}^{m})\}} - \frac{1}{S^2} \right) \text{POT}(X_{(j)}^{m}, Y_{(l)}^{m})\right|
\end{align*}
We denote $Z_{i} = \sum_{1 \leq j,  l \leq S} \left(1_{\{(X_{i}^{m}, Y_{i}^{m}) \equiv (X_{(j)}^{m}, Y_{(l)}^{m})\}} - \frac{1}{S^2} \right) \text{POT}(X_{(j)}^{m}, Y_{(l)}^{m})$ for $1 \leq i \leq k$. Then, given the data $X_{1}, \ldots, X_{n}$ and $Y_{1}, \ldots, Y_{n}$, it is clear that $Z_{1}, Z_{2}, \ldots, Z_{k}$ are independent variables bounded by $D_n$ with probability at least $1- \frac{\delta}{4}$. Apply Hoeffding's inequality for $Z_1,\ldots,Z_k$, conditioned on the event $E_n$,  we get
\begin{align*}
    \mathbb{P}\left( T_1 \geq t\big|E_n\right) \leq \exp\left\{-\frac{kt^2}{2D_n^2} \right\}.
\end{align*}
It means that 
\begin{align*}
    \mathbb{P}\left(T_1 \geq D_n\sqrt{\frac{2\log(4/\delta)}{k}} \bigg|E_n \right) \leq \frac{\delta}{4}.
\end{align*}
Denote $F_{n,1} = \Big\{T_1 \leq D_n\sqrt{\frac{2\log(4/\delta)}{k}}  \Big\} $, then $\mathbb{P}(F_{n,2})\geq 1 - \frac{\delta}{4}$. Using similar argument as the part of $F_{n,2}$, we get
\begin{align*}
    \mathbb{P}(F_{n,2})\geq 1 - \frac{2\delta}{4}.
\end{align*}
Together with the result $\mathbb{P}(F_{n,1})\geq 1 - \frac{2\delta}{4}$, we obtain
\begin{align*}
    \mathbb{P}(F_{n,1}\cap F_{n,2}) \geq 1 - \delta.
\end{align*}
We obtain the conclusion of the theorem. 

% and bounded above by $2 C$. A direct application of Hoeffding's inequality shows that
%\begin{align}
 %   \mathbb{P} \left(T_{1} \geq C \sqrt{\frac{\log(2/\delta)}{k}} \right) \leq \delta. \label{eq:key_inequality_concentration_bound_second}
%\end{align}
%Plugging the results in equations~(\ref{eq:key_inequality_concentration_bound_first}) and~(\ref{eq:key_inequality_concentration_bound_second}) into equation~(\ref{eq:key_inequality_concentration_bound}), we obtain the conclusion of the theorem.
\end{proof}
\subsection{Concentration of the m-POT's transportation plan}
\label{sec:concentration_plan}
In this appendix, we study the concentration of the m-POT's transportation plan. In the following theorem, we demonstrate that the row/ column sum of the m-POT's transportation plan concentrates around the row/ column sum of the full m-POT's transportation plan in Definition~\ref{def:full_bactch_mPOT}. 
\begin{theorem}
\label{theorem:concentration_plan}
For any given number of batches $k \geq 1$ and minibatch size $1 \leq m \leq n$, there exists universal constant $C$ such that with probability $1 - \delta$ 
we have

\begin{align}
    \mathbb{P} \Bigg{(} &\left|\pi^{\text{m-POT}_{k}^s}\mathbf{1}_{n} - \pi^{\text{m-POT}^s}\mathbf{1}_{n}\right| \geq C \sqrt{\frac{2\log(2/\delta)}{k}} \Bigg{)}  \leq \delta, \label{eq:concentration_plan_first} \\
    \mathbb{P} \Bigg{(} &\left|\left(\pi^{\text{m-POT}_{k}^s}\right)^{\top}\mathbf{1}_{n} - \left(\pi^{\text{m-POT}^s}\right)^{\top} \mathbf{1}_{n}\right| \geq C \sqrt{\frac{2\log(2/\delta)}{k}} \Bigg{)}  \leq \delta, \label{eq:concentration_plan_second}
\end{align}

where $\mathbf{1}_{n}$ is the vector with all 1 values of its elements.
\end{theorem}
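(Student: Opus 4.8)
The plan is to condition on the data $\{X_i\}_{i=1}^n$ and $\{Y_j\}_{j=1}^n$ and to recognize that, with this conditioning, the row sum $\pi^{\text{m-POT}_k^s}\mathbf{1}_n$ is an average of $k$ independent random vectors whose common mean is exactly the full-batch row sum $\pi^{\text{m-POT}^s}\mathbf{1}_n$. Since the $k$ mini-batch pairs $(X_i^m,Y_i^m)$, $i=1,\ldots,k$, are drawn independently and uniformly from the $S^2$ possible pairs (with $S=\binom{n}{m}$), the vectors $V_i := \pi^{POT_s}_{P_{X_i^m},P_{Y_i^m}}\mathbf{1}_n$ are i.i.d.\ given the data, and their conditional expectation is $\frac{1}{S^2}\sum_{i,j}\pi^{POT_s}_{P_{X_{(i)}^m},P_{Y_{(j)}^m}}\mathbf{1}_n=\pi^{\text{m-POT}^s}\mathbf{1}_n$ by Definition~\ref{def:full_bactch_mPOT}. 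Thus $\pi^{\text{m-POT}_k^s}\mathbf{1}_n-\pi^{\text{m-POT}^s}\mathbf{1}_n=\frac{1}{k}\sum_{i=1}^k(V_i-\mathbb{E}V_i)$ is a centered empirical average, precisely mirroring the decomposition used for the term $T_1$ in the proof of Theorem~\ref{theorem:concentration_bound}.

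The key structural fact I would exploit is the boundedness of each coordinate of $V_i$. By the partial transportation constraint defining $\Pi_s(\boldsymbol{u}_m,\boldsymbol{u}_m)$, any feasible plan satisfies $\pi\mathbf{1}_m\leq\boldsymbol{u}_m=\frac{1}{m}\mathbf{1}_m$; hence, after zero-padding to an $n\times n$ matrix, every entry of $\pi^{POT_s}_{P_{X_i^m},P_{Y_i^m}}\mathbf{1}_n$ lies in $[0,\tfrac{1}{m}]\subseteq[0,1]$. Consequently, for each fixed coordinate $\ell\in\{1,\ldots,n\}$, the scalars $(V_i)_\ell$ are independent and range over an interval of length at most $1/m$. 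I would then apply the two-sided Hoeffding inequality to the centered average, obtaining
\begin{align*}
\mathbb{P}\left( \left| \left(\pi^{\text{m-POT}_k^s}\mathbf{1}_n\right)_\ell - \left(\pi^{\text{m-POT}^s}\mathbf{1}_n\right)_\ell \right| \geq t \,\Big|\, \text{data} \right) \leq 2\exp\left( -2k m^2 t^2 \right),
\end{align*}
and choosing $t=\frac{1}{m}\sqrt{\frac{\log(2/\delta)}{2k}}$ equates the right-hand side with $\delta$. Since $\frac{1}{m}\sqrt{\frac{\log(2/\delta)}{2k}}=\frac{1}{2m}\sqrt{\frac{2\log(2/\delta)}{k}}\leq C\sqrt{\frac{2\log(2/\delta)}{k}}$ for any universal $C\geq\tfrac12$ and all $m\geq1$, this yields \eqref{eq:concentration_plan_first}; as the conditional bound is uniform in the data, it also holds unconditionally. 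The column-sum claim \eqref{eq:concentration_plan_second} follows by the identical argument applied to $(\pi^{POT_s})^{\top}\mathbf{1}_n$, now invoking the second marginal constraint $\pi^{\top}\mathbf{1}_m\leq\boldsymbol{u}_m$.

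The step I expect to require the most care is interpreting the vector inequality in the statement and pinning down the universal constant. The Hoeffding estimate is naturally a per-coordinate statement, so I would read $|\cdot|$ coordinatewise; a union bound over all $n$ coordinates would instead introduce a $\log(2n/\delta)$ factor that does not match the stated $\log(2/\delta)$. The only genuine verification is that the boundedness range is independent of $m$, $n$, and $s$, which is secured by the marginal constraints of $\Pi_s$ forcing each row/column sum into $[0,1/m]$. Everything else is the standard conditional Hoeffding argument already instantiated for the scalar value in the proof of Theorem~\ref{theorem:concentration_bound}.
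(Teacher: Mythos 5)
Your proposal is correct and follows essentially the same route as the paper's proof: condition on the data, write $\pi^{\text{m-POT}_k^s}\mathbf{1}_n - \pi^{\text{m-POT}^s}\mathbf{1}_n$ as an average of $k$ independent, centered, bounded random vectors (the paper's $Z_i$ are exactly your $V_i - \mathbb{E}[V_i]$), and apply Hoeffding's inequality conditionally. The only difference is the range bound fed into Hoeffding: the paper bounds each $Z_i$ by $2s$ via the total transported mass, whereas you use the marginal constraint $\pi\mathbf{1}_m \leq \boldsymbol{u}_m$ to get the per-coordinate range $1/m$ --- this only changes the universal constant $C$, and if anything your coordinatewise reading handles the vector-valued inequality more explicitly than the paper does.
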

The proof of Theorem~\ref{theorem:concentration_plan} is in Appendix~\ref{sec:concentration_plan}. The results of Theorem~\ref{theorem:concentration_plan} indicate that m-POT has good concentration behaviors around its expectation and its full mini-batch version (see Definition~\ref{def:full_bactch_mPOT} in Appendix~\ref{sec:concentration}).

\begin{proof}[Proof of Theorem~\ref{theorem:concentration_plan}] Now, we provide proof for Theorem~\ref{theorem:concentration_plan}. It is sufficient to prove equation~(\ref{eq:concentration_plan_first}). Direct calculation shows that
\begin{align*}
    \pi^{\text{m-POT}_{k}^s}\mathbf{1}_{n} - \pi^{\text{m-POT}^s}\mathbf{1}_{n} = \frac{1}{k} \sum_{i = 1}^{k} \sum_{1 \leq j,  l \leq S} \left(1_{\{(X_{i}^{m}, Y_{i}^{m}) \equiv (X_{(j)}^{m}, Y_{(l)}^{m})\}} - \frac{1}{S^2} \right) \pi^{POT_s}_{P_{X_{(j)}^m},P_{Y_{(l)}^m}} \mathbf{1}_{n}.
\end{align*}
We define $Z_{i} = \sum_{1 \leq j,  l \leq S} \left(1_{\{(X_{i}^{m}, Y_{i}^{m}) \equiv (X_{(j)}^{m}, Y_{(l)}^{m})\}} - \frac{1}{S^2} \right) \pi^{POT_s}_{P_{X_{(j)}^m},P_{Y_{(l)}^m}} 1_{n}$ for $1 \leq i \leq k$. Conditioning on the data $X_{1}, X_{2}, \ldots, X_{n}$ and $Y_{1}, Y_{2}, \ldots, Y_{n}$, the random variables $Z_{1}, Z_{2}, \ldots, Z_{k}$ are independent and upper bounded by $2s$. Therefore, a direct application of Hoeffding's inequality shows that
\begin{align*}
    \mathbb{P} \left(\left|\pi^{\text{m-POT}_{k}^s}\mathbf{1}_{n} - \pi^{\text{m-POT}^s}\mathbf{1}_{n}\right| \geq 2s \sqrt{\frac{\log(2/\delta)}{k}} \right) \leq \delta.
\end{align*}
As a consequence, we obtain the conclusion of Theorem~\ref{theorem:concentration_plan}.
\end{proof}
 
\section{Applications to  Deep Unsupervised Domain Adaption, Deep Generative Model,  Color Transfer, and Gradient Flow}
\label{sec:mini-batchapps}
In this section, we first state two popular applications that benefit from using mini-batches, namely, deep domain adaption (including the two-stage implementation) and deep generative model in Appendix~\ref{subsec: mini-batchDA} and Appendix~\ref{subsec:mini-batchDGM}. We also include detailed algorithms for these two applications and the way that we evaluate them. Next, we review the mini-batch color transfer algorithm in Appendix~\ref{subsec:mini-batchcolortransfer}. Finally, we discuss the usage of mini-batch losses in gradient flow application in Appendix~\ref{subsec:mini-batchgradient}.

\subsection{Mini-batch Deep Domain Adaption}
\label{subsec: mini-batchDA}
We follow the setting of DeepJDOT~\cite{damodaran2018deepjdot} that is composed of two parts: an embedding function $G: \mathcal{X} \to \mathcal{Z}$ which maps data to the latent space; and a classifier $F: \mathcal{Z} \to \mathcal{Y}$  which maps the latent space to the label space in the target domain. The mini-batch version of DeepJDOT can be expressed as follow, for given the number of mini-batches $k$ and the size of mini-batches $m$, the goal is to minimize the following objective function:
\begin{align}
\label{eq:DAOT}
    \min_{G,F} \frac{1}{k}\sum_{i=1}^k L_{\text{OT}_i}; \hspace{1em} L_{\text{OT}_i} = \Bigg{(} \frac{1}{m} \sum_{j=1}^m L_s (y_{ij},F(G(s_{ij}))) + \min_{\pi \in \Pi(\boldsymbol{u}_m,\boldsymbol{u}_m)} \langle C_{S_i^m,Y_i^m,T_i^m}^{G,F},\pi \rangle \Bigg{)},
\end{align}
where $L_s$ is the source loss function, $S_1^m,\ldots,S_k^m$ are source mini-batches that are sampled with or without replacement from the source domain S, $Y_1^m,\ldots,Y_k^m$ are corresponding labels of $S_1^m,\ldots,S_k^m$, with $S_i^m = \{s_{i1},\ldots,s_{im}\}$ and $Y_i^m:=\{y_{i1},\ldots,y_{im}\}$. Similarly, $T_1^m,\ldots,T_k^m$ ($T_i^m:=\{t_{i1},\ldots,t_{im}\}$) are target mini-batches that are sampled with or without replacement from the target domain $\mathcal{T}$. The cost matrix $C_{S_i^m,Y_i^m,T_i^m}^{G,F}$ is defined as follows:
\begin{align}
\label{eq:cost_matrix}
    C_{1 \leq j,z \leq m} &= \alpha ||G(s_{ij}) - G(t_{iz})||^2 + \lambda_t L_t(y_{ij},F(G(t_{iz}))),
\end{align}
where $L_t$ is the target loss function, $\alpha$ and $\lambda_t$ are hyper-parameters that control two terms.

\textbf{JUMBOT:} By replacing OT with UOT in DeepJDOT, authors in \cite{fatras2021unbalanced} introduce JUMBOT (joint unbalanced mini-batch optimal transport) which improves domain adaptation results on various datasets. Therefore, the objective function in equation~(\ref{eq:DAOT}) turns into:
\begin{align}
    &\min_{G,F} \frac{1}{k}\sum_{i=1}^k L_{\text{UOT}_i};  \nonumber\\
    & L_{\text{UOT}_i}=\Bigg{(} \frac{1}{m} \sum_{j=1}^m L_s (y_{ij},F(G(s_{ij}))) + \min_{\pi \in \mathbb{R}_+^{m\times m}}[ \langle C_{S_i^m,Y_i^m,T_i^m}^{G,F},\pi \rangle  + \tau \left( \text{D}_\phi (\pi_{1},\boldsymbol{u}_m) + \text{D}_\phi (\pi_{2},\boldsymbol{u}_m) \right)] \Bigg{)}, 
\end{align}
where $\pi_{1}$ and $\pi_{2}$ are marginals of non-negative measure $\pi$.

\begin{figure*}[!t]
\begin{center}

  \begin{tabular}{c}
\widgraph{0.95\textwidth}{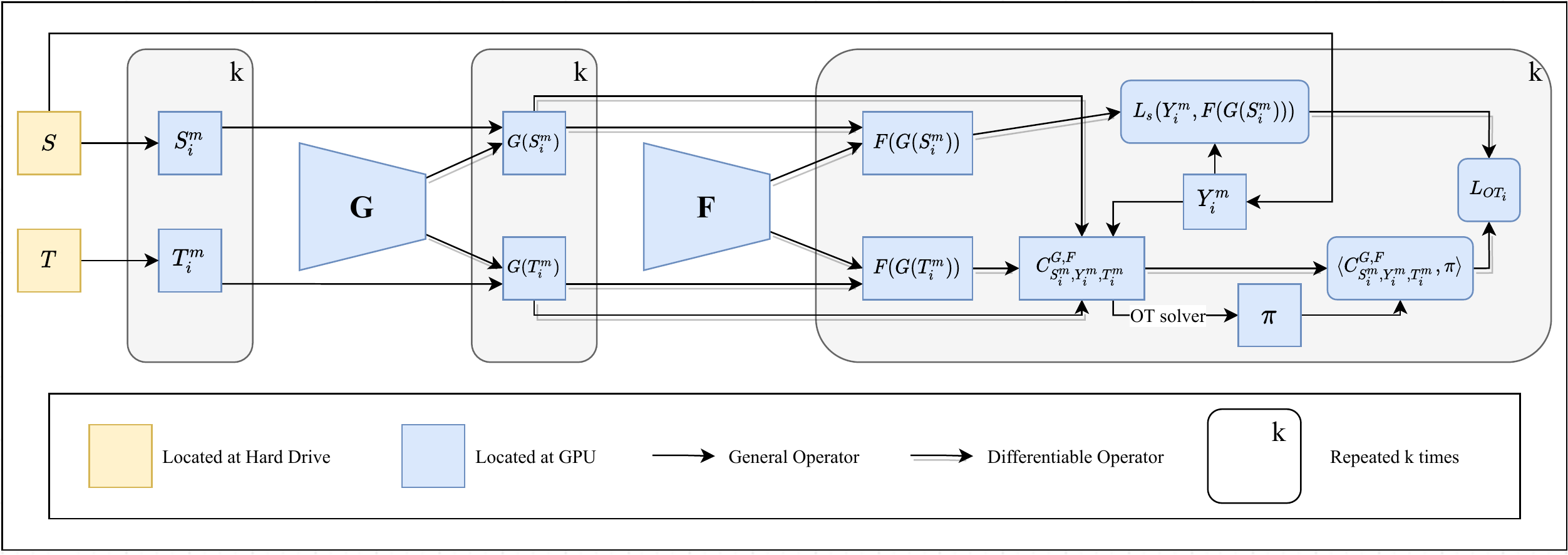} 
  \end{tabular}
  \end{center}
  \vskip -0.2in
  \caption{
  \footnotesize{The pseudo computational graph for the conventional deep domain adaptation.
}
} 
  \label{fig:conventionalDA}
  \vskip -0.1in
\end{figure*}

\textbf{Deep domain adaptation with m-POT:} Similar to JUMBOT, by changing OT into POT with the fraction of masses $s$, we obtain the following objective loss:
\begin{align}
    \min_{G,F} \frac{1}{k}\sum_{i=1}^k L_{\text{POT}_i}; \hspace{1em} L_{\text{POT}_i}=\Bigg{(} \frac{1}{m} \sum_{j=1}^m L_s (y_{ij},F(G(s_{ij}))) + \min_{\pi \in \Pi_s(\boldsymbol{u}_m,\boldsymbol{u}_m)} \langle C_{S_i^m,Y_i^m,T_i^m}^{G,F},\pi \rangle \Bigg{)},
\end{align}

\textbf{Training Algorithms:} We present the algorithm for training domain adaption with m-OT (mini-batch DeepJDOT), m-UOT (JUMBOT), and m-POT in a generalized algorithm which is stated in Algorithm~\ref{alg:DA}. We visualize the process in Figure~\ref{fig:conventionalDA}. In summary, the gradient of the neural net $G$ and the neural net $F$ are accumulated from $k$ pair of mini-batches.

We utilize Algorithm~\ref{alg:DA} to compare the performance of m-OT, m-UOT, and m-POT. The evaluation criterion is chosen based on the task on domains e.g. classification accuracy in the target domain (classification problems). 
\begin{algorithm}[!t]
   \caption{Mini-batch Deep Domain Adaptation}
   \label{alg:DA}
\begin{algorithmic}
   \STATE {\bfseries Input:} $k$, $m$, source domain $(S,Y)$, target domain $T$, chosen $L_{\text{DA}}\in \{L_{\text{OT}},L_{\text{UOT}},L_{\text{POT}}\}$
   \STATE Initialize $G_\theta$ (parametrized by $\theta$), $F_\phi$ (parametrized by $\phi$)
   \WHILE{$(\theta,\phi)$ do not converge}
   \STATE $\text{grad}_\theta  \leftarrow \boldsymbol{0}$; $\text{grad}_\phi \leftarrow \boldsymbol{0}$
   \FOR{$i=1$ {\bfseries to} $k$}
   \STATE Sample $(s_1,y_1),\ldots,(s_m,y_m)$ from $S$
   \STATE Sample $t_1,\ldots,t_m$ from $T$
   \STATE $S^m \leftarrow \{s_1,\ldots,s_m\}$; $Y^m \leftarrow \{y_1,\ldots,y_m\}$; $T^m \leftarrow \{t_1, \ldots, t_m \}$
   \STATE Compute $L_{\text{DA}} \leftarrow \frac{1}{k} L_{\text{DA}}(S^m,Y^m,T^m,G_\theta,F_\phi)$
   \STATE $\text{grad}_\theta \leftarrow \text{grad}_\theta + \nabla_\theta L_{\text{DA}}$
   \STATE $\text{grad}_\phi \leftarrow \text{grad}_\phi + \nabla_\phi L_{\text{DA}}$
   \ENDFOR
   \STATE $\theta \leftarrow \text{Adam}(\theta, \text{grad}_\theta)$
   \STATE $\phi \leftarrow \text{Adam}(\theta, \text{grad}_\phi)$
   \ENDWHILE
\end{algorithmic}
\end{algorithm}

\textbf{The two-stage implementation: } In the conventional implementation of DeepJDOT and its extension with UOT and POT, the transportation between each pair of mini-batches of size $m$ is solved on GPU in turn. However, the GPU memory is often small and it is often used for storing the big deep neural nets and the computational graph for automatic differentiation. Hence, the size of mini-batches $m$ is often small e.g., 100. We observe that it does not require doing back propagation through the transportation matrices and the memory of CPU (RAM) is much bigger than one of GPU. Hence, we propose to first estimate the transportation matrix on the CPU level then use it to align samples that will be computed their distance on GPU. We would like to recall that a bigger size of mini-batches leads to a better estimation of OT~\cite{sommerfeld2019optimal}.  We visualize the process in Figure~\ref{fig:twostageDA} and the corresponding algorithm in Algorithm~\ref{alg:efficient_DA}. Intuitively, the two-stage approach is using mini-batch methods (m-OT, m-UOT, m-POT) on the CPU level for better estimation of the transportation plan.

A related work was done in~\cite{lezama2021run}. In particular, the author proposed to first use the CPU to sort one-dimensional projected samples, then use GPU to compute distances of sorted supports for estimating sliced Wasserstein distance well. The above approach can be seen as a special case of our two-stage implementation in generative modeling using $\tilde{k}=1$ and a sliced OT distance (sliced Wasserstein).

Due to the nice property of OT and POT, the transportation plan has the form of a permutation matrix or at least nearly for POT. Utilizing the sparse property of OT and POT, we can divide a large transportation plan into multiple blocks of small transportation plans between mini-batches. Inspired by this, we propose a more efficient algorithm for training domain adaptation with m-OT and m-POT. Algorithm~\ref{alg:efficient_DA} details the training procedure which first solves a large-batch optimal transport problem, then computes an alignment between source and target samples. Finally, using the pre-computed alignment, we calculate the loss between the mini-batch source and its aligned mini-batch target before accumulating the gradient of the objective function.

\begin{algorithm}[!t]
  \caption{Two-stage mini-batch Deep Domain Adaptation}
  \label{alg:efficient_DA}
\begin{algorithmic}
  \STATE {\bfseries Input:} $\tilde{k}, \tilde{m}, m$, source domain $(S,Y)$, target domain $T$, chosen cost $L_{\text{DA}} \in \{L_{\text{OT}},L_{\text{POT}}\}$ 
  \STATE Initialize $G_\theta$ (parametrized by $\theta$), $F_\phi$ (parametrized by $\phi$)
  \WHILE{$(\theta,\phi)$ do not converge}
  \FOR{$i=1$ {\bfseries to} $\tilde{k}$}
  \STATE ----- \textit{On computer memory}-----
  \STATE Sample $(s_1,y_1),\ldots,(s_{\tilde{m}},y_{\tilde{m}})$ from $(S,Y)$
  \STATE Sample $t_1,\ldots,t_{\tilde{m}}$ from $T$
  \STATE $S^{\tilde{m}} \leftarrow \{s_1,\ldots,s_{\tilde{m}}\}$; $Y^{\tilde{m}} \leftarrow \{y_1,\ldots,y_{\tilde{m}}\}$; $T^{\tilde{m}} \leftarrow \{t_1, \ldots, t_{\tilde{m}} \}$
  \STATE Compute $C_{S^{\tilde{m}},Y^{\tilde{m}},T^{\tilde{m}}}^{G_\theta, F_\phi}$ (Equation~\ref{eq:cost_matrix})
  \STATE $\boldsymbol{u}_{\tilde{m}} \leftarrow \left(\frac{1}{\tilde{m}},\ldots,\frac{1}{\tilde{m}}\right)$
  \STATE Solve $\pi \leftarrow \text{(P-)OT}(\boldsymbol{u}_{\tilde{m}},\boldsymbol{u}_{\tilde{m}},C_{S^{\tilde{m}},Y^{\tilde{m}},T^{\tilde{m}}}^{G_\theta, F_\phi})$
  \STATE $\Gamma \leftarrow \argmax \pi$ %\COMMENT{Because $\pi$ is sparse, $\Gamma$ is meaningful.}
  \STATE $k \leftarrow \floor{\frac{\tilde{m}}{m}}$
  \FOR{$i=1$ {\bfseries to} $k$}
  \STATE ----- \textit{On GPU}-----
  \STATE $S^m \leftarrow \{s_{(i-1)m+1},\ldots,s_{im}\}$; $Y^m \leftarrow \{y_{(i-1)m+1},\ldots,y_{im}\}$; $T^m \leftarrow \{t_{\Gamma_{(i-1)m+1}}, \ldots, t_{\Gamma_{im}}\}$
  \STATE Let $\pi^i \in \RR^{m \times m}$ be the transportation plan for $S^m$ and $T^m$ from the pre-computed transportation plan $\pi$.
%   \STATE Let $\pi^i \in \RR^{m \times m}$ be the transportation plan for $S^m$ and $T^m$ which is defined as follows: 
%   \STATE $\pi^i_{j, \Gamma_j} = \frac{1}{km}$ if $\pi_{j, \Gamma_j}=\frac{1}{\tilde{m}}$ and 0 otherwise. 
%   \STATE Compute $L_{\text{DA}}^{\pi} \leftarrow \frac{1}{\tilde{k}} L_{\text{DA}}^{\pi}(S^m,Y^m,T^m,G_\theta,F_\phi)$ 
  \STATE Compute $D_{S^m,Y^m,T^m}^{G_\theta,F_\phi} \leftarrow \langle C_{S^m,Y^m,T^m}^{G_\theta,F_\phi}, \pi^i \rangle $
  \STATE $L_{\text{DA}}^{\pi} \leftarrow \frac{1}{\tilde{k}} D_{S^m,Y^m,T^m}^{G_\theta,F_\phi}$
  \STATE $\text{grad}_\theta \leftarrow \text{grad}_\theta + \nabla_\theta L_{\text{DA}}^{\pi}$
  \STATE $\text{grad}_\phi \leftarrow \text{grad}_\phi + \nabla_\phi L_{\text{DA}}^{\pi}$
  \ENDFOR
  \ENDFOR
  \STATE ----- \textit{On GPU}-----
  \STATE $\theta \leftarrow \text{Adam}(\theta, \text{grad}_\theta)$
  \STATE $\phi \leftarrow \text{Adam}(\theta, \text{grad}_\phi)$
  \ENDWHILE
\end{algorithmic}
\end{algorithm}
%%%%%%%%%%%%%%%%%%%%%%%%%%%%%%%%%%%%%%%%
%%%%%%%%%%%%%%
\subsection{Mini-batch Deep Generative Model}
\label{subsec:mini-batchDGM}
We first recall the setting of deep generative models. Given the data distribution $\mu_n:=\frac{1}{n} \sum_{i=1}^n \delta_{x_i}$ with $x_i \in \mathcal{X}$, a prior distribution $p(z) \in \mathcal{P}(\mathcal{Z})$ e.g. $p(z)= \mathcal{N}(\boldsymbol{0},\boldsymbol{I})$, and a generator (generative function) $G_\theta: \mathcal{Z} \to \mathcal{X}$ (where $\theta \in \Theta$ is a neural net). The goal of the deep generative model is to find the parameter $\theta^*$ that minimizes the discrepancy (e.g. KL divergence, optimal transport distance, etc) between $\mu_n$ and $G_\theta \sharp p(z)$ where the $\sharp$ symbol denotes the push-forward operator.

Due to the intractable computation of optimal transport distance ($n$ is very large, the implicit density of $G_\theta \sharp p(z)$), mini-batch losses (m-OT, m-UOT, m-POT) are used as surrogate losses to train the generator $G_\theta$. The idea is to estimate the gradient of the mini-bath losses to update the neural network $\theta$. In practice, the real metric space of data samples is unknown, hence, adversarial training is used as unsupervised metric learning \cite{genevay2018learning,salimans2018improving}. In partial, a discriminator function $F_\phi: \mathcal{X} \to \mathcal{H}$ where $\mathcal{H}$ is a chosen feature space. The function $F_\phi$ is trained by maximizing the mini-batch OT loss. For greater detail, the training procedure is described in Algorithm~\ref{alg:DGM}. We would like to recall that there are also several works that uses sliced Wasserstein for generative models~\cite{deshpande2018generative,deshpande2019max,kolouri2019generalized,kolouri2016sliced,nguyen2022amortized,nguyen2022revisiting} in mini-batch setting. Therefore, we can also improve them by using sliced partial OT~\cite{bonneel2019spot}.

\begin{algorithm}[!t]
   \caption{Mini-batch Deep Generative Model}
   \label{alg:DGM}
\begin{algorithmic}
   \STATE {\bfseries Input:} $k$, $m$, data distribution $\mu_n$, prior distribution $p(z)$, chosen mini-batch loss $L_{\text{DGM}}\in \{\text{OT},\text{UOT},\text{POT}\}$
   \STATE Initialize $G_\theta$; $F_\phi$
   \WHILE{$\theta$ does not converge}
   \STATE $\text{grad}_\theta  \leftarrow \boldsymbol{0}$
   \STATE $\text{grad}_\phi  \leftarrow \boldsymbol{0}$
   \FOR{$i=1$ {\bfseries to} $k$}
   \STATE Sample $x_1,\ldots,x_m$ from $\mu_n$
   \STATE Sample $z_i,\ldots,z_m$ from $p(z)$
   \STATE Compute $y_i,\ldots,y_m \leftarrow G_\theta(z_i),\ldots,G_\theta(z_m)$ 
   \STATE $X^m \leftarrow \{x_1,\ldots,x_m\}$; $Y^m \leftarrow \{y_1,\ldots,y_m\}$
   \STATE Compute  $L_{\text{DGM}} \leftarrow \frac{1}{k} L_{\text{DGM}}(F_\phi(X^m),F_\phi(Y^m))$
   \STATE $\text{grad}_\theta \leftarrow \text{grad}_\theta + \nabla_\theta L_{\text{DGM}}$
   \STATE $\text{grad}_\phi \leftarrow \text{grad}_\phi - \nabla_\phi L_{\text{DGM}}$
   \ENDFOR
   \STATE $\theta \leftarrow \text{Adam}(\theta, \text{grad}_\theta)$
   \STATE $\phi \leftarrow \text{Adam}(\phi, \text{grad}_\phi)$
   \ENDWHILE
\end{algorithmic}
\end{algorithm}

\textbf{Evaluation: } To evaluate the quality of the generator $G_\phi$, we use the FID score \cite{heusel2017gans} to compare the closeness of $G_\theta \sharp p(z)$ to $\mu_n$.

\textbf{On debiased mini-batch energy: } Both m-OT and m-POT can be extended to debiased energy versions based on the work \cite{salimans2018improving}. However, in this paper, we want to focus on the effect of misspecified matchings on applications, hence, the original mini-batch losses are used.
\subsection{Mini-batch Color Transfer}
\label{subsec:mini-batchcolortransfer}
In this section, we first state the mini-batch color transfer algorithm that we used to compare mini-batch methods in Algorithm~\ref{alg:colortransfer}. The idea is to transfer color from a part of a source image to a part of a target image via a barycentric map that is obtained from two mini-batch measures (the source mini-batch and the target mini-batch). This process is repeated multiple times until reaching the chosen number of steps. The algorithm is introduced in ~\cite{fatras2020learning}.

\begin{algorithm}[!t]
   \caption{Color Transfer with mini-batches }
   \label{alg:colortransfer}
\begin{algorithmic}
   \STATE {\bfseries Input:} $k, m,$ source domain $X_s \in \mathbb{R}^{n \times d}$, target domain $X_t \in \mathbb{R}^{n \times d}$,  $T \in \{\text{OT},\text{UOT},\text{POT}\}$
   \STATE Initialize $Y_s \in \mathbb{R}^{n \times d}$
   \FOR{$i=1$ {\bfseries to} $k$}
   \STATE Select set $A$ of m samples in $X_s$
   \STATE Select set $B$ of m samples in $X_t$
   \STATE Compute the cost matrix $C_{A,B}$
   \STATE $\pi \leftarrow T (C_{A,B},\boldsymbol{u}_m,\boldsymbol{u}_m))$ 
   \STATE $\left.\left.Y_s\right|_{A} \leftarrow Y_s\right|_A+\left.\pi \cdot X_t\right|_{B}$
   \ENDFOR
   
   \STATE {\bfseries Output:} $Y_s$
\end{algorithmic}
\end{algorithm}

\subsection{Mini-batch Gradient Flow}
\label{subsec:mini-batchgradient}

In this appendix, we describe the gradient flow application and the usage of mini-batch methods in this case.

Gradient flow is a non-parametric method to learn a generative model. The goal is to find a distribution $\mu$ that is close to  the data distribution $\nu$ in sense of a discrepancy $D$. So, a gradient flow can be defined as:
\begin{align}
    \partial_t \mu_t = -\nabla_{\mu_t} D(\mu_t,\nu)
\end{align}
We follow the  Euler scheme to solve this equation as in \cite{feydy2019interpolating}, starting from an initial distribution at time $t = 0$. In this paper, we consider using mini-batch losses such as m-OT, m-UOT, and m-POT as the discrepancy $D$.
%%%%%%%%%%%%%%%%%%%%%%%%%%%%%%%%%%%%%%%%%%%%%%%%%%%%%%%%%%%%%%%%%%%%%%%%%%%%%%

\section{Additional experiments}
\label{sec:add_exp}

We first visualize the transportation of the UOT and the POT in the context of mini-batch in Appendix~\ref{subsec:visual}. Next, we run a toy example of a $10 \times 10$ transportation problem to compare m-OT, m-UOT, m-POT, and full-OT in Appendix~\ref{subsec:matrix}. In the same section, we also investigate the role of transportation fraction $s$ in m-POT. We then show tables that contain results of all run settings in domain adaptation in Appendix~\ref{subsec:addexp_DA}. In Appendix~\ref{subsec:addexp_PDA}, we report detailed results of partial domain adaptation in each run. After that, we provide quantitative results on the deep generative model as well as show some randomly generated images in Appendix~\ref{subsec:exp_dgm}. Moreover, we discuss experimental results on color transfer and gradient flow in Appendices~\ref{subsec:color_transfer} and~\ref{subsec:gflow}, respectively. Next, we discuss the comparison between m-POT and m-OT (m-UOT) with the mini-batch size raised by 1 for m-OT (m-UOT) in Appendix~\ref{subsec:equi_OT}. Finally, we discuss the computational time of m-OT, m-UOT, m-POT, and their two-stage versions on different applications in Appendix~\ref{subsec:time}.

\subsection{Transportation visualization }
\label{subsec:visual}

\textbf{Transportation of UOT: } We use Example~\ref{example:illustration} and vary the parameter $\tau$ to see the changing of the transportation plan of UOT. We show in Figure~\ref{fig:mUOT} the behavior of UOT that we observed. According to the figure, UOT's transportation plan is always non-zero for every value of $\tau$. Increasing $\tau$ leads to a higher value for entries of the transportation matrix. In alleviating misspecified matchings, UOT can cure the problem to some extent with the right choice of $\tau$ (e.g. 0.2, 0.5). In particular, the mass of misspecified matchings is small compared to the right matchings. However, it is not easy to know how many masses are transported in total in UOT.

\begin{figure*}[!t]
\begin{center}

  \begin{tabular}{c}
\widgraph{0.95\textwidth}{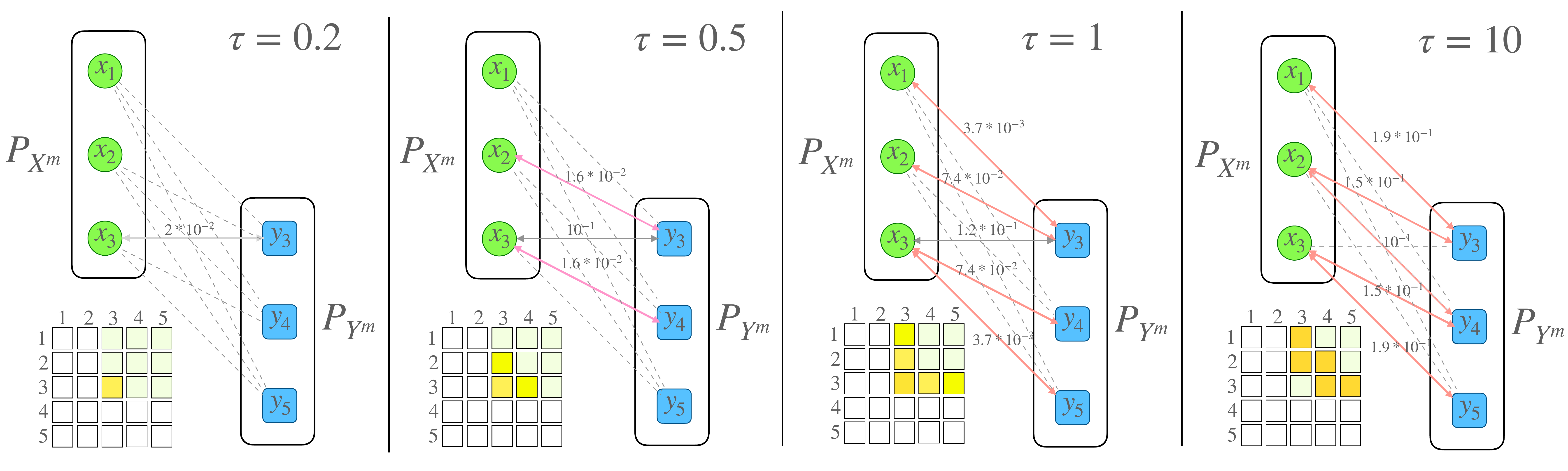} 
  \end{tabular}
  \end{center}
  \vskip -0.2in
  \caption{
  \footnotesize{The illustration of Example~\ref{example:illustration} for m-UOT. As in Figure~\ref{fig:mOT}, the green points and blue points are respectively the supports of the empirical measures $\mu_n$ and $\nu_n$. Black solid arrows represent the optimal mappings between $\mu_n$ and $\nu_n$. Red solid arrows represent misspecified mappings. Dashed arrows are mappings that have very small masses. The parameter $\tau$ is the coefficient of the marginal relaxation term of UOT. The $5\times 5$ matrix is the incomplete transportation matrix $\pi^{\text{UOT}^\tau_\phi}_{P_{X^m},P_{Y^m}}$ which is created from solving UOT between $P_{X^m}$ and $P_{Y^m}$. The boldness of the color of arrows and entries of the transportation matrix represent their mass values. 
}
} 
  \label{fig:mUOT}
  \vskip -0.2in
\end{figure*}

\textbf{Sensitivity to the scale of cost matrix: } We consider an extended version of Example~\ref{example:illustration} where all the supports are scaled by a scalar 10. Again, we use the same set of $\tau  \in \{0.2, 0.5, 1, 10\}$ then show the transportation matrices of UOT in Figure~\ref{fig:mUOTscale}. From the figure, we can see that the good choice of $\tau$ depends significantly on the scale of the cost matrix. So, UOT could not perform well in applications that change the supports of measures frequently such as deep generative models. Moreover, this sensitivity also leads to big efforts to search for a good value of $\tau$ in applications. In contrast, with the same set of the fraction of masses, $s$ in Figure~\ref{fig:mOT}, the obtained transportation from POT is unchanged when we scale the supports of measures. We show the results in Figure~\ref{fig:mPOTscale} that suggests POT is a stabler choice than UOT.

\begin{figure*}[!t]
\begin{center}

  \begin{tabular}{c}
\widgraph{0.95\textwidth}{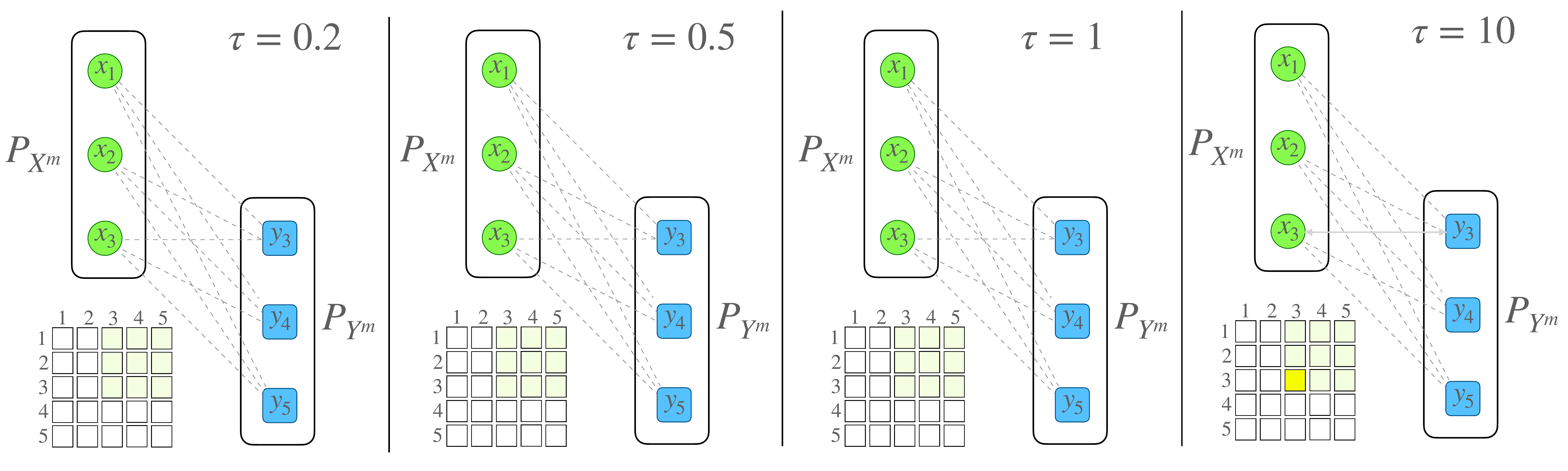} 
  \end{tabular}
  \end{center}
  \vskip -0.2in
  \caption{
  \footnotesize{The UOT's illustration of Example \ref{example:illustration} with supports of measures scaled by 10. As in Figure~\ref{fig:mOT}, the green points and blue points are respectively the supports of the empirical measures $\mu_n$ and $\nu_n$. Black solid arrows represent the optimal mappings between $\mu_n$ and $\nu_n$. Red solid arrows represent misspecified mappings. Dashed arrows are mappings that have very small masses. The parameter $\tau$ is the coefficient of the marginal relaxation term of UOT. The $5\times 5$ matrix is the incomplete transportation matrix $\pi^{\text{UOT}^\tau_\phi}_{P_{X^m},P_{Y^m}}$ which is created from solving UOT between $P_{X^m}$ and $P_{Y^m}$. The boldness of the color of arrows and entries of the transportation matrix represent their mass values. 
}
} 
  \label{fig:mUOTscale}
  \vskip -0.2in
\end{figure*}

\begin{figure*}[!t]


\begin{center}

  \begin{tabular}{c}
\widgraph{0.95\textwidth}{images/mPOT_draw.pdf} 
  \end{tabular}
  \end{center}
  \vskip -0.2in
  \caption{
  \footnotesize{The POT's illustration of Example~\ref{example:illustration} with supports of measures scaled by 10. As in Figure~\ref{fig:mOT}, the green points and blue points are respectively the supports of the empirical measures $\mu_n$ and $\nu_n$. Black solid arrows represent the optimal mappings between $\mu_n$ and $\nu_n$. Red solid arrows represent misspecified mappings. Dashed arrows are mappings that have very small masses. The parameter $s$ is the fraction of masses POT. The $5\times 5$ matrix is the incomplete transportation matrix $\pi^{\text{POT}_s}_{P_{X^m},P_{Y^m}}$ which is created from solving UOT between $P_{X^m}$ and $P_{Y^m}$. The boldness of the color of arrows and entries of the transportation matrix represent their mass values. 
}
} 
  \label{fig:mPOTscale}
  \vskip -0.2in
\end{figure*}

\textbf{Non-overlapping mini-batches: } We now demonstrate the behavior of OT, UOT, and POT when a pair of mini-batches does not contain any global optimal mappings. According to the result of UOT, POT, and OT (POT s=1) in Figure~\ref{fig:nonoverlap}, we can see that although UOT and POT cannot avoid misspecified matchings in this case, their solution is still better than OT in the sense that they can maps a sample to the nearest one to it.
\begin{figure*}[!t]
\begin{center}

  \begin{tabular}{c}
\widgraph{0.95\textwidth}{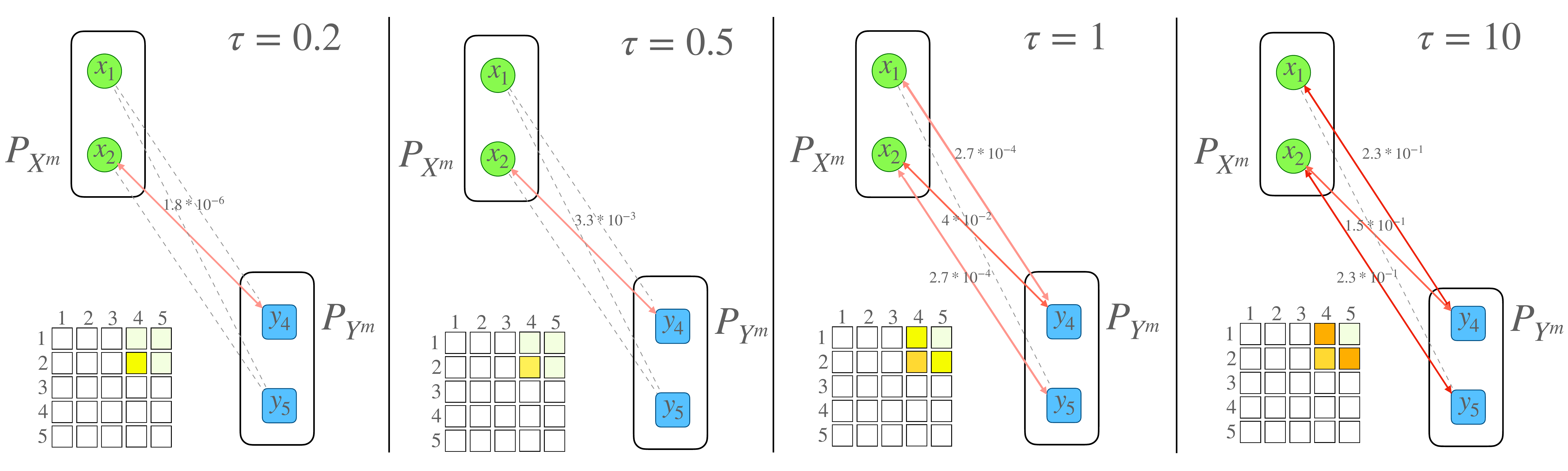}\\
\widgraph{0.95\textwidth}{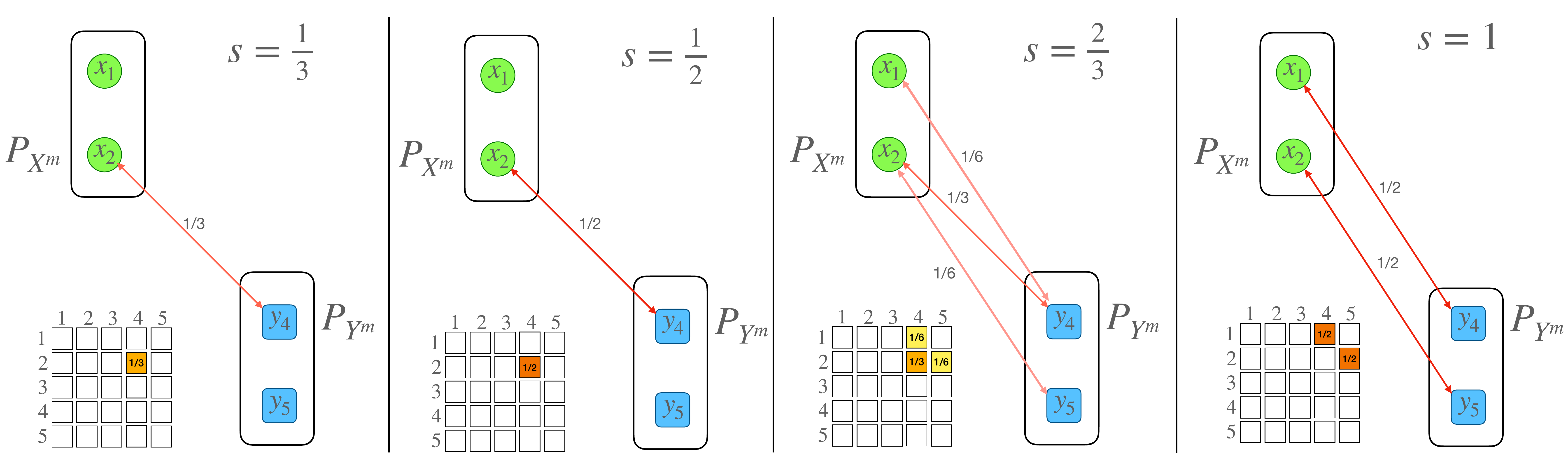}
  \end{tabular}
  \end{center}
  \vskip -0.2in
  \caption{
  \footnotesize{The illustration of Example~\ref{example:illustration} in the non-overlapping case for both UOT, POT, and OT (POT $s=1$). The appearances of these graphs are similar to previous figures.
}
} 
  \label{fig:nonoverlap}
  \vskip -0.2in
\end{figure*}

\textbf{Discussion on the fraction of masses $s$: } First, as indicated in toy examples in Figure~\ref{fig:mOT}, choosing small $s$ is a way to mitigate misspecified mappings of m-OT; however, it may also cut off other good mappings when they exist. Therefore, small $s$ tends to require a larger number of mini-batches to retrieve enough optimal mappings of the full-OT between original measures. Due to that issue, the fraction of masses $s$ should be chosen adaptively based on the number of good mappings that exist in a pair of mini-batches. In particular, if a pair of mini-batches contains several optimal pairs of samples, $s$ will be set to a high value. Nevertheless, knowing the number of optimal matchings in each pair of mini-batches requires additional information of original large-scale measures. Since designing an adaptive algorithm for $s$ is a non-trivial open question, we leave this to future work. In the paper, we will only use grid searching in a set of chosen values of $s$ in our experiments and use it for all mini-batches.

\begin{figure*}[!t]
\begin{center}
  \begin{tabular}{c}
  \widgraph{0.95\textwidth}{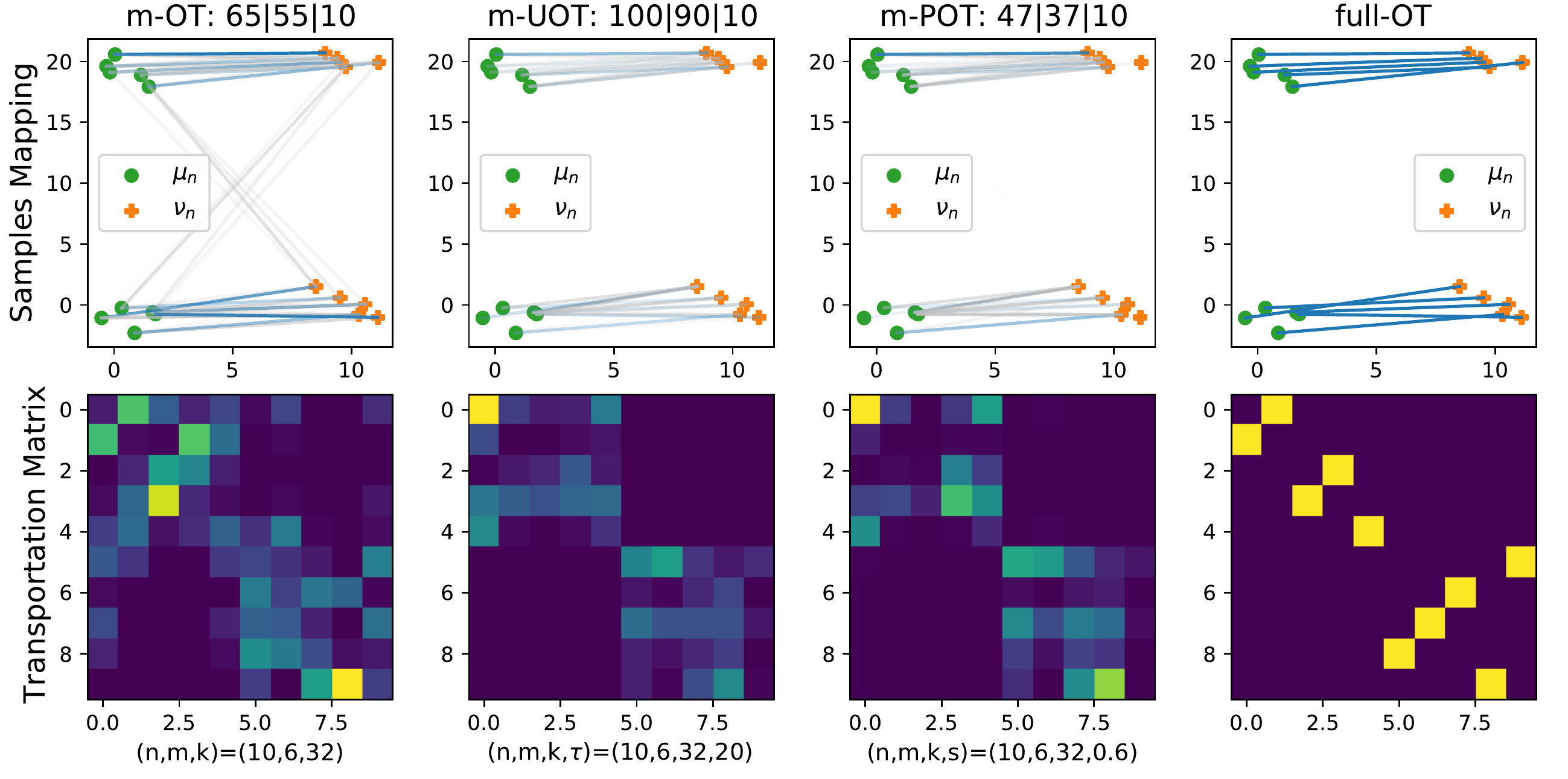} 
  \end{tabular}
  \end{center}
  \vskip -0.2in
  \caption{
  \footnotesize{The first row presents sample mappings between $\mu_n$ and $\nu_n$ from different methods. These mappings are extracted from the transportation matrices which are shown in the second row. Blue lines represent optimal mappings while silver lines represent misspecified mappings. The boldness of the lines and entries of matrices are subject to their masses. There are three numbers near the name of mini-batch methods that are the total number of mappings, the number of misspecified mappings, and the number of optimal mappings, respectively.
}
} 
  \label{fig:OTmatrix_main}
  \vskip -0.15in
\end{figure*}

\begin{figure}[!t]
\begin{center}
  \begin{tabular}{c}
  \widgraph{0.95\textwidth}{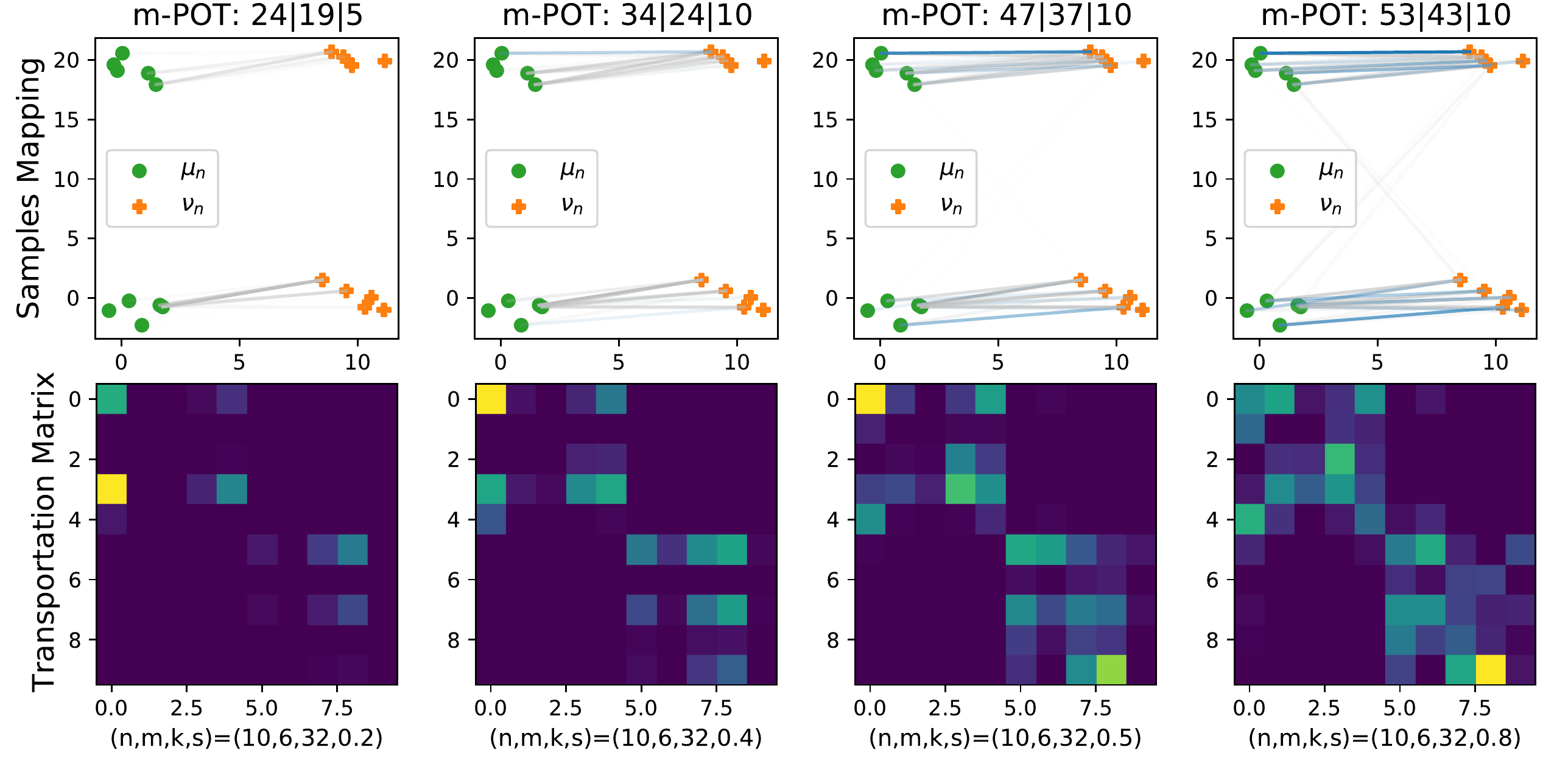} 
  \end{tabular}
  \end{center}
  \vskip -0.1in
  \caption{
  \footnotesize{The first row presents sample mappings between $\mu_n$ and $\nu_n$ from different methods. These mappings are extracted from the transportation matrices which are shown in the second row. Blue lines represent optimal mappings while silver lines represent misspecified mappings. The boldness of the lines and entries of matrices are subject to their masses. There are three numbers near the name of mini-batch methods that are the total number of mappings, the number of misspecified mappings, and the number of optimal mappings, respectively.
  }
  } 
  \label{fig:POTmatrix_main}
  \vskip -0.1in
\end{figure}

\subsection{Aggregated Transportation Matrix}
\label{subsec:matrix}

\textbf{Comparison to m-OT and m-UOT: } We demonstrate a simulation with two 10-point empirical measures $\mu_n$ and $\nu_n$ which are drawn from bi-modal distributions $ \frac{1}{2}\mathcal{N} \left( \scalebox{0.65}{$\begin{bmatrix} 0 \\0 \end{bmatrix},\begin{bmatrix} 1& 0 \\0& 1 \end{bmatrix}$} \right) + \frac{1}{2}\mathcal{N} \left( \scalebox{0.65}{$\begin{bmatrix} 0 \\20 \end{bmatrix},\begin{bmatrix} 1& 0 \\0& 1 \end{bmatrix}$} \right)$  and $ \frac{1}{2}\mathcal{N} \left( \scalebox{0.65}{$\begin{bmatrix} 10 \\0 \end{bmatrix},\begin{bmatrix} 1& -0.8 \\-0.8& 1 \end{bmatrix}$} \right) + \frac{1}{2}\mathcal{N} \left( \scalebox{0.65}{$\begin{bmatrix} 10 \\20 \end{bmatrix},\begin{bmatrix} 1& -0.8 \\-0.8& 1 \end{bmatrix}$} \right)$, respectively. We run m-OT, m-UOT (with KL divergence), and m-POT. All methods are run with $k=32,m=6$. Then, we visualize the mappings and transportation matrices in Figure~\ref{fig:OTmatrix_main}. For illustration purposes, there are three numbers shown near the names of mini-batch approaches. They are the total number of mappings, the number of misspecified mappings, and the number of optimal mappings in turn. The total number of mappings is the number of non-zero entries in a transportation matrix. The number of optimal mappings is the number of mappings that exist in the full-OT's transportation plan while the number of misspecified mappings is for mappings that do not exist. We can observe that m-POT and m-UOT provide more meaningful transportation plans than m-OT, namely, masses are put to connections between correct clusters. Importantly, we also observe that m-POT has lower misspecified matchings than m-OT (37 compared to 55). As mentioned in the background section, UOT tends to provide a transportation plan that contains non-zero entries; therefore, m-UOT still has many misspecified matchings through the weights of those matchings can be very small that can be ignored in practice. Note that, in the simulation results we select the best value of  $\tau \in \{0.1,0.5,1,2,5,10,15,20,30,40,50\}$ for m-UOT and best value of $s \in \{0.1,0.2,0.3,0.4,0.5,0.6,0.7,0.8,0.9\}$ for m-POT.

% \textcolor{red}{Here we have mixture model, where the m-OT matches data from one cluster to other cluster, meanwhile the m-UOT and m-POT do not do that. Is it not clear why the m-POT is better than the m-UOT. In the first cluster, the m-UOT transport plan is more blurring, but in the second cluster, the m-POT transport plan is more blurring.  Should we report the values of m-OT, m-UOT, m-POT and full OT? Or should we use some metrics to measure the distance from transport plans of m-UOT and m-POT to the full OT transport plan?}

\textbf{The role of the fraction of masses $s$: } Here, we repeat the simulation in the main text with two empirical measures of 10 samples. The difference is that we run only m-POT with the value of the fraction of masses $s \in \{0.2, 0.4, 0.6, 0.8\}$. The result is shown in Figure~\ref{fig:POTmatrix_main}. It is easy to see that a smaller value of $s$ leads to a smaller number of misspecified mappings. However, a small value of $s$ also removes good mappings (e.g. $s=0.2$ only has $5$ correct mappings). With the right choice of $s$ (e.g. $s=0.4$), we can obtain enough good mappings while having a reasonable number of misspecified matchings.

\subsection{Deep Domain Adaptation Experiments}
\label{subsec:addexp_DA}
We first want to emphasize that we have used the best parameter settings for m-UOT that are reported in \cite{fatras2021unbalanced} in each experiment. We reproduce and compare the performance of our proposed methods including m-POT, TS-OT, TS-UOT, and TS-POT with two mini-batch methods (m-OT and m-UOT) and two non-OT baselines (DANN~\cite{ganin2016domain} and ALDA~\cite{chen2020adversarial}). 

\begin{table}[!t]
    \caption{Details of DA results on digit datasets over 3 runs. Entries of the table are classification accuracies in the target domain. The fraction of masses $s$ for SVHN to MNIST, USPS to MNIST, and MNIST to USPS are 0.85, 0.90, and 0.80, respectively.}
    \vskip 0.1in
    \label{table:DA_digits_details}
    \centering
    \scalebox{1}{
        \begin{tabular}{ccccccc}
        \toprule
        Run & Scenario & DANN & ALDA & m-OT & m-UOT & m-POT \\
        \midrule
        1 & SVHN to MNIST & 95.59 & 98.73 & 93.84 & 99.06 & \textbf{99.08} \\
        & USPS to MNIST & 94.87 & 98.38 & 96.97 & \textbf{98.75} & 98.72 \\   
        & MNIST to USPS & 92.03 & 95.46 & 87.59 & 95.76 & \textbf{96.06} \\
        \midrule
        2 & SVHN to MNIST & 96.20 & 98.92 & 94.08 & 98.86 & \textbf{98.97} \\
        & USPS to MNIST & 94.68 & 98.27 & 96.77 & 98.6 & \textbf{98.72} \\
        & MNIST to USPS & 91.98 & 95.32 & 85.30 & 95.86 & \textbf{96.01} \\
        \midrule
        3 & SVHN to MNIST & 95.60 & 98.77 & 94.61 & 98.74 & \textbf{98.88} \\
        & USPS to MNIST & 94.58 & 98.21 & 96.39 & 98.26 & \textbf{98.45} \\
        & MNIST to USPS & 90.88 & 95.07 & 87.89 & 95.86 & \textbf{96.06} \\
        \bottomrule
        \end{tabular}
    }
\end{table}

\begin{figure*}[t!]
    \begin{center}
        \begin{tabular}{c}
        \widgraph{0.95\textwidth}{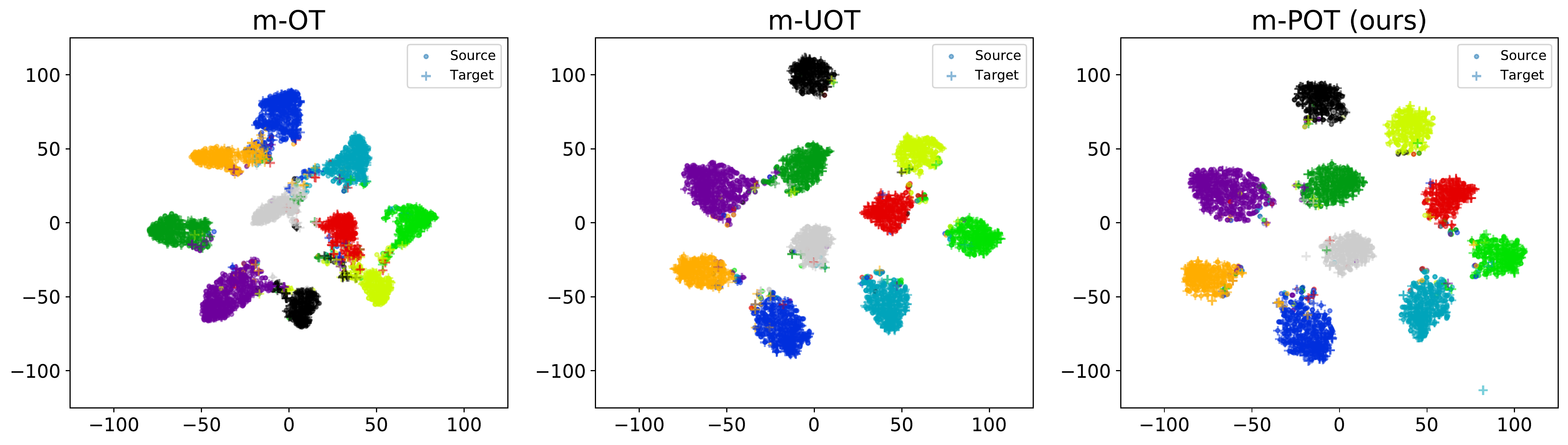} 
        \end{tabular}
    \end{center}
    \vskip -0.2in
    \caption{
        \footnotesize{T-SNE embeddings of 2000 test samples for SVHN (source) and MNIST (target) for three mini-batch methods. Samples are colored based on their class labels. T-SNE embeddings for the VisDA dataset are given in Figure~\ref{fig:tsne_visda} in Appendix~\ref{subsec:addexp_DA}.}
    } 
    \label{fig:tsne_digits}
    \vskip -0.1in
\end{figure*}

\textbf{Digits datasets: } In this experiment, we run five different methods DANN, ALDA, m-OT, m-UOT, and m-POT on three adaptation tasks including SVHN to MNIST, USPS to MNIST, and MNIST to USPS. Each method was run 3 times and the results are reported in Table~\ref{table:DA_digits_details}. We did not apply TS-POT on digits datasets because the accuracy of m-POT is already high enough and the used batch size $m = 500$ is also quite large. As can be seen from Table~\ref{table:DA_digits_details}, m-POT outperforms m-OT in all runs with a significant gap. In addition, m-POT also leads to a more favorable performance than m-UOT in almost all experiments. In comparison with DANN and ALDA, our method still yields better accuracy on all adaptation tasks over 3 runs.

\begin{table}[t]
    \caption{Details of DA results on the Office-Home dataset. Entries of the table are classification accuracies in the target domain. The number of mini-batches for m-OT, m-UOT, TS-OT, TS-UOT, and TS-POT are set to their best-performing values in Table~\ref{table:DA_efficient_office_detail}, which are 4, 1, 4, 4, and 2, respectively. For m-POT, the number of mini-batches $k$ is also set to 1 to have a fair comparison between m-UOT and m-POT. The fraction of mass $s$ of m-POT is selected from the set $\{ 0.5, 0.55, 0.6, 0.65, 0.7 \}$. The fraction of masses $s$ for TS-POT is set to 0.6.}
    \vskip 0.1in
    \label{table:DA_office_details}
    \centering
    \scalebox{0.85}{
        \begin{tabular}{ccccccccccccccc}
        \toprule
        Run & Method & A2C & A2P & A2R & C2A & C2P & C2R & P2A & P2C & P2R & R2A & R2C & R2P & Avg \\
        \midrule
        1 & DANN & 47.81 & 66.86 & 74.96 & 53.11 & 62.51 & 65.69 & 53.19 & 44.08 & 74.50 & 64.85 & 53.17 & 79.39 & 61.68 \\
        & ALDA & 54.07 & 74.81 & 77.28 & 61.97 & 71.64 & 72.96 & 59.54 & 51.34 & 76.57 & 68.15 & 56.40 & 82.11 & 67.24 \\
        & m-OT & 51.89 & 70.74 & 75.53 & 59.75 & 66.50 & 70.26 & 57.19 & 48.04 & 75.49 & 66.58 & 55.74 & 78.08 & 64.65 \\
        & m-UOT & 54.76 & 74.23 & 80.33 & 65.39 & \textbf{75.40} & 74.78 & 65.93 & 53.79 & 80.17 & 73.80 & 59.47 & 83.89 & 70.16 \\
        & m-POT (Ours) & 55.05 & 73.82 & 80.81 & 66.50 & 74.95 & 76.36 & 64.40 & 53.50 & \textbf{80.56} & 74.62 & 59.59 & 83.71 & 70.32 \\
        & TS-OT (Ours) & 53.91 & 71.35 & 77.12 & 59.79 & 69.39 & 71.93 & 59.21 & 51.13 & 76.54 & 66.09 & 57.14 & 80.20 & 66.15 \\
        & TS-UOT (Ours) & 56.11 & 73.67 & 80.03 & 65.18 & 73.19 & 76.43 & 63.66 & 54.73 & 79.87 & 71.45 & 60.32 & 82.92 & 69.80 \\
        & TS-POT (Ours) & \textbf{57.05} & \textbf{76.30} & \textbf{81.48} & \textbf{68.19} & 73.51 & \textbf{76.70} & \textbf{66.05} & \textbf{55.35} & 80.40 & \textbf{75.61} & \textbf{60.78} & \textbf{84.37} & \textbf{71.32} \\
        \midrule
        2 & DANN & 47.86 & 67.25 & 74.94 & 53.81 & 64.41 & 66.63 & 52.37 & 44.4 & 74.00 & 65.76 & 52.85 & 79.23 & 61.96 \\
        & ALDA & 54.43 & 74.77 & 77.23 & 61.23 & 69.7 & 72.96 & 60.24 & 49.67 & 76.52 & 67.45 & 55.69 & 81.93 & 66.82 \\
        & m-OT & 51.89 & 69.21 & 75.95 & 59.58 & 66.66 & 69.75 & 57.85 & 47.77 & 75.14 & 67.08 & 55.83 & 77.95 & 64.56 \\
        & m-UOT & 55.56 & 74.63 & 81.16 & 65.84 & 74.45 & 75.08 & 64.07 & 53.24 & 79.99 & 75.07 & 60.16 & 83.62 & 70.24 \\
        & m-POT (Ours) & 56.15 & 73.71 & 80.84 & 66.34 & \textbf{74.77} & 75.88 & 64.40 & 53.13 & \textbf{80.61} & 74.66 & 59.84 & 83.85 & 70.35 \\
        & TS-OT (Ours) & 54.04 & 70.85 & 77.14 & 59.50 & 68.93 & 72.05 & 59.00 & 51.32 & 76.50 & 66.67 & 56.95 & 80.38 & 66.11 \\
        & TS-UOT (Ours) & 56.52 & 73.46 & 80.22 & 64.98 & 73.03 & 76.50 & 63.63 & 54.46 & 80.03 & 70.99 & 59.84 & 82.86 & 69.71 \\
        & TS-POT (Ours) & \textbf{57.53} & \textbf{76.32} & \textbf{81.62} & \textbf{68.69} & 72.22 & \textbf{76.08} & \textbf{66.13} & \textbf{54.59} & 80.47 & \textbf{75.53} & \textbf{60.48} & \textbf{84.14} & \textbf{71.15} \\
        \midrule
        3 & DANN & 48.09 & 67.13 & 74.64 & 54.47 & 63.48 & 66.93 & 53.40 & 44.58 & 74.8 & 65.97 & 52.85 & 79.61 & 62.16 \\
        & ALDA & 53.61 & 75.08 & 76.91 & 60.9 & 70.53 & 72.32 & 61.19 & 52.07 & 76.89 & 68.11 & 55.74 & 81.57 & 67.08 \\
        & m-OT & 51.48 & 70.08 & 75.88 & 59.46 & 66.21 & 70.21 & 57.77 & 47.84 & 75.24 & 66.79 & 55.56 & 78.31 & 64.57 \\
        & m-UOT & 54.64 & 74.48 & 80.86 & 65.76 & \textbf{74.93} & 74.87 & 64.11 & 53.24 & 79.87 & 74.87 & 60.02 & 83.67 & 70.11 \\
        & m-POT (Ours) & 55.74 & 73.87 & 80.63 & 66.17 & \textbf{74.93} & 76.25 & 64.57 & 53.52 & \textbf{80.63} & 74.37 & 59.70 & 83.87 & 70.35 \\
        & TS-OT (Ours) & 53.72 & 70.83 & 77.12 & 60.16 & 69.27 & 71.88 & 59.33 & 51.07& 76.57 & 66.63 & 56.82 & 80.00 & 66.12 \\
        & TS-UOT (Ours) & 56.43 & 73.55 & 80.24 & 64.90 & 73.15 & 76.57 & 63.70 & 54.27 & 80.01 & 71.28 & 60.16 & 82.97 & 69.77 \\
        & TS-POT (Ours) & \textbf{56.59} & \textbf{75.78} & \textbf{81.50} & \textbf{68.44} & 72.74 & \textbf{76.82} & \textbf{66.46} & \textbf{54.66} & 80.31 & \textbf{75.57} & \textbf{60.23} & \textbf{84.41} & \textbf{71.13} \\
        \bottomrule
        \end{tabular}
    }
    \vskip -0.2in
\end{table}

\textbf{Office-Home dataset: } The mini-batch size is set to 65 for all methods on the Office-Home dataset. Utilizing the mini-batch strategy, we, however, can scale the number of samples in each stochastic update by increasing the number of mini-batches. In this experiment, the number of mini-batches $k$ for each OT-based method varies between 1 and 4. The detailed results for different values of $k$ are reported in Table~\ref{table:DA_efficient_office_detail} while Table~\ref{table:DA_office_details} only shows the results for the best choice of $k$ in each run. To have a fair comparison, the number of mini-batches of both m-UOT and m-POT is set to 1 even though the best choice of $k$ for m-POT is 4. According to Table~\ref{table:DA_office_details}, m-POT is still much better than m-OT on all tasks on the Office-Home dataset in all runs. Similar to digits datasets, m-POT produces slightly higher classification accuracy in the target domain than m-UOT over 3 runs. Compared to the old implementation, the two-stage implementation boosts the performance of both m-OT and m-POT but m-UOT. In each run, TS-OT witnesses a noticeable growth of at least $1.5\%$ in the average accuracy of 12 tasks. In terms of TS-POT, it consistently yields the highest average accuracy of more than $71\%$. In addition, TS-POT outperforms other methods on 10 out of 12 adaptation scenarios (excluding C2P and P2R). 

\begin{table}[t]
    \caption{Details of DA results on the VisDA dataset. Column ``All" indicates the accuracy when performing classification on all 12 classes. The following 12 columns represent the precision when classifying each class separately. The last column ``Avg" shows the average precision over 12 classes. Following JUMBOT's paper~\citep{fatras2021unbalanced}, the value in column ``All" is reported in Table~\ref{table:DA_visda_summary} and used to compare between methods. For both m-POT and TS-POT, the fraction of masses $s$ is set to 0.75. The number of mini-batches for m-OT, m-UOT, m-POT, TS-OT, TS-UOT, and TS-POT are all set to 4, which is the best performing value in Table~\ref{table:DA_efficient_visda_detail}.}
    \vskip 0.1in
    \label{table:DA_visda_details}
    \centering
    \scalebox{0.8}{
        \begin{tabular}{cccccccccccccccc}
        \toprule
        Run & Method & All & plane & bcycl & bus & car & house & knife & mcycl & person & plant & sktbrd & train & truck & Avg \\
        \midrule
        1 & DANN & 68.09 & 85.77 & 91.41 & 67.57 & 68.25 & 91.39 & 9.79 & 59.99 & 78.86 & 72.40 & 37.92 & 61.47 & 47.40 & 64.35 \\
        & ALDA & 71.38 & \textbf{95.15} & \textbf{96.61} & 68.28 & 70.77 & 91.39 & 9.21 & 61.26 & 78.77 & 75.50 & 39.38 & 70.57 & 69.34 & 68.85 \\
        & m-OT & 62.46 & 69.03 & 87.03 & \textbf{83.68} & 69.26 & 89.22 & 18.81 & 47.61 & 76.50 & 65.33 & 25.87 & 57.74 & 50.41 & 61.71 \\
        & m-UOT & 71.91 & 89.28 & 93.50 & 76.07 & 73.11 & 92.56 & 5.49 & 62.51 & 73.34 & 79.48 & 34.96 & 68.98 & 62.40 & 67.64 \\
        & m-POT (Ours) & 73.52 & 92.40 & 92.90 & 69.83 & 74.52 & 91.85 & \textbf{67.83} & 60.43 & 76.89 & 82.42 & 34.05 & 71.04 & 59.75 & 72.83 \\
        & TS-OT (Ours) & 68.12 & 83.89 & 82.43 & 78.13 & 70.64 & 90.40 & 6.93 & 54.08 & \textbf{82.26} & 83.46 & 30.78 & 57.01 & 62.71 & 65.23 \\
        & TS-UOT (Ours) & 71.03 & 86.27 & 89.01 & 71.27 & 70.04 & 90.57 & 5.00 & 63.49 & 77.65 & \textbf{86.16} & 38.17 & 60.06 & 63.08 & 66.73 \\
        & TS-POT (Ours) & \textbf{76.50} & 94.24 & 95.69 & 69.43 & \textbf{76.90} & \textbf{93.16} & 57.60 & \textbf{68.52} & 67.27 & 85.69 & \textbf{49.25} & \textbf{72.63} & \textbf{84.03} & \textbf{76.20} \\
        \midrule
        2 & DANN & 67.28 & 84.83 & 91.51 & 70.88 & 67.05 & \textbf{94.49} & 10.10 & 62.60 & 81.80 & 69.68 & 35.76 & 54.05 & 44.33 & 63.92 \\
        & ALDA & 71.10 & \textbf{95.37} & \textbf{96.62} & 73.46 & 69.01 & 94.21 & 8.24 & 54.51 & 79.61 & 80.95 & \textbf{41.11} & 70.94 & 74.09 & 69.84 \\
        & m-OT & 62.26 & 72.59 & 84.84 & \textbf{82.94} & 70.10 & 88.19 & 7.71 & 46.55 & 75.12 & 62.82 & 19.21 & 58.55 & 51.93 & 60.05 \\
        & m-UOT & 72.69 & 92.70 & 92.31 & 73.00 & 74.18 & 90.14 & 8.83 & \textbf{67.75} & 73.43 & 79.09 & 38.63 & 67.36 & 61.51 & 68.24 \\
        & m-POT (Ours) & 73.45 & 90.50 & 92.81 & 71.39 & 75.18 & 93.29 & \textbf{71.65} & 62.87 & 75.64 & 79.80 & 32.81 & 66.64 & 61.02 & 72.80 \\
        & TS-OT (Ours) & 69.66 & 86.68 & 85.67 & 72.50 & 72.64 & 89.24 & 3.85 & 57.74 & \textbf{81.88} & \textbf{83.58} & 27.33 & 62.70 & 58.82 & 65.22 \\
        & TS-UOT (Ours) & 70.94 & 89.98 & 82.82 & 74.37 & 70.75 & 90.21 & 9.79 & 62.56 & 74.60 & 79.04 & 33.81 & 64.93 & 61.03 & 66.16 \\
        & TS-POT (Ours) & \textbf{75.42} & 94.52 & 94.81 & 68.84 & \textbf{75.34} & 91.56 & 56.15 & 66.61 & 66.88 & 82.73 & 33.00 & \textbf{74.28} & \textbf{93.98} & \textbf{74.89} \\
        \midrule
        3 & DANN & 67.52 & 88.28 & 90.44 & 71.16 & 63.38 & 94.85 & 12.80 & 63.75 & \textbf{78.65} & 73.94 & 36.21 & 56.62 & 37.70 & 63.98 \\
        & ALDA & 71.18 & 94.43 & \textbf{97.21} & 65.56 & 72.76 & \textbf{95.93} & 8.61 & 58.14 & 75.80 & 86.20 & \textbf{40.35} & 63.16 & 74.99 & 69.43 \\
        & m-OT & 62.54 & 65.18 & 91.51 & \textbf{75.12} & 69.42 & 85.74 & 21.11 & 49.63 & 75.50 & 60.38 & 24.44 & 54.60 & 56.26 & 60.74 \\
        & m-UOT & 72.42 & 92.42 & 91.78 & 74.53 & 75.29 & 90.20 & 6.53 & 65.47 & 73.31 & 78.46 & 37.69 & 66.89 & 63.10 & 67.97 \\
        & m-POT (Ours) & 73.80 & 91.64 & 93.48 & 72.68 & 75.63 & 93.76 & \textbf{72.50} & 63.11 & 74.46 & 80.56 & 33.03 & 64.02 & 64.86 & 73.31 \\
        & TS-OT (Ours) & 69.64 & 85.35 & 87.85 & 73.38 & 71.68 & 85.97 & 7.77 & 58.88 & 78.62 & \textbf{87.18} & 27.76 & 56.82 & 64.07 & 65.44 \\
        & TS-UOT (Ours) & 70.77 & 83.53 & 90.42 & 73.71 & 71.63 & 91.72 & 11.61 & 62.93 & 76.84 & 80.20 & 37.86 & 58.49 & 65.77 & 67.06 \\
        & TS-POT (Ours) & \textbf{75.96} & \textbf{94.63} & 95.55 & 68.21 & \textbf{76.73} & 92.74 & 56.67 & \textbf{68.71} & 64.35 & 85.94 & 31.77 & \textbf{73.88} & \textbf{90.86} & \textbf{75.00} \\
        \bottomrule
        \end{tabular}
    }
\end{table}

\begin{figure}[t]
    \begin{center}
        \begin{tabular}{c}
        \widgraph{0.95\textwidth}{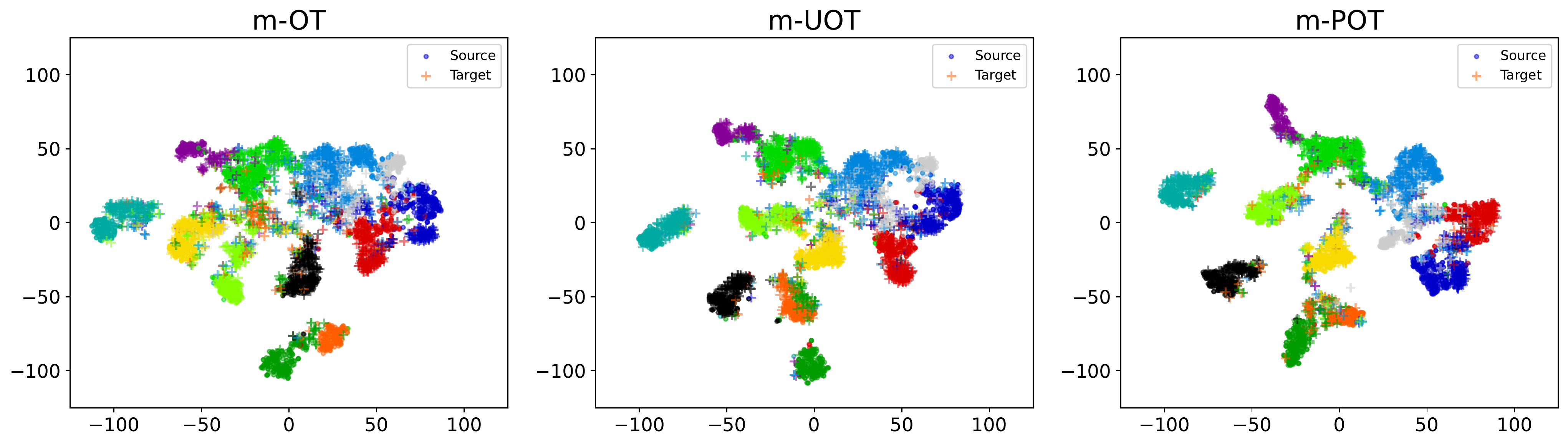} \\
        \widgraph{0.95\textwidth}{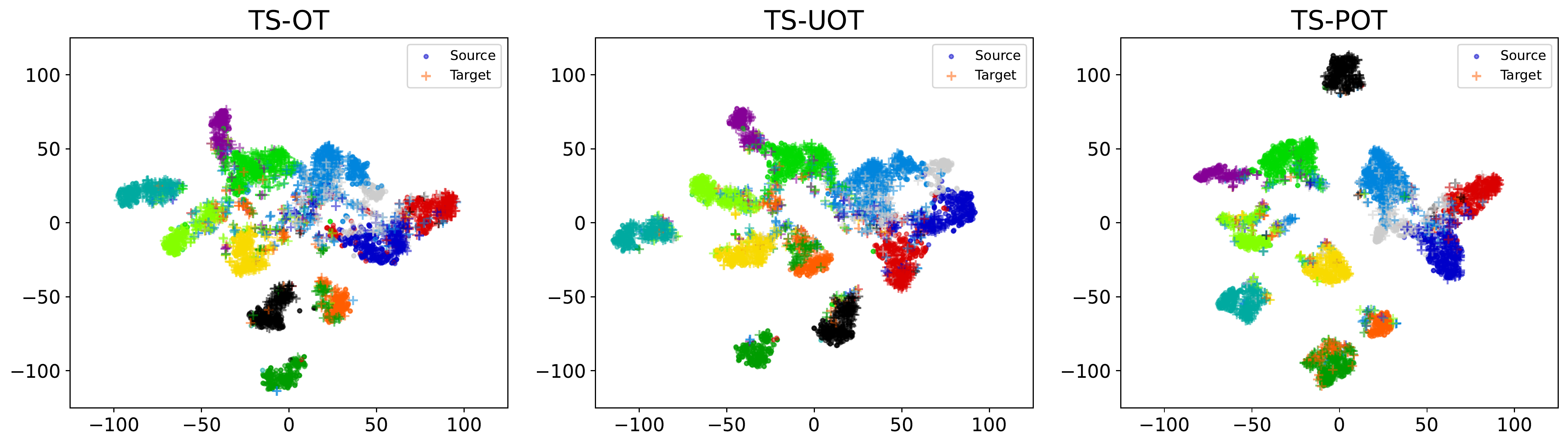} 
        \end{tabular}
    \end{center}
    \vskip -0.2in
    \caption{
      \footnotesize{T-SNE embeddings of 2000 samples for VisDA train (source) and validation (target) datasets for different methods. Samples are colored based on their class labels.}} 
    \label{fig:tsne_visda}
    %   \vskip -0.2in
\end{figure}
 
\textbf{VisDA dataset: } Similar to the Office-Home dataset, the number of mini-batches $k$ for each OT-based method once again varies between 1 and 4. According to Table~\ref{table:DA_efficient_visda_detail}, the best choice of $k$ is 4 for all OT-based methods. Table~\ref{table:DA_visda_details} illustrates both the classification accuracy and precision of different methods on the VisDA dataset. On either metric, TS-POT always leads to the highest performance in all runs, demonstrating the effectiveness of our two-stage implementation on the deep domain adaptation application. On the choice of OT as the mini-batch loss, there is again a huge margin of more than $7\%$ between the accuracy of TS-OT and m-OT in the second and third runs. In terms of the conventional mini-batch implementation, m-POT yields the best classification accuracy with a clear margin of more than $1\%$ in 2 out of 3 runs. m-OT is consistently the worst performance method in all runs. Based on average class precision (the last column), the gap between m-POT and m-UOT is always larger than $4\%$ although their difference in accuracy is only about $1\%$ on average. The visualization of TSNE embeddings for the VisDA dataset in Figure~\ref{fig:tsne_visda} also reinforces the quantitative results. It can be seen clearly that clusters in the embeddings of m-POT and TS-POT are more separate than those of other methods. 

\textbf{Changing the number of mini-batches $k$: } We vary the number of mini-batches on GPU $k \in \{ 1, 2, 4\}$ and compare the performance between the new and the old implementation. The mini-batch size on GPU $m$ is set to 65 and 72 on Office-Home and VisDA datasets, respectively. To have a fair comparison, the number of mini-batches on CPU $\tilde{k}$ is set to 1 and the mini-batch size on CPU $\tilde{m}$ is set to $k \ast m$ so that the same batch size is used. Our focus here is not to obtain state-of-the-art performance, but rather to compare the two-stage implementation with the conventional implementation on the deep DA. Therefore, the fraction of masses $s$ and the entropic regularization coefficient $\epsilon$ of m-POT are fixed. This setting is different from the experiments conducted in Tables~\ref{table:DA_office_details} and~\ref{table:DA_visda_details} in which we finetune $s$ and $\epsilon$ to obtain the best performance. For a fair comparison, we use the same value of $s$ for TS-POT while the value of $\epsilon$ is fixed to 0. In terms of TS-UOT, we use the same set of $\tau$ and $\epsilon$ as that for computing m-UOT. Tables~\ref{table:DA_efficient_office_detail} and~\ref{table:DA_efficient_visda_detail} demonstrate the classification accuracy of the deep DA on Office-Home and VisDA datasets, respectively. Firstly, increasing the number of mini-batches does increase the performance of almost all mini-batch methods. One notable exception is the case of m-UOT on the Office-Home dataset where $k>1$ shows no improvement. Secondly, the improvement of the two-stage implementation tends to be more significant than that of the conventional implementation. Considering m-OT and TS-OT as an example, when $k$ goes up from 1 to 4, there are increases of $2.20$ and $1.91$ in the accuracy of m-OT on Office-Home and VisDA datasets, respectively. While the corresponding rises of TS-OT are $3.69$ and $10.8$, respectively. Thirdly, implemented using our proposed two-stage in Algorithm~\ref{alg:efficient_DA}, both TS-OT and TS-POT show a remarkable increase in the performance. TS-UOT, however, degrades the performance of m-UOT for all choices of $k$ on both datasets. The reason is that the transportation plan of UOT is denser than that of OT or POT. Note that our two-stage implementation is originally designed to boost the performance of m-OT and m-POT only. Finally, when implementing mini-batch partial transportation using the two-stage algorithm, TS-POT achieves the best accuracy of $71.20$ and $75.96$ on Office-Home and VisDA datasets, respectively.

\begin{table}[!t]
    \caption{Comparison between two mini-batch algorithms on the Office-Home dataset when increasing the number of mini-batches from 1 to 4. Each entry includes the average classification accuracy in the target domain over 3 runs. For m-POT and TS-POT, the value of $s$ is the number in the bracket next to the name.}
    \vskip 0.1in
    \label{table:DA_efficient_office_detail}
    \centering
    \scalebox{0.9}{
        \begin{tabular}{ccccccccccccccc}
        \toprule
        k & Method & A2C & A2P & A2R & C2A & C2P & C2R & P2A & P2C & P2R & R2A & R2C & R2P & Avg \\
        \midrule
        1 & m-OT & 49.50 & 66.22 & 73.41 & 57.38 & 64.50 & 67.17 & 54.95 & 47.07 & 73.33 & 64.51 & 53.72 & 76.88 & 62.39 \\
        % &  & $\pm$ 0.61 & $\pm$ 0.84 & $\pm$ 0.76 & $\pm$ 0.32 & $\pm$ 0.36 & $\pm$ 0.23 & $\pm$ 0.44 & $\pm$ 0.50 & $\pm$ 0.32 & $\pm$ 0.42 & $\pm$ 0.58 & $\pm$ 0.10 &  \\
        & m-UOT & 55.41 & 74.79 & 80.71 & 65.65 & \textbf{74.40} & 74.98 & 64.72 & 53.14 & 79.92 & 74.85 & 60.08 & \textbf{83.40} & 70.17 \\
        % &  & $\pm$ 0.58 & $\pm$ 0.34 & $\pm$ 0.45 & $\pm$ 0.21 & $\pm$ 0.45 & $\pm$ 0.09 & $\pm$ 0.89 & $\pm$ 0.14 & $\pm$ 0.05 & $\pm$ 0.18 & $\pm$ 0.06 & $\pm$ 0.34 &  \\
        & m-POT(0.6) & 55.49 & 72.90 & 80.65 & 64.54 & 73.47 & 75.55 & 62.56 & 52.66 & 79.86 & 72.27 & 59.24 & 82.84 & 69.34 \\
        % &  & $\pm$ 0.66 & $\pm$ 0.44 & $\pm$ 0.17 & $\pm$ 0.65 & $\pm$ 0.49 & $\pm$ 0.09 & $\pm$ 0.34 & $\pm$ 0.33 & $\pm$ 0.40 & $\pm$ 0.30 & $\pm$ 0.48 & $\pm$ 0.08 &  \\
        & TS-OT & 50.97 & 67.34 & 73.14 & 56.52 & 63.24 & 67.40 & 54.70 & 47.03 & 73.45 & 64.28 & 54.53 & 76.68 & 62.44 \\
        % &  & $\pm$ 0.79 & $\pm$ 0.99 & $\pm$ 0.10 & $\pm$ 0.29 & $\pm$ 0.78 & $\pm$ 0.56 & $\pm$ 0.34 & $\pm$ 0.84 & $\pm$ 0.45 & $\pm$ 0.68 & $\pm$ 0.38 & $\pm$ 0.56 &  \\
        & TS-UOT & 54.85 & 72.06 & 78.37 & 62.37 & 71.31 & 73.34 & 60.77 & 51.15 & 78.36 & 69.74 & 58.99 & 81.53 & 67.74 \\
        % &  & $\pm$ 0.55 & $\pm$ 0.11 & $\pm$ 0.11 & $\pm$ 0.14 & $\pm$ 0.42 & $\pm$ 0.19 & $\pm$ 0.27 & $\pm$ 0.26 & $\pm$ 0.12 & $\pm$ 0.19 & $\pm$ 0.40 & $\pm$ 0.18 &  \\
        & TS-POT(0.6) & \textbf{56.28} & \textbf{76.00} & \textbf{81.27} & \textbf{68.51} & 73.44 & \textbf{76.40} & \textbf{67.15} & \textbf{54.51} & \textbf{80.65} & \textbf{75.21} & \textbf{60.74} & 83.38 & \textbf{71.13} \\
        % &  & $\pm$ 0.74 & $\pm$ 0.66 & $\pm$ 0.28 & $\pm$ 0.27 & $\pm$ 0.20 & $\pm$ 0.34 & $\pm$ 0.34 & $\pm$ 0.30 & $\pm$ 0.24 & $\pm$ 0.32 & $\pm$ 0.21 & $\pm$ 0.21 &  \\
        \midrule
        2 & m-OT & 51.06 & 70.19 & 74.90 & 59.11 & 64.93 & 68.71 & 56.06 & 47.64 & 73.96 & 66.49 & 54.70 & 77.77 & 63.79 \\
        % &  & $\pm$ 0.24 & $\pm$ 0.34 & $\pm$ 0.12 & $\pm$ 0.36 & $\pm$ 0.15 & $\pm$ 0.31 & $\pm$ 0.64 & $\pm$ 0.20 & $\pm$ 0.11 & $\pm$ 0.48 & $\pm$ 0.46 & $\pm$ 0.17 &  \\
        & m-UOT & 53.95 & 74.24 & 79.68 & 65.66 & \textbf{74.02} & 75.93 & 64.41 & 52.68 & 79.17 & 72.39 & 59.10 & 82.87 & 69.51 \\
        % &  & $\pm$ 0.21 & $\pm$ 0.18 & $\pm$ 0.02 & $\pm$ 0.14 & $\pm$ 0.36 & $\pm$ 0.23 & $\pm$ 0.05 & $\pm$ 0.12 & $\pm$ 0.01 & $\pm$ 0.20 & $\pm$ 0.58 & $\pm$ 0.12 &  \\
        & m-POT(0.6) & 53.96 & 74.16 & 80.71 & 66.42 & 71.35 & 75.73 & 64.47 & 52.65 & 79.96 & 73.50 & 57.87 & 83.10 & 69.49 \\
        % &  & $\pm$ 0.64 & $\pm$ 0.05 & $\pm$ 0.11 & $\pm$ 0.24 & $\pm$ 0.46 & $\pm$ 0.19 & $\pm$ 0.30 & $\pm$ 0.23 & $\pm$ 0.07 & $\pm$ 0.04 & $\pm$ 0.57 & $\pm$ 0.03 &  \\
        & TS-OT & 53.07 & 69.29 & 75.88 & 59.17 & 66.02 & 70.66 & 56.98 & 49.21 & 75.81 & 67.51 & 56.11 & 79.80 & 64.96 \\
        % &  & $\pm$ 0.09 & $\pm$ 0.40 & $\pm$ 0.11 & $\pm$ 0.21 & $\pm$ 0.60 & $\pm$ 0.23 & $\pm$ 0.17 & $\pm$ 0.25 & $\pm$ 0.24 & $\pm$ 0.43 & $\pm$ 0.40 & $\pm$ 0.29 &  \\
        & TS-UOT & 55.98 & 72.97 & 79.86 & 64.48 & 73.14 & 75.72 & 62.78 & 54.02 & 78.79 & 70.91 & 59.54 & 82.97 & 69.26 \\
        % &  & $\pm$ 0.22 & $\pm$ 0.13 & $\pm$ 0.16 & $\pm$ 0.15 & $\pm$ 0.40 & $\pm$ 0.14 & $\pm$ 0.18 & $\pm$ 0.19 & $\pm$ 0.02 & $\pm$ 0.09 & $\pm$ 0.10 & $\pm$ 0.07 &  \\
        & TS-POT(0.6) & \textbf{57.06} & \textbf{76.13} & \textbf{81.53} & \textbf{68.44} & 72.82 & \textbf{76.53} & \textbf{66.21} & \textbf{54.87} & \textbf{80.39} & \textbf{75.57} & \textbf{60.50} & \textbf{84.31} & \textbf{71.20} \\
        % &  & $\pm$ 0.38 & $\pm$ 0.25 & $\pm$ 0.06 & $\pm$ 0.20 & $\pm$ 0.53 & $\pm$ 0.32 & $\pm$ 0.18 & $\pm$ 0.34 & $\pm$ 0.07 & $\pm$ 0.03 & $\pm$ 0.22 & $\pm$ 0.12 &  \\
        \midrule
        4 & m-OT & 51.75 & 70.01 & 75.79 & 59.60 & 66.46 & 70.07 & 57.60 & 47.88 & 75.29 & 66.82 & 55.71 & 78.11 & 64.59 \\
        % &  & $\pm$ 0.19 & $\pm$ 0.63 & $\pm$ 0.18 & $\pm$ 0.12 & $\pm$ 0.19 & $\pm$ 0.23 & $\pm$ 0.29 & $\pm$ 0.11 & $\pm$ 0.15 & $\pm$ 0.20 & $\pm$ 0.11 & $\pm$ 0.15 &  \\
        & m-UOT & 54.96 & 74.06 & 79.57 & 66.12 & \textbf{73.76} & 75.80 & 64.73 & 52.82 & 79.46 & 72.75 & 58.51 & 82.62 & 69.60 \\
        % &  & $\pm$ 0.08 & $\pm$ 0.05 & $\pm$ 0.06 & $\pm$ 0.23 & $\pm$ 0.04 & $\pm$ 0.07 & $\pm$ 0.10 & $\pm$ 0.29 & $\pm$ 0.11 & $\pm$ 0.14 & $\pm$ 0.11 & $\pm$ 0.06 &  \\
        & m-POT(0.6) & 56.14 & 74.72 & \textbf{81.27} & 65.90 & 72.42 & 75.34 & 64.30 & 52.93 & 79.98 & 73.83 & 58.49 & 83.34 & 69.89 \\
        % &  & $\pm$ 0.09 & $\pm$ 0.13 & $\pm$ 0.02 & $\pm$ 0.11 & $\pm$ 0.12 & $\pm$ 0.09 & $\pm$ 0.11 & $\pm$ 0.13 & $\pm$ 0.19 & $\pm$ 0.04 & $\pm$ 0.27 & $\pm$ 0.11 &  \\
        & TS-OT & 53.89 & 71.01 & 77.13 & 59.82 & 69.20 & 71.95 & 59.18 & 51.17 & 76.54 & 66.46 & 56.97 & 80.19 & 66.13 \\
        % &  & $\pm$ 0.13 & $\pm$ 0.24 & $\pm$ 0.01 & $\pm$ 0.27 & $\pm$ 0.19 & $\pm$ 0.07 & $\pm$ 0.14 & $\pm$ 0.11 & $\pm$ 0.03 & $\pm$ 0.26 & $\pm$ 0.13 & $\pm$ 0.16 &  \\
        & TS-UOT & 56.35 & 73.56 & 80.16 & 65.02 & 73.12 & 76.50 & 63.66 & 54.49 & 79.97 & 71.24 & 60.11 & 82.92 & 69.76 \\
        % &  & $\pm$ 0.18 & $\pm$ 0.09 & $\pm$ 0.09 & $\pm$ 0.12 & $\pm$ 0.07 & $\pm$ 0.06 & $\pm$ 0.03 & $\pm$ 0.19 & $\pm$ 0.07 & $\pm$ 0.19 & $\pm$ 0.20 & $\pm$ 0.04 &  \\
        & TS-POT(0.6) & \textbf{56.93} & \textbf{76.17} & 81.05 & \textbf{67.92} & 72.90 & \textbf{76.49} & \textbf{66.48} & \textbf{55.70} & \textbf{80.43} & \textbf{75.72} & \textbf{60.24} & \textbf{84.12} & \textbf{71.18} \\
        % &  & $\pm$ 0.63 & $\pm$ 0.08 & $\pm$ 0.04 & $\pm$ 0.13 & $\pm$ 0.19 & $\pm$ 0.13 & $\pm$ 0.12 & $\pm$ 0.32 & $\pm$ 0.09 & $\pm$ 0.14 & $\pm$ 0.21 & $\pm$ 0.12 &  \\
        \bottomrule
        \end{tabular}
    }
    \vskip -0.1in
\end{table}

\begin{table}[!t]
    \caption{Comparison between two mini-batch algorithms on the VisDA dataset when increasing the number of mini-batches from 1 to 4. Column ``All" indicates the accuracy when performing classification on all 12 classes. The following 12 columns represent the precision when classifying each class separately. The last column ``Avg" shows the average precision over 12 classes. Each entry includes the average value over 3 runs. For m-POT and TS-POT, the value of $s$ is the number in the bracket next to the name.}
    \vskip 0.1in
    \label{table:DA_efficient_visda_detail}
    \centering
    \scalebox{0.8}{
        \begin{tabular}{cccccccccccccccc}
        \toprule
        k & Method & All & plane & bcycl & bus & car & house & knife & mcycl & person & plant & sktbrd & train & truck & Avg \\
        \midrule
        1 & m-OT & 60.51 & 74.70 & 70.11 & 69.75 & 63.99 & 81.70 & 14.10 & 58.50 & 70.20 & 63.16 & 23.69 & 48.08 & 48.96 & 57.25 \\
        % &  & $\pm$ 1.64 & $\pm$ 3.44 & $\pm$ 10.26 & $\pm$ 3.27 & $\pm$ 3.65 & $\pm$ 6.95 & $\pm$ 4.47 & $\pm$ 3.20 & $\pm$ 3.94 & $\pm$ 2.73 & $\pm$ 4.03 & $\pm$ 4.68 & $\pm$ 6.61 & $\pm$ 1.90 \\
        & m-UOT & 71.58 & 95.32 & \textbf{94.15} & 67.32 & 73.22 & 92.20 & 7.34 & 65.78 & 75.38 & 83.95 & 40.11 & 66.54 & 46.70 & 67.34 \\
        % &  & $\pm$ 0.11 & $\pm$ 1.34 & $\pm$ 0.53 & $\pm$ 2.45 & $\pm$ 1.49 & $\pm$ 0.61 & $\pm$ 0.82 & $\pm$ 2.78 & $\pm$ 0.36 & $\pm$ 3.20 & $\pm$ 0.61 & $\pm$ 1.92 & $\pm$ 1.77 & $\pm$ 0.27 \\
        & m-POT(0.75) & 73.13 & 92.41 & 93.22 & 72.25 & 74.81 & \textbf{93.05} & \textbf{50.31} & 60.63 & 75.11 & 80.16 & 36.23 & 69.23 & 65.49 & 71.91 \\
        % &  & $\pm$ 0.01 & $\pm$ 2.22 & $\pm$ 1.30 & $\pm$ 1.98 & $\pm$ 1.10 & $\pm$ 0.31 & $\pm$ 31.46 & $\pm$ 3.02 & $\pm$ 0.77 & $\pm$ 2.96 & $\pm$ 2.56 & $\pm$ 0.79 & $\pm$ 3.30 & $\pm$ 1.58 \\
        & TS-OT & 58.34 & 71.57 & 74.48 & 65.41 & 62.76 & 83.40 & 29.59 & 49.14 & 78.59 & 76.03 & 21.71 & 52.18 & 31.13 & 58.00 \\
        % &  & $\pm$ 0.67 & $\pm$ 7.35 & $\pm$ 13.19 & $\pm$ 3.97 & $\pm$ 5.90 & $\pm$ 3.03 & $\pm$ 19.95 & $\pm$ 4.50 & $\pm$ 4.03 & $\pm$ 1.02 & $\pm$ 6.84 & $\pm$ 9.58 & $\pm$ 5.04 & $\pm$ 3.82 \\
        & TS-UOT & 69.76 & 84.77 & 86.89 & \textbf{73.15} & 70.59 & 87.94 & 10.85 & 60.44 & \textbf{80.61} & 80.65 & 30.82 & 63.46 & 52.49 & 65.22 \\
        % &  & $\pm$ 0.78 & $\pm$ 2.96 & $\pm$ 1.87 & $\pm$ 3.00 & $\pm$ 1.40 & $\pm$ 3.53 & $\pm$ 2.49 & $\pm$ 4.71 & $\pm$ 4.50 & $\pm$ 3.91 & $\pm$ 2.78 & $\pm$ 1.96 & $\pm$ 1.05 & $\pm$ 0.13 \\
        & TS-POT(0.75) & \textbf{75.40} & \textbf{95.52} & 92.35 & 66.76 & \textbf{76.97} & 91.15 & 22.44 & \textbf{71.02} & 70.79 & \textbf{84.02} & \textbf{40.87} & \textbf{75.01} & \textbf{78.85} & \textbf{72.15} \\
        % &  & $\pm$ 0.43 & $\pm$ 1.73 & $\pm$ 0.71 & $\pm$ 1.59 & $\pm$ 0.94 & $\pm$ 0.64 & $\pm$ 26.06 & $\pm$ 1.81 & $\pm$ 3.12 & $\pm$ 2.68 & $\pm$ 4.59 & $\pm$ 3.91 & $\pm$ 3.42 & $\pm$ 1.28 \\
        \midrule
        2 & m-OT & 61.83 & 68.92 & 81.26 & 67.97 & 72.33 & 89.85 & 24.80 & 63.28 & 68.45 & 52.87 & 26.44 & 50.31 & 47.22 & 59.48 \\
        % &  & $\pm$ 2.32 & $\pm$ 1.79 & $\pm$ 3.47 & $\pm$ 4.83 & $\pm$ 1.67 & $\pm$ 2.23 & $\pm$ 16.22 & $\pm$ 3.15 & $\pm$ 1.45 & $\pm$ 8.82 & $\pm$ 1.39 & $\pm$ 2.87 & $\pm$ 7.21 & $\pm$ 2.56 \\
        & m-UOT & 71.22 & 91.80 & 93.02 & 72.47 & 73.15 & 92.36 & 7.35 & 68.39 & 73.67 & 76.86 & 36.74 & 62.63 & 48.73 & 66.43 \\
        % &  & $\pm$ 0.50 & $\pm$ 1.59 & $\pm$ 0.90 & $\pm$ 0.52 & $\pm$ 2.55 & $\pm$ 1.42 & $\pm$ 0.53 & $\pm$ 1.53 & $\pm$ 2.69 & $\pm$ 2.54 & $\pm$ 1.13 & $\pm$ 3.49 & $\pm$ 5.74 & $\pm$ 0.60 \\
        & m-POT(0.75) & 72.84 & 90.10 & 93.61 & 68.46 & 73.04 & 91.51 & \textbf{77.50} & 60.38 & 75.28 & 79.12 & 34.12 & 70.70 & 59.76 & 72.80 \\
        % &  & $\pm$ 0.32 & $\pm$ 1.94 & $\pm$ 0.50 & $\pm$ 0.42 & $\pm$ 0.78 & $\pm$ 1.18 & $\pm$ 1.55 & $\pm$ 0.83 & $\pm$ 0.41 & $\pm$ 1.46 & $\pm$ 2.25 & $\pm$ 2.42 & $\pm$ 1.31 & $\pm$ 0.18 \\
        & TS-OT & 66.10 & 80.24 & 82.90 & 68.14 & 73.93 & 87.10 & 13.05 & 64.59 & 68.61 & 75.89 & 26.08 & 50.74 & 49.11 & 61.70 \\
        % &  & $\pm$ 0.27 & $\pm$ 0.38 & $\pm$ 4.79 & $\pm$ 2.91 & $\pm$ 1.74 & $\pm$ 4.04 & $\pm$ 4.60 & $\pm$ 1.56 & $\pm$ 1.51 & $\pm$ 0.40 & $\pm$ 2.73 & $\pm$ 2.24 & $\pm$ 8.71 & $\pm$ 0.96 \\
        & TS-UOT & 70.29 & 90.82 & 88.82 & 69.18 & 73.28 & 91.59 & 8.25 & 63.92 & \textbf{77.55} & 80.94 & \textbf{38.54} & 58.20 & 54.57 & 66.31 \\
        % &  & $\pm$ 0.61 & $\pm$ 2.28 & $\pm$ 1.34 & $\pm$ 4.27 & $\pm$ 1.87 & $\pm$ 1.57 & $\pm$ 0.91 & $\pm$ 5.88 & $\pm$ 2.82 & $\pm$ 1.99 & $\pm$ 2.10 & $\pm$ 5.98 & $\pm$ 7.38 & $\pm$ 1.16 \\
        & TS-POT(0.75) & \textbf{75.28} & \textbf{94.31} & \textbf{95.10} & \textbf{73.56} & \textbf{76.20} & \textbf{92.55} & 70.83 & \textbf{67.48} & 68.26 & \textbf{83.99} & 35.62 & \textbf{73.00} & \textbf{74.27} & \textbf{75.43} \\
        % &  & $\pm$ 1.01 & $\pm$ 1.44 & $\pm$ 0.67 & $\pm$ 3.48 & $\pm$ 0.62 & $\pm$ 0.21 & $\pm$ 14.26 & $\pm$ 2.49 & $\pm$ 2.62 & $\pm$ 0.62 & $\pm$ 4.17 & $\pm$ 1.76 & $\pm$ 3.54 & $\pm$ 0.52 \\
        \midrule
        4 & m-OT & 62.42 & 68.93 & 87.79 & 80.58 & 69.59 & 87.72 & 15.88 & 47.93 & 75.71 & 62.84 & 23.17 & 56.96 & 52.87 & 60.83 \\
        % &  & $\pm$ 0.12 & $\pm$ 3.03 & $\pm$ 2.78 & $\pm$ 3.87 & $\pm$ 0.36 & $\pm$ 1.46 & $\pm$ 5.85 & $\pm$ 1.28 & $\pm$ 0.58 & $\pm$ 2.02 & $\pm$ 2.86 & $\pm$ 1.70 & $\pm$ 2.48 & $\pm$ 0.68 \\
        & m-UOT & 72.34 & 91.47 & 92.53 & 74.53 & 74.19 & 90.97 & 6.95 & 65.24 & 73.36 & 79.01 & 37.09 & 67.74 & 62.34 & 67.95 \\
        % &  & $\pm$ 0.32 & $\pm$ 1.55 & $\pm$ 0.72 & $\pm$ 1.25 & $\pm$ 0.89 & $\pm$ 1.13 & $\pm$ 1.40 & $\pm$ 2.15 & $\pm$ 0.05 & $\pm$ 0.42 & $\pm$ 1.56 & $\pm$ 0.90 & $\pm$ 0.65 & $\pm$ 0.25 \\
        & m-POT(0.75) & 73.59 & 91.51 & 93.06 & 71.30 & 75.11 & \textbf{92.97} & \textbf{70.66} & 62.14 & 75.66 & 80.93 & 33.30 & 67.23 & 61.88 & 72.98 \\
        % &  & $\pm$ 0.58 & $\pm$ 1.04 & $\pm$ 0.54 & $\pm$ 2.88 & $\pm$ 1.05 & $\pm$ 0.82 & $\pm$ 2.62 & $\pm$ 2.12 & $\pm$ 1.44 & $\pm$ 0.76 & $\pm$ 1.26 & $\pm$ 4.42 & $\pm$ 5.71 & $\pm$ 0.39 \\
        & TS-OT & 69.14 & 85.31 & 85.32 & \textbf{74.67} & 71.65 & 88.54 & 6.18 & 56.90 & \textbf{80.92} & 84.74 & 28.62 & 58.84 & 61.87 & 65.30 \\
        % &  & $\pm$ 0.72 & $\pm$ 1.14 & $\pm$ 2.23 & $\pm$ 2.47 & $\pm$ 0.82 & $\pm$ 1.88 & $\pm$ 1.69 & $\pm$ 2.05 & $\pm$ 1.63 & $\pm$ 1.73 & $\pm$ 1.54 & $\pm$ 2.73 & $\pm$ 2.22 & $\pm$ 0.10 \\
        & TS-UOT & 70.91 & 86.59 & 87.42 & 73.12 & 70.81 & 90.83 & 8.80 & 62.99 & 76.36 & 81.80 & 36.61 & 61.16 & 63.29 & 66.65 \\
        % &  & $\pm$ 0.11 & $\pm$ 2.64 & $\pm$ 3.30 & $\pm$ 1.33 & $\pm$ 0.65 & $\pm$ 0.64 & $\pm$ 2.79 & $\pm$ 0.38 & $\pm$ 1.29 & $\pm$ 3.12 & $\pm$ 1.99 & $\pm$ 2.74 & $\pm$ 1.94 & $\pm$ 0.37 \\
        & TS-POT(0.75) & \textbf{75.96} & \textbf{94.46} & \textbf{95.35} & 68.83 & \textbf{76.32} & 92.49 & 56.81 & \textbf{67.95} & 66.17 & \textbf{84.79} & \textbf{38.01} & \textbf{73.60} & \textbf{89.62} & \textbf{75.37} \\
        % &  & $\pm$ 0.44 & $\pm$ 0.16 & $\pm$ 0.39 & $\pm$ 0.50 & $\pm$ 0.70 & $\pm$ 0.68 & $\pm$ 0.60 & $\pm$ 0.95 & $\pm$ 1.29 & $\pm$ 1.46 & $\pm$ 7.97 & $\pm$ 0.70 & $\pm$ 4.16 & $\pm$ 0.59 \\
        \bottomrule
        \end{tabular}
    }
    \vskip -0.1in
\end{table}

\subsection{Ablation studies on Domain Adaptation}
In this section, we first discuss how to reduce the misspecified matching problem and improve the performance of DA. Next, we verify the previous hypotheses by answering two questions: 1. How the number of mini-batches $k$ affect the choice of the fraction of masses $s$ for m-POT and the performance of mini-batch methods? 2. How the mini-batch size $m$ affect the choice of the fraction of masses $s$ for m-POT and the performance of mini-batch methods?

\textbf{How to mitigate the misspecified matching: } For m-POT, the misspecified matchings can be reduced when the mini-batch size $m$ increases. Namely, the probability of a pair of mini-batches that contains a global optimal matching increases when $m$ increases. However, this property does not hold for m-OT since m-OT can not detect global optimal matchings when they exist. On the other hand, the number of mini-batches $k$ cannot affect the behaviors of misspecified matchings. However, increasing $k$ helps to reduce variances of the stochastic gradient of mini-batch methods, which can improve the performance of DA. 

\begin{figure*}[t]


    \begin{center}
        \begin{tabular}{ccc}
            \widgraph{0.3\textwidth}{images/VisDA_change_mass.pdf} & \widgraph{0.3\textwidth}{images/VisDA_change_k.pdf} &
            \widgraph{0.3\textwidth}{images/VisDA_change_m.pdf}
        \end{tabular}
    \end{center}
    \vskip -0.1in
    \caption{
        \footnotesize{Ablation studies on the VisDA dataset. (Left) The effect of the number of mini-batches $k$ on the choice of the fraction of masses $s$ for m-POT. (Middle) The effect of the number of mini-batches $k$ on the performance of mini-batch methods. (Right) The effect of mini-batch size $m$ on the choice of the fraction of masses $s$ for m-POT.}} 
    \label{fig:ablation_studies_VisDA}
    \vskip -0.1in
\end{figure*} 

\textbf{Ablation studies on the number of mini-batches $k$: } In this experiment, we vary the number of mini-batches $k \in \{ 1, 2, 4\}$ while the mini-batch size $m$ is fixed to 72. The value of the fraction of masses $s$ is searched in the set $\{ 0.5, 0.55, \ldots, 0.95\}$. Figure~\ref{fig:ablation_studies_VisDA} (left) illustrates that the optimal mass is $0.75$ for all three choices of $k$ on the VisDA dataset. This is expected because increasing $k$ does not reduce the number of misspecified matchings. Next, we fix the fraction of masses $s$ to $0.75$ for m(TS)-POT. Figure~\ref{fig:ablation_studies_VisDA} (middle) shows the classification accuracies of different mini-batch methods when changing the number of mini-batches. Generally, increasing the number of mini-batches does increase the performance of almost all mini-batch methods. In other words, all methods achieve the best performance in the case of $k = 4$. When comparing between mini-batch methods, TS-POT yields better performance for all three choices of $k$. 
% Detailed results and additional experiments on Office-Home can be found in Appendix~\ref{subsec:addexp_DA}.

% \begin{figure}[ht]
%     \begin{center}
%         \centerline{\includegraphics[width=0.7\columnwidth]{images/VisDA_change_mass.pdf}}
%     \end{center}
%     \vskip -0.3in
%     \caption{
%         \footnotesize{The effect of the number of mini-batches $k$ on the choice of the fraction of masses $s$ for m-POT on the VisDA dataset.}
%     } 
%     \label{fig:VisDA_change_mass}
%     \vskip -0.1in
% \end{figure}

% \begin{figure}[ht]
%     \begin{center}
%         \centerline{\includegraphics[width=0.7\columnwidth]{images/VisDA_change_k.pdf}}
%     \end{center}
%     \vskip -0.3in
%     \caption{
%         \footnotesize{The effect of the number of mini-batches $k$ on the performance of mini-batch methods on the VisDA dataset.}
%     } 
%     \label{fig:VisDA_change_k}
%     \vskip -0.1in
% \end{figure}

\textbf{Ablation studies on the mini-batch size $m$: } We run DA experiments with three different mini-batch sizes $m \in \{ 36, 72, 144 \}$. The fraction of masses $s$ is chosen from the set $\{ 0.7, 0.75, 0.8 \}$. Figure~\ref{fig:ablation_studies_VisDA} (right) illustrates the classification accuracy of m-POT and TS-POT for different combinations of $m$ and $s$. It can be seen that the optimal value of $s$ for $m = 36, 72, $ and $144$ are $0.7, 0.75,$ and $0.8$, respectively. The reason is increasing the mini-batch size $m$ reduces the number of misspecified matchings, thus increases the fraction of masses $s$. Next, we fix the fraction of masses $s$ for m(TS)-POT to its optimal value for each mini-batch size. Table~\ref{table:VisDA_change_m} demonstrates the performance of different mini-batch methods on VisDA when changing the mini-batch size. As can be seen from the table, increasing $m$ enhances the accuracy for both m-POT and TS-POT, which match our discussion above. Interestingly, other mini-batch methods including m-OT, m-UOT, TS-OT, and TS-UOT also experience the same effect. In addition, TS-POT consistently outperforms other methods for all choices of $m$.

% \begin{figure}[ht]
%     \begin{center}
%         \centerline{\includegraphics[width=0.7\columnwidth]{images/VisDA_change_m.pdf}}
%     \end{center}
%     \vskip -0.3in
%     \caption{
%         \footnotesize{The effect of mini-batch size $m$ on the choice of the fraction of masses $s$ for m-POT on the VisDA dataset.}
%     } 
%     \label{fig:VisDA_change_m}
%     \vskip -0.2in
% \end{figure}

\begin{table}[t]
    \caption{Performance of mini-batch methods on VisDA when varying the mini-batch size $m$. Experiments were run 3 times.}
    \vskip 0.1in
    \label{table:VisDA_change_m}
    \centering
    \scalebox{1.0}{
        \begin{tabular}{cccc}
        \toprule
        Method & $m = 36$ & $m = 72$ & $m = 144$ \\
        \midrule
        m-OT & 58.71 $\pm$ 3.21 & 62.42 $\pm$ 0.12 & 66.51 $\pm$ 3.72 \\
        m-UOT & 70.63 $\pm$ 0.30 & 72.34 $\pm$ 0.32 & 72.77 $\pm$ 0.25 \\
        m-POT (Ours) & 72.85 $\pm$ 0.15 & 73.59 $\pm$ 0.15 & 73.67 $\pm$ 0.17 \\
        TS-OT (Ours) & 63.01 $\pm$ 1.35 & 69.14 $\pm$ 0.72 & 70.66 $\pm$ 0.39 \\
        TS-UOT (Ours) & 70.27 $\pm$ 0.38 & 70.91 $\pm$ 0.11 & 72.23 $\pm$ 0.18 \\
        TS-POT (Ours) & \textbf{75.08 $\pm$ 0.37} & \textbf{75.96 $\pm$ 0.44} & \textbf{76.17 $\pm$ 0.42} \\
        \bottomrule
        \end{tabular}
    }
    % \vskip -0.2in
\end{table}

\begin{table}[t]
    \caption{Details of PDA results on the Office-Home dataset. Entries of the table are classification accuracies in the target domain. The fraction of mass $s$ of m-POT increases linearly from 0.01 to 0.325. Detailed settings are given in Appendix~\ref{subsec:setting_PDA}.}
    \vskip 0.1in
    \label{table:PDA_office_details}
    \centering
    \scalebox{0.9}{
        \begin{tabular}{ccccccccccccccc}
        \toprule
        Run & Method & A2C & A2P & A2R & C2A & C2P & C2R & P2A & P2C & P2R & R2A & R2C & R2P & Avg \\
        \midrule
        1 & BA3US & 60.00 & 75.29 & \textbf{88.46} & 72.91 & 71.65 & \textbf{85.75} & 74.29 & 59.76 & 85.53 & 79.80 & 63.70 & 85.99 & 75.26 \\
        & mOT & 48.06 & 66.11 & 77.64 & 58.95 & 56.98 & 65.93 & 58.49 & 44.96 & 74.05 & 67.59 & 50.15 & 74.57 & 61.96 \\
        & mUOT & 61.37 & \textbf{81.68} & 84.93 & 75.48 & 72.38 & 80.51 & 75.30 & 62.21 & 87.19 & \textbf{80.62} & 67.52 & 84.31 & 76.13 \\
        & mPOT & \textbf{63.34} & 79.61 & 86.58 & \textbf{75.67} & \textbf{77.31} & 83.82 & \textbf{77.04} & \textbf{64.42} & \textbf{88.90} & \textbf{80.62} & \textbf{68.60} & \textbf{86.67} & \textbf{77.72} \\
        \midrule
        2 & BA3US & 58.27 & 80.50 & \textbf{88.02} & 71.90 & 74.40 & 81.45 & 72.54 & 59.94 & 85.37 & 78.88 & 61.85 & 85.77 & 74.91 \\
        & mOT & 48.66 & 65.88 & 77.31 & 59.14 & 58.04 & 66.59 & 58.13 & 45.43 & 73.83 & 68.32 & 49.85 & 74.17 & 62.11 \\
        & mUOT & 62.51 & 80.73 & 85.48 & 76.31 & 73.11 & 78.41 & 74.01 & 62.81 & 84.93 & 80.81 & 66.45 & 85.15 & 75.89 \\
        & mPOT & \textbf{65.97} & \textbf{83.42} & 87.63 & \textbf{78.05} & \textbf{78.66} & \textbf{82.61} & \textbf{76.13} & \textbf{64.66} & \textbf{86.80} & \textbf{82.37} & \textbf{67.58} & \textbf{86.89} & \textbf{78.40} \\
        \midrule
        3 & BA3US & 59.76 & \textbf{80.39} & \textbf{88.79} & 73.28 & 70.98 & 83.43 & 72.73 & 60.90 & 86.86 & 78.70 & 63.46 & 85.94 & 75.44 \\
        & mOT & 47.28 & 65.99 & 77.47 & 59.60 & 58.54 & 67.20 & 58.68 & 45.37 & 74.43 & 68.32 & 49.67 & 74.01 & 62.21 \\
        & mUOT & 60.72 & 78.60 & 85.59 & 75.02 & 73.17 & 80.45 & 74.38 & 60.84 & \textbf{87.36} & 80.90 & 68.18 & 85.21 & 75.87 \\
        & mPOT & \textbf{64.48} & 78.82 & 87.30 & \textbf{75.57} & \textbf{76.86} & \textbf{84.32} & \textbf{78.05} & \textbf{62.15} & 87.19 & \textbf{81.27} & \textbf{69.31} & \textbf{88.57} & \textbf{77.82} \\
        \bottomrule
        \end{tabular}
    }
\end{table}

\subsection{Partial Domain Adaptation Experiments}
\label{subsec:addexp_PDA}
We first want to emphasize that we have used the best parameter settings for m-UOT that are reported in \cite{fatras2021unbalanced} on PDA experiments. We reproduce and compare the performance of our proposed method m-POT with two mini-batch methods (m-OT and m-UOT) and one non-OT baseline (BA3US~\cite{liang2020balanced}). Table~\ref{table:PDA_office_details} demonstrates the performance of different methods on the Office-Home dataset over 3 runs. It can be seen clearly that m-POT achieves state-of-the-art classification accuracy on at least three-fourths of tasks with some considerable gaps. On the C2P scenario, for instance, m-POT leads to an increase of almost $5\%$ compared to its competitors in the first and second runs. Based on the average classification accuracy, m-POT still consistently yields the best performance in every run. 

\subsection{Deep Generative Model Experiments}
\label{subsec:exp_dgm}
OT has been widely utilized to measure the discrepancy between a parametric distribution and real data distribution in a generative model. In this experiment, m-OT and m-POT are employed for the deep generative model application on CIFAR10~\cite{krizhevsky2009learning} and CelebA~\cite{liu2015faceattributes} datasets. The experiment settings including neural network architectures and the used parameters are described in Appendix~\ref{subsec:setting_GAN}. We report the best performing checkpoint, which is measured by FID score, along with their generated images from random noise.

\textbf{Comparison between m-OT and m-POT:} The quantitative results are described in Table~\ref{table:GM_details} where the number in bracket indicates the fraction of masses $s$. By evaluating the FID score, we observe that m-POT yields better generators than m-OT on both CIFAR10 and CelebA. In particular, m-POT leads to a slight improvement of $1.13$ over m-OT on the CIFAR10 dataset with $m = 100$ while it yields a larger difference of $7.6$ on the CelebA dataset with $m = 200$. On the effect of the mini-batch size, when $m = 100$, there is only one choice of $s$ that leads to a lower FID score. As the mini-batch size increases, we have more options to choose a good value of $s$. Specifically, m-POT yields better performance in $2$ and $6$ different values of $s$ for CIFAR10 and CelebA datasets, respectively. The qualitative results (randomly generated images) of m-OT and m-POT are given in Figures~\ref{fig:DGM_Cifar10} and~\ref{fig:DGM_CelebA}. From these images, we can conclude that m-POT produces more realistic generated images than m-OT. Thus, m-POT should be utilized as an alternative training loss for deep generative models such as those in~\cite{tolstikhin2018wasserstein,genevay2018learning,salimans2018improving}. 

\begin{table*}[!t]
    \caption{FID scores of generators that are trained with m-OT and m-POT on CIFAR10 and CelebA datasets (smaller is better). The number of mini-batches $k$ is set to 2. For m-POT, the fraction of masses $s$ is the number in the bracket next to the name.}
    \vskip 0.1in
    \label{table:GM_details}
    \centering
    \scalebox{0.75}{
        \begin{tabular}{cccccccccccc}
        \toprule
        Dataset & m & m-OT & m-POT(0.7) &  m-POT(0.75) & m-POT(0.8) &  m-POT(0.85) & m-POT(0.9) &  m-POT(0.95) & m-POT(0.98) &  m-POT(0.99) \\
        \midrule
        CIFAR10 & 100 & 67.90 & 74.97 & 71.34 & 74.36 & 69.34 & 69.50 & 68.30 & \textbf{66.77} & 69.85 \\
        & 200 & 66.42 & 70.49 & 67.95 & 67.53 & 69.72 & 70.31 & \textbf{66.29} & 67.98 & 66.34 \\
        \midrule
        CelebA & 100 & 54.16 & 60.92 & 58.86 & 67.71 & 65.67 & \textbf{53.76} & 61.53 & 59.3 & 55.13 \\
        & 200 & 56.85 & \textbf{49.25} & 53.95 & 58.34 & 53.36 & 52.52 & 55.67 & 49.76 & 59.32 \\
        \bottomrule
        \end{tabular}
    }
    \vskip -0.1in
\end{table*}
\begin{figure*}[!t]
    \begin{center}
        \begin{tabular}{c}
        \widgraph{0.9\textwidth}{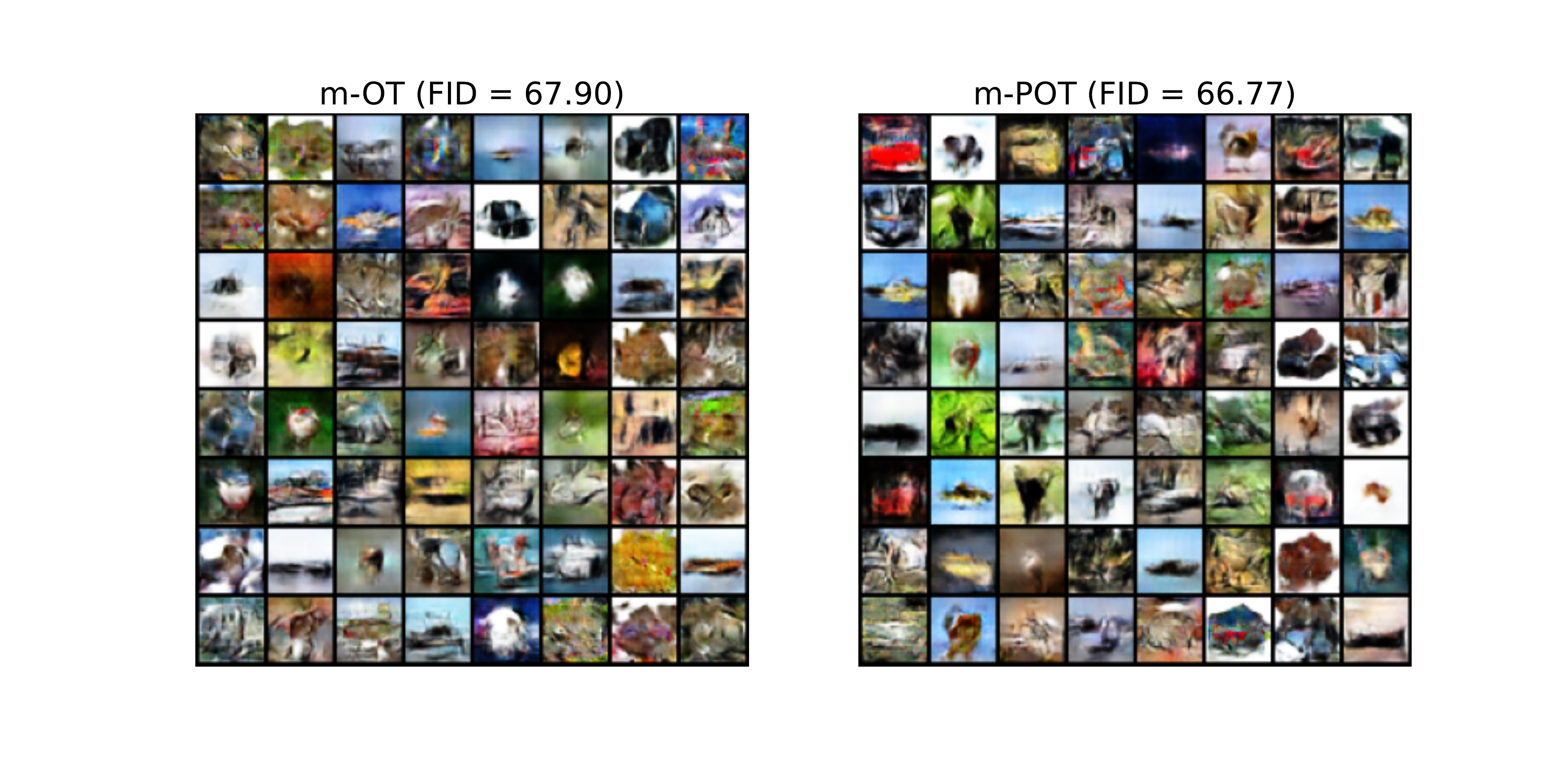} 
        \end{tabular}
    \end{center}
    \vskip -0.5in
    \caption{
        \footnotesize{CIFAR10 generated images from m-OT and m-POT (best choice of $s$), (k,m) = (2,100). The FID scores are reported in the title of the images.}
    }
    \label{fig:DGM_Cifar10}
    \vskip -0.25in
\end{figure*}

\begin{figure}[t!]
    \begin{center}
        \begin{tabular}{c}
        \widgraph{0.9\textwidth}{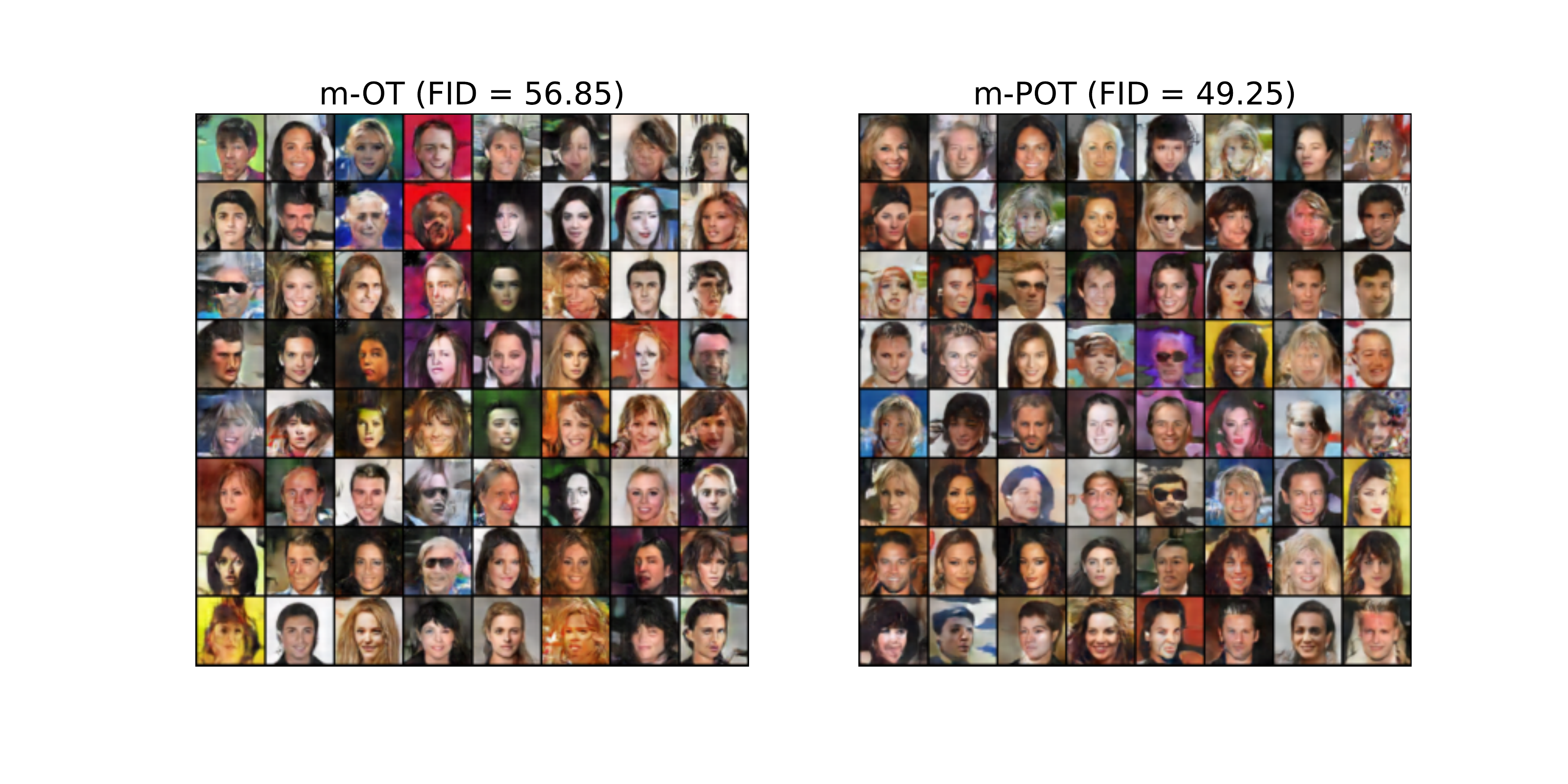} 
        \end{tabular}
    \end{center}
    \vskip -0.5in
    \caption{
        \footnotesize{CelebA generated images from m-OT and m-POT (best choice of $s$), (k,m) = (2,200). The FID scores are reported in the title of the images.}
    } 
    \label{fig:DGM_CelebA}
    % \vskip -0.25in
\end{figure}

\textbf{Applications of m-UOT: } We have not been able to achieve a good enough setting for m-UOT (a good enough setting means a setting that can provide a generative model that can generate reasonable images). To our best knowledge, there are not any generative models that are implemented with m-UOT, hence, we leave this comparison to future works. 

\begin{figure*}[!t] 
\begin{center}

  \begin{tabular}{c}
\widgraph{0.95\textwidth}{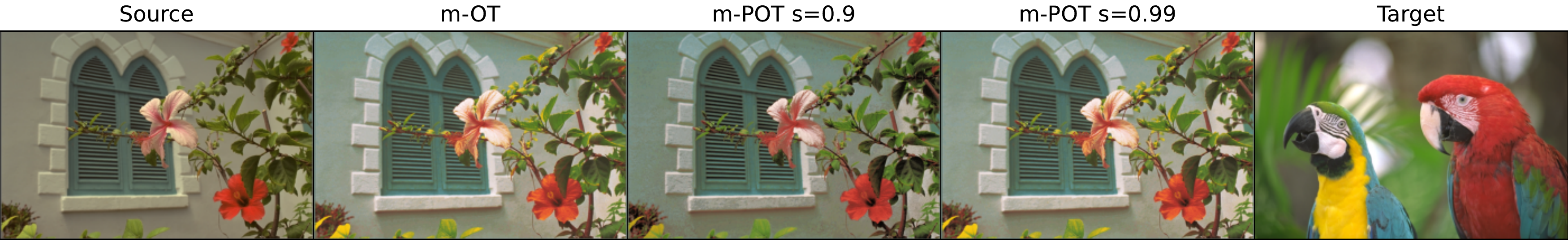}
\\
\widgraph{0.95\textwidth}{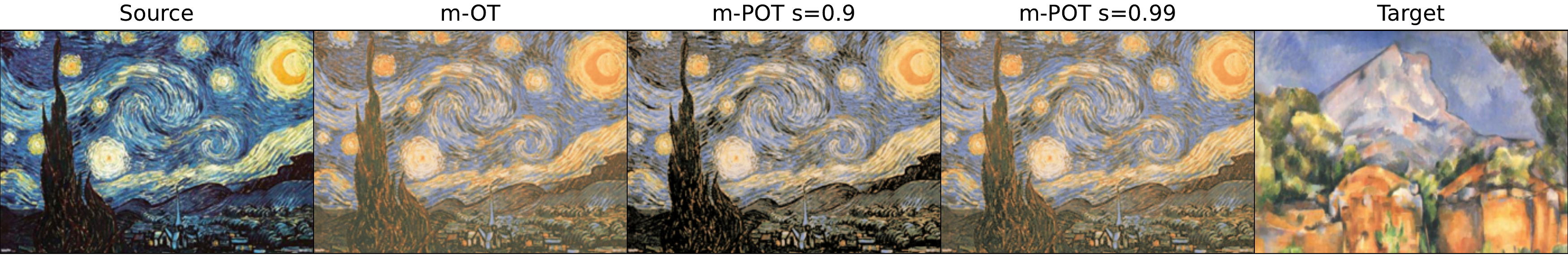} 
  \end{tabular}
  \end{center}
  \vskip -0.1in
  \caption{
  \footnotesize{Color Transfer results from m-OT, m-POT with $k=10000, m=100$. The leftmost image is the source image and the rightmost image is the target image. In between images transferred images with the corresponding name of the method on top.
}
} 
  \label{fig:colortransfer_main}
  \vskip -0.1in
\end{figure*}

\subsection{Color Transfer}
\label{subsec:color_transfer}
 
In this application, we run m-OT and m-POT to transfer the color of images on two domains: arts and natural images. We show some transferred images from the m-OT and m-POT with two values of $s$ ($0.9$ and $0.99$) in Figure~\ref{fig:colortransfer_main}. We observe that m-POT with the right choice of parameter $s$ generates more beautiful images than the m-OT, namely, m-POT ignores some color is too different between two images. Also, m-POT can work well on both two types of image domains. For m-UOT, We have tried to run m-UOT in this application, however, we have not found the setting that UOT can provide reasonable results yet.

\subsection{Gradient Flow}
\label{subsec:gflow}

\begin{figure*}[!t]
    \begin{center}
    \begin{tabular}{c}
        \widgraph{0.95\textwidth}{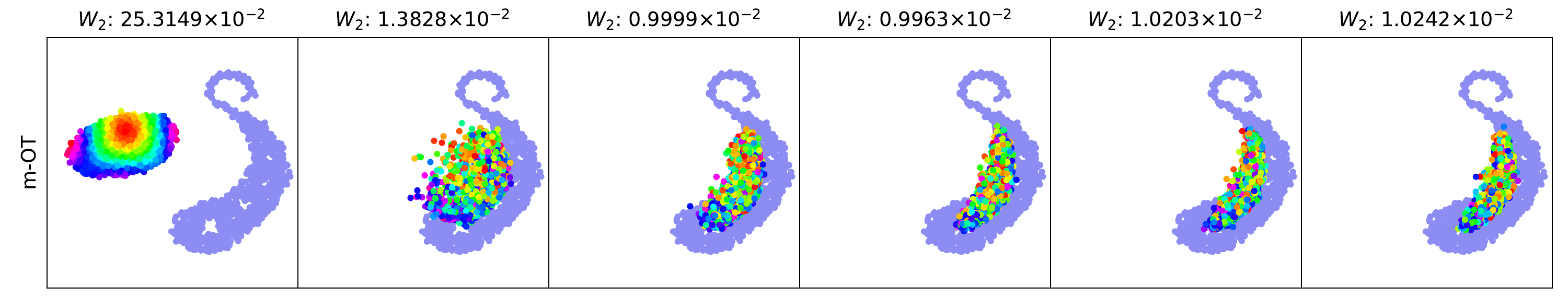} \\
        \widgraph{0.95\textwidth}{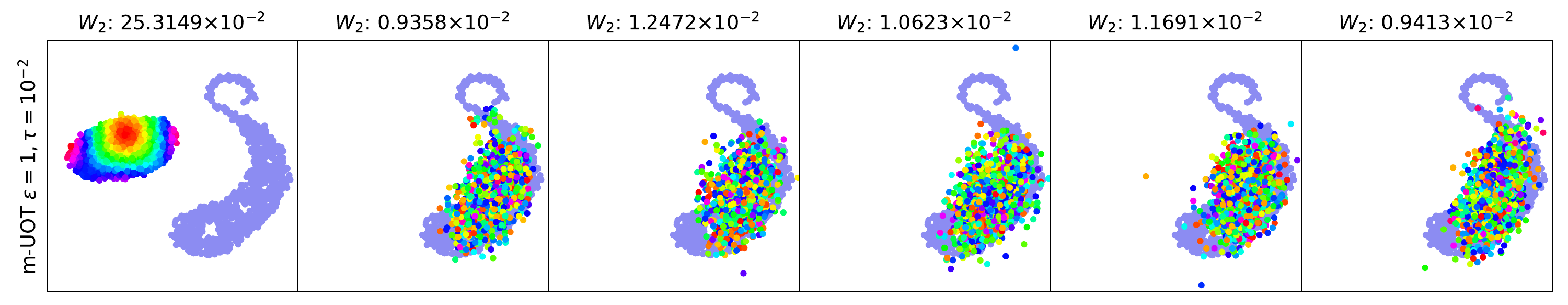} \\
        \widgraph{0.95\textwidth}{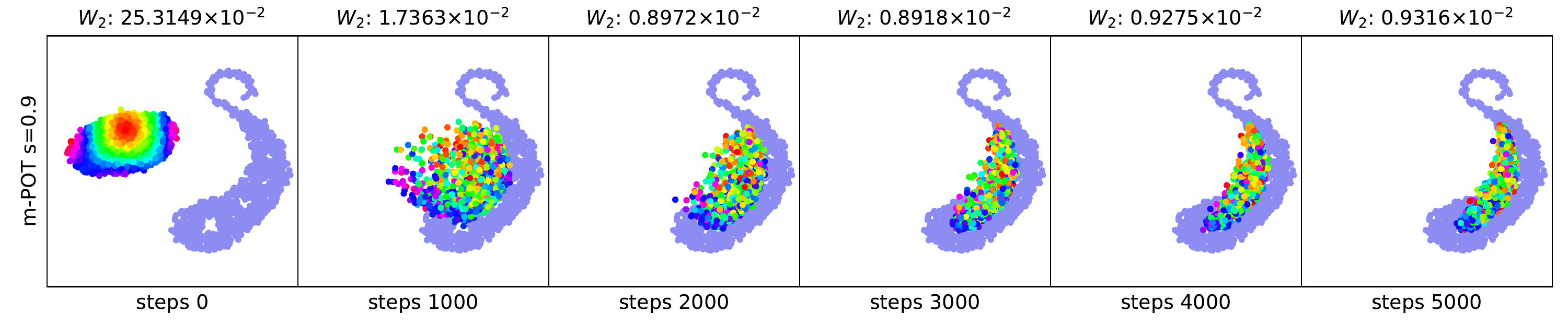} \\
        \widgraph{0.95\textwidth}{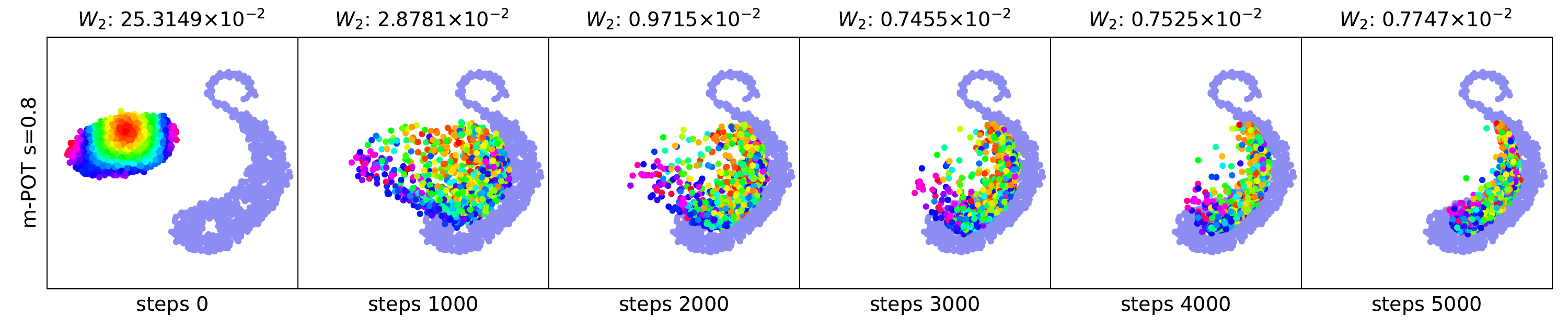}    
    \end{tabular}
    \end{center}
    \vskip -0.1in
    \caption{
        \footnotesize{Gradient flows from different mini-batch methods. The Wasserstein-2 distance between two measures in each step is reported at the top of the corresponding image.}
    } 
    \label{fig:gradientflow_main}
    \vskip -0.1in
\end{figure*}
 
We compare m-OT, m-UOT, and m-POT in a toy example as in \cite{feydy2019interpolating} and present our results in Figure~\ref{fig:gradientflow_main}. In short, we want to move the colorful empirical measure to the "S-shape" measure. Each measure has 1000 support points. Here, we choose $(k,m)=(4,4)$ and learning rate is set to $0.001$. We use entropic regularization \cite{chizat2018scaling} to solve m-UOT, we choose the best entropic regularization parameter $\epsilon \in \{0.1, 1, 2, 5, 10 \}$ and marginal relaxation parameter $\tau \in \{0.001, 0.01, 0.1, 1, 2, 5, 10\}$. From the figure, we observe that m-POT yields better flows than both m-OT and m-UOT, namely, the final Wasserstein-2 score of m-POT is the lowest. Interestingly, we see that reducing the amount of masses $s$ in m-POT from $0.9$ to $0.8$ improves considerably the result. Here, we observe that although lower values of $s$ provide a better generative model at the end, it can make the generative model learn slower. It suggests that $s$ should be changed adaptively in training processes. We will leave this question for future works.

% \begin{figure*}[!h]
% \begin{center}

%   \begin{tabular}{c}
% \widgraph{0.95\textwidth}{}
% \\
% \widgraph{0.95\textwidth}{}
% \\
% \widgraph{0.95\textwidth}{} 
%  \\
% \widgraph{0.95\textwidth}{}    
%   \end{tabular}
%   \end{center}
% %   \vskip -0.1in
%   \caption{
%   \footnotesize{Gradient flow  bomb
% }
% } 
%   \label{fig:gradientflow_bomb_main}
% %   \vskip -0.2in
% \end{figure*}

% \begin{figure*}[!h]
% \begin{center}

%   \begin{tabular}{c}
% \widgraph{0.95\textwidth}{}
% \\
% \widgraph{0.95\textwidth}{} 
    
%   \end{tabular}
%   \end{center}
%   \vskip -0.1in
%   \caption{
%   \footnotesize{Color Transfer appendix
% }
% } 
%   \label{fig:colortransfer_appendix}
%   \vskip -0.2in
% \end{figure*}

\subsection{Equivalent Optimal Transport problem}
\label{subsec:equi_OT}

\begin{table}[t!]
    \parbox{.55\linewidth}{
        \caption{DA performance on digits datasets for m-OT and m-UOT with one more sample in a mini-batch. Experiments were run 3 times.}
        \vskip 0.1in
        \label{table:DA_digits_equi}
        \centering
        \scalebox{0.7}{
            \begin{tabular}{ccccc}
            \toprule
            Method & SVHN to MNIST & USPS to MNIST & MNIST to USPS & Avg \\
            \midrule
            m-OT(m=501) & 94.05 $\pm$ 0.05 & 96.70 $\pm$ 0.29 & 87.26 $\pm$ 0.64 & 92.67 \\
            m-UOT(m=501) & 98.82 $\pm$ 0.11 & 98.56 $\pm$ 0.18 & 95.72 $\pm$ 0.10 & 97.70 \\
            m-POT(m=500) & \textbf{98.98 $\pm$ 0.08} & \textbf{98.63 $\pm$ 0.13} & \textbf{96.04 $\pm$ 0.2} & \textbf{97.88} \\
            \bottomrule
            \end{tabular}
        }
    }
    \hfill
    \parbox{.4\linewidth}{
        \caption{DA performance on the VisDA dataset for m-OT and m-UOT with one more sample in a mini-batch. Experiments were run 3 times.}
        \vskip 0.1in
        \label{table:DA_visda_equi}
        \centering
        \scalebox{0.7}{
            \begin{tabular}{cc}
            \toprule
            Method & Accuracy \\
            \midrule
            m-OT(m=73) & 57.90 $\pm$ 2.81 \\
            m-UOT(m=73) & 71.44 $\pm$ 0.46 \\
            m-POT(m=72) & \textbf{72.72 $\pm$ 0.30} \\
            \bottomrule
            \end{tabular}
        }
    }
\end{table}

As discussed in Section~\ref{subsec:mPOT}, m-POT with mini-batch  size $m$ is equivalent to m-OT with mini-batch size $m+1$. Therefore, we run m-OT and m-UOT with the increasing mini-batch size to compare the performance. The classification accuracies on target datasets on the deep DA are reported in Tables~\ref{table:DA_digits_equi}-\ref{table:DA_office_equi}. The results for the deep generative model can be found in Table~\ref{table:GM_details_equi}. It can be observed clearly that m-POT still outperforms the corresponding version of m-OT on both deep domain adaptation and deep generative model applications. In addition, m-POT consistently yields higher performance than m-UOT on all datasets. For the deep generative model, m-POT leads to a significant drop of 11.79 in the FID score compared to m-OT when $m=100$.

\subsection{Timing information}
\label{subsec:time}
The following timing benchmarks for deep domain adaptation and deep generative model experiments are done on an NVIDIA Tesla V100 GPU.

\begin{table}[t!]
    \caption{DA performance of equivalent problems on the Office-Home dataset for m-OT and m-UOT with one more sample in a mini-batch. Experiments were run 3 times.}
    \vskip 0.1in
    \label{table:DA_office_equi}
    \centering
    \scalebox{0.8}{
        \begin{tabular}{cccccccccccccc}
        \toprule
        Method & A2C & A2P & A2R & C2A & C2P & C2R & P2A & P2C & P2R & R2A & R2C & R2P & Avg \\
        \midrule
        m-OT(m=66) & 49.92 & 68.63 & 73.81 & 57.78 & 64.23 & 67.36 & 55.83 & 45.93 & 73.46 & 64.83 & 53.86 & 77.03 & 62.72 \\
        & $\pm$ 0.26 & $\pm$ 0.80 & $\pm$ 0.25 & $\pm$ 0.52 & $\pm$ 0.26 & $\pm$ 0.28 & $\pm$ 0.12 & $\pm$ 0.55 & $\pm$ 0.09 & $\pm$ 0.33 & $\pm$ 0.60 & $\pm$ 0.27 & \\ 
        m-UOT(m=66) & 55.22 & \textbf{74.28} & 80.74 & 64.92 & 73.93 & 74.95 & \textbf{65.17} & 52.92 & 80.11 & 74.14 & 58.99 & 83.18 & 69.88 \\
        & 0.24 & $\pm$ 0.26 & $\pm$  0.07 & $\pm$ 0.28 & $\pm$ 0.31 & $\pm$ 0.08 & $\pm$ 0.20 & $\pm$ 0.10 & $\pm$ 0.01 & $\pm$ 0.18 & $\pm$ 0.11 & $\pm$ 0.18 & \\
        m-POT(m=65) & \textbf{55.65} & 73.80 & \textbf{80.76} & \textbf{66.34} & \textbf{74.88} & \textbf{76.16} & 64.46 & \textbf{53.38} & \textbf{80.60} & \textbf{74.55} & \textbf{59.71} & \textbf{83.81} & \textbf{70.34} \\
        & $\pm$ 0.45  & $\pm$ 0.07  & $\pm$ 0.09  & $\pm$ 0.13  & $\pm$ 0.08  & $\pm$ 0.21  & $\pm$ 0.08  & $\pm$ 0.18  & $\pm$ 0.03  & $\pm$ 0.13  & $\pm$ 0.10  & $\pm$ 0.07 & \\
        \bottomrule
        \end{tabular}
    }
\end{table}

\begin{table}[t!]
    \caption{FID scores (smaller is better) of m-OT with one more sample in a mini-batch on CIFAR10 and CelebA datasets.}
    \vskip 0.1in
    \label{table:GM_details_equi}
    \centering
    \scalebox{0.95}{
        \begin{tabular}{ccccccc}
        \toprule
        Dataset  & \multicolumn{3}{c}{CIFAR10} & \multicolumn{3}{c}{CelebA} \\
        \cmidrule(lr){2-7}
        & m-OT (m+1) & m-POT(m) & Improvement & m-OT(m+1) & m-POT(m) & Improvement \\
        \midrule
        m = 100 & 69.06 & \textbf{66.77} & \textbf{2.29 }& 65.55 &\textbf{ 53.76} & \textbf{11.79} \\
        m = 200 & 66.35 & \textbf{66.29} & \textbf{0.06} & 50.64 & \textbf{49.25} & \textbf{1.39} \\
        \bottomrule
        \end{tabular}
    }
\end{table}

\textbf{Deep domain adaptation: } Figure~\ref{fig:deepDA_timing} illustrates the running speed of different mini-batch methods on digits datasets. Due to the simplicity of the problem, m-OT is the fastest method in all experiments, followed by m-UOT. m-POT is generally slower than the other two methods, this is consistent with our analysis in Section~\ref{subsec:mPOT}. An interesting result is that the running speed of m-POT decreases as the fraction of masses $s$ increases. When $s$ approaches 1, the m-POT problem also approaches the m-OT problem, thus it becomes easier to compute. Table~\ref{table:two_stage_timing} describes the average time to run one iteration on Office-Home and VisDA datasets. In terms of the mini-batch loss, OT is consistently the fastest method on these two large datasets. Still behind OT, POT, however, runs slightly faster than UOT in this case. Compared to the old implementation, the new implementation which solves a large transportation problem on the CPU level requires almost twice as much time as the conventional implementation.

\textbf{Deep generative model: } The computational speed of the deep generative model is illustrated in Table~\ref{table:DGM_timing}. Both m-OT and m-POT share the same speed when the mini-batch size $m$ is set to $100$. For $m=200$, a training epoch of m-OT consumes 7 and 10 seconds fewer than that of m-POT on CIFAR10 and CelebA datasets, respectively. 
% Different from the deep domain adaptation experiments, the speed does not change much when varying the fraction of masses $s$ because the running time is dominated by other part.

\begin{figure*}[!t] 
    \begin{center}
    \begin{tabular}{c}
        \widgraph{0.95\textwidth}{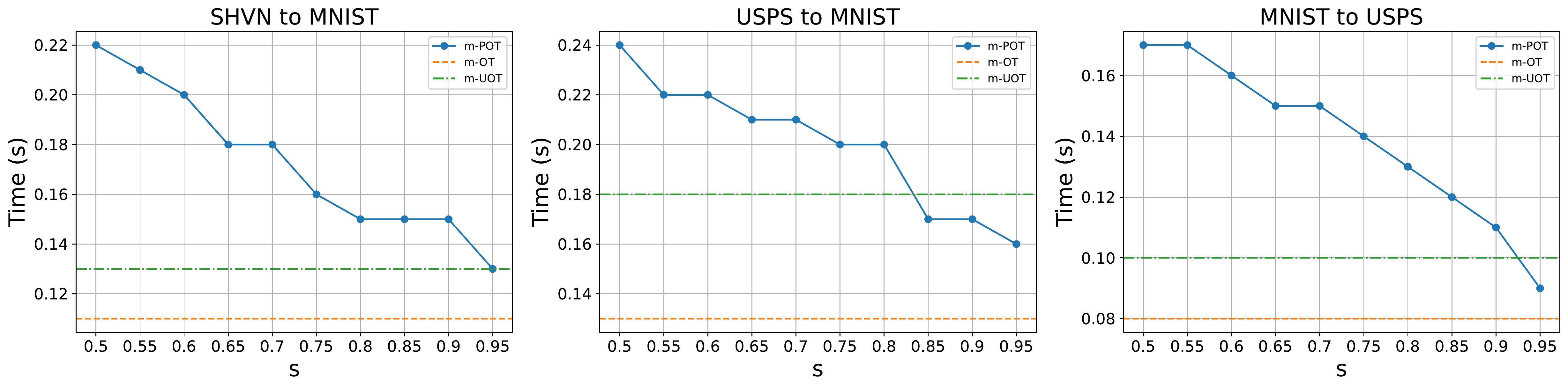}
        \\
        \widgraph{0.95\textwidth}{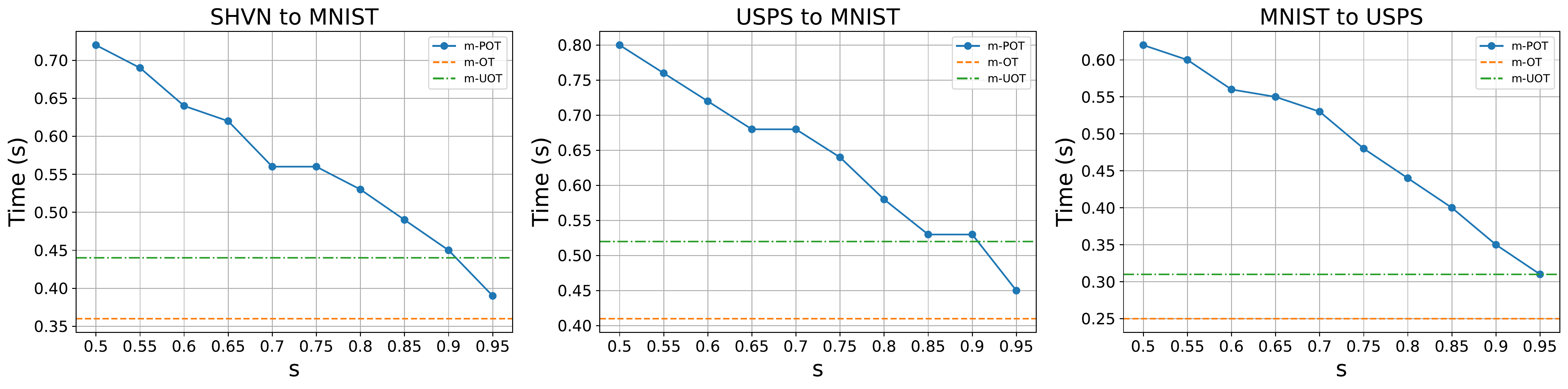} 
    \end{tabular}
    \end{center}
    \vskip -0.1in
    \caption{
        \footnotesize{The average time (in seconds) to run one iteration on the deep DA on digits datasets. All experiments are run with a mini-batch size $m=500$. $k$ is set to $1$ in the first row and $2$ in the second row.}
    } 
      \label{fig:deepDA_timing}
      \vskip -0.1in
\end{figure*}

\begin{table}[!t]
    \caption{The average time (in seconds) to run one iteration on the deep DA experiments. The number of mini-batches varies from 1 to 4. The fraction of masses $s$ for m-POT and TS-POT are set to 0.6 and 0.75 on Office-Home and VisDA datasets, respectively.}
    \vskip 0.1in
    \label{table:two_stage_timing}
    \centering
    \scalebox{1.0}{
        \begin{tabular}{ccccc|ccc}
        \toprule
        Dataset & $k$ & m-OT & m-UOT & m-POT & TS-OT & TS-UOT & TS-POT \\
        \midrule
        Office-Home & 1 & 0.39 & 0.44 & 0.42 & 0.39 & 0.44 & 0.42 \\
        & 2 & 0.71 & 0.82 & 0.71 & 1.20 & 1.26 & 1.21 \\ 
        & 4 & 1.36 & 1.52 & 1.37 & 2.48 & 2.54 & 2.50 \\
        \midrule
        VisDA & 1 & 0.37 & 0.42 & 0.38 & 0.37 & 0.42 & 0.38 \\
        & 2 & 0.71 & 0.79 & 0.71 & 1.25 & 1.39 & 1.25 \\ 
        & 4 & 1.36 & 1.50 & 1.37 & 2.49 & 2.66 & 2.50 \\
        \bottomrule
        \end{tabular}
    }
\end{table}

\begin{table}[t!]
    \caption{The average number of iterations per second on the deep generative model experiments. The fraction of masses $s$ is set to the best performing value in Table~\ref{table:GM_details}.}
    \vskip 0.1in
    \label{table:DGM_timing}
    \centering
    \scalebox{1.0}{
        \begin{tabular}{ccccccc}
        \toprule
        Dataset & k & m & Method & Iterations per epoch & Seconds per epoch & Iterations per second \\
        \midrule
        CIFAR10 & 2 & 100 & m-OT & 250 & 25 & 10.00 \\
        & & & m-POT & 250 & 25 & 10.00 \\
        \cmidrule(lr){3-7}
        & & 200 & m-OT & 125 & 23 & 5.43 \\
        & & & m-POT & 125 & 30 & 4.17 \\
        \midrule
        CelebA & 2 & 100 & m-OT & 814 & 295 & 2.76 \\
        & & & m-POT & 814 & 295 & 2.76 \\
        \cmidrule(lr){3-7}
        & & 200 & m-OT & 407 & 277 & 1.47 \\
        & & & m-POT & 407 & 287 & 1.42 \\
        \bottomrule
        \end{tabular}
    }
\end{table}

% \begin{figure*}[!t] 
%     \begin{center}
%     \begin{tabular}{c}
%         \widgraph{0.95\textwidth}{}
%     \end{tabular}
%     \end{center}
%     \vskip -0.1in
%     \caption{
%         \footnotesize{Time taken to run one iteration in deep GM on CIFAR10 and CelebA datasets. All experiments is run with two mini-batches $k=2$. Mini-batch size $m$ is set to $100$ for CIFAR10 dataset and $200$ for CelebA dataset.}
%     } 
%       \label{fig:deepGM_timing}
%       \vskip -0.2in
% \end{figure*}

\section{Experimental Settings}
\label{sec:setting}
In this section, we provide details on datasets, architectures, and training procedures for deep domain adaptation, partial domain adaptation, and deep generative model experiments. For solving different types of OT problems, we use the POT package~\cite{flamary2021pot}. Notice that we have used the best values of $\tau$ and $\epsilon$ that are reported in the original paper of m-UOT~\cite{fatras2021unbalanced} for computing m-UOT in each experiment.

\subsection{Deep Domain Adaptation}
\label{subsec:setting_DA}

\textbf{Datasets:} We start with digits datasets. Following the evaluation protocol of DeepJDOT~\cite{damodaran2018deepjdot}, we experiment on three adaptation scenarios: SHVN to MNIST, USPS to MNIST, MNIST to USPS. The USPS dataset~\cite{hull1994database} consists of $7,291$ training and $2,007$ testing images, each one is a $16 \times 16$ grayscale handwritten image. The MNIST dataset~\cite{lecun1998gradient} contains $60,000$ training and $10,000$ testing grayscale images of size $28 \times 28$. The SVHN dataset~\cite{netzer2011reading} contains house numbers extracted from Google Street View images. This dataset has $73,212$ training images, and $26,032$ testing RGB images of size $32 \times 32$. Next, we consider the Office-Home dataset~\cite{venkateswara2017deep}, which is more difficult for domain adaptation. This dataset contains $15,500$ images in $65$ categories from four different domains: artistic (A) paintings, clipart (C), product (P), and real-world (R) images. We evaluate all methods in $12$ possible adaptation scenarios. Finally, VisDA-2017~\cite{peng2017visda} is a large-scale dataset for domain adaptation from synthetic to real images. VisDA consists of $152,397$ source (synthetic) images and $55,388$ target (real) images. Both source and target domains have the same 12 object categories. Following JUMBOT~\cite{fatras2021unbalanced}, we evaluate all methods on the validation set.

\textbf{Parameter settings for Digits datasets:} We optimize using Adam with the initial learning rate $\eta_{0} = 0.0004 \}$. The number of mini-batches $k$ for each method is set to either 1 or 2, whichever value gives the better performance. We train all algorithms with a batch size of $500$ during $100$ epochs. The hyperparameters in equation (\ref{eq:cost_matrix}) follow the setting in m-UOT: $\alpha = 0.1, \lambda_{t} = 0.1$. For computing UOT, we also set $\tau$ to $1$ and $\epsilon$ to $0.1$. For computing m-POT, the value of $s$ for SVHN to MNIST, USPS to MNIST, and MNIST to USPS is set to 0.85, 0.90, and 0.80, respectively. The entropic regularization coefficient $\epsilon$ of m-POT is chosen from the set $\{ 0, 0.1, 0.2 \}$ based on the specific task.

\textbf{Parameter settings for the Office-Home dataset:} We train all algorithms with a batch size of $65$ during $10000$ iterations. The number of mini-batches $k$ for each method is chosen from the set $\{ 1, 2, 4 \}$, whichever value gives the best performance. Following m-UOT, the hyperparameters for computing the cost matrix is as follows $\alpha = 0.01, \lambda_{t} = 0.5, \tau = 0.5, \epsilon = 0.01$. For m-POT, the value of $s$ is chosen from the set $\{ 0.5, 0.55, 0.6, 0.65, 0.7 \}$. Again, we select the best value of the entropic regularization coefficient $\epsilon \in \{ 0, 0.1, 0.2 \}$ based on the specific task.

\textbf{Parameter settings for the VisDA dataset:} All methods are trained using a batch size of $72$ during $10000$ iterations. The number of mini-batches $k$ for each method is selected from the set $\{ 1, 2, 4 \}$, whichever value gives the best performance. The coefficients in the cost formula is as follows $\alpha = 0.005, \lambda_{t} = 1, \tau = 0.3, \epsilon = 0.01$. The fraction of masses $s$ is set to $0.75$ for m-POT and the entropic regularization coefficient $\epsilon$ is set to $0.004$.

\textbf{Parameter settings for the two-stage implementation}: For computing TS-POT, the fraction of masses $s$ has a fixed value of $0.6$ on the Office-Home dataset and $0.75$ on the VisDA dataset. On both datasets, the entropic regularization coefficient $\epsilon$ is set to 0. In terms of TS-UOT, the value of $\tau$ and $\epsilon$ is the same as that of m-UOT. Other hyperparameters for computing the cost matrix follow the above settings of each dataset.

\textbf{Neural network architectures: } Similar to DeepJDOT and JUMBOT, we use CNN for our generator and 1 FC layer for our classifier. For Office-Home and VisDA, our generator is a ResNet50 pre-trained on ImageNet excluding the last FC layer, which is our classifier. 

Generator architecture was used for the SVHN dataset: \\
$z \in \RR^{32 \times 32 \times 3} \rightarrow Conv_{32} \rightarrow BatchNorm \rightarrow ReLU  \rightarrow Conv_{32} \rightarrow BatchNorm \rightarrow ReLU \rightarrow MaxPool2D \rightarrow Conv_{64} \rightarrow BatchNorm \rightarrow ReLU  \rightarrow Conv_{64} \rightarrow BatchNorm \rightarrow ReLU \rightarrow MaxPool2D \rightarrow Conv_{128} \rightarrow BatchNorm \rightarrow ReLU  \rightarrow Conv_{128} \rightarrow BatchNorm \rightarrow ReLU \rightarrow MaxPool2D \rightarrow Sigmoid \rightarrow FC_{128}$

Generator architecture was used for the USPS dataset: \\
$z \in \RR^{28 \times 28 \times 3} \rightarrow Conv_{32} \rightarrow BatchNorm \rightarrow ReLU  \rightarrow MaxPool2D \rightarrow Conv_{64} \rightarrow BatchNorm \rightarrow ReLU  \rightarrow Conv_{128} \rightarrow BatchNorm \rightarrow ReLU  \rightarrow MaxPool2D \rightarrow Sigmoid \rightarrow FC_{128}$

Classifier architecture was used for both SVHN and USPS datasets: \\
$z \in \RR^{128} \rightarrow FC_{10}$

Classifier architecture was used for the Office-Home dataset: \\
$z \in \RR^{512} \rightarrow FC_{65}$

Classifier architecture was used for the VisDA dataset: \\
$z \in \RR^{512} \rightarrow FC_{12}$

\textbf{Training details: } Similar to both DeepJDOT and JUMBOT, we stratify the data loaders so that each class has the same number of samples in the  mini-batches. For digits datasets, we also train our neural network on the source domain during $10$ epochs before applying our method. For Office-home and VisDA datasets, because the classifiers are trained from scratch, their learning rates are set to be 10 times that of the generator. We optimize the models using an SGD optimizer with momentum = $0.9$ and weight decay = $0.0005$. We schedule the learning rate with the same strategy used in~\cite{ganin2016domain}. The learning rate at iteration $p$ is $\eta_{p} = \frac{\eta_0}{(1 + \mu q)^{\nu}}$, where $q$ is the training progress linearly changing from $0$ to $1$, $\eta_0 = 0.01, \mu = 10, \nu = 0.75$. 

\textbf{Source code: } Our source code is based on \url{https://github.com/kilianFatras/JUMBOT}.

\subsection{Partial Domain Adaptation}
\label{subsec:setting_PDA}
Similar to the deep DA, we use the Office-Home dataset for evaluating our method. ResNet50 is again utilized to extract features. The training details also follow the deep DA experiment. We train each method for 5000 iterations with a batch size of 65. When evaluating the validation/test set, we do not use the ten crop technique to make a fair comparison with other methods. In terms of hyperparameters, we use the same set of hyperparameters for baseline methods. For m-OT, $\alpha = 0.001, \lambda_{t} = 0.0001$ while for m-UOT, $\alpha = 0.003, \lambda_{t} = 0.75, \tau = 0.06, \epsilon = 0.01$. In addition, m-UOT introduces an additional parameter $\eta_3$, which is set to 10, to control the scale of the cost matrix~\citep[Section~5.2]{fatras2021unbalanced}. Because m-OT and m-POT are insensitive to the scale of the cost matrix, $\eta_3$ is set to 1 for both of them. Multiplied by $\eta_3$, the coefficients $\alpha$ and $\lambda_{t}$ of m-POT are set to $0.03$ and $7.5$, respectively. The entropic regularization coefficient of m-POT is selected from $\epsilon \in \{0.5, 0.6, \ldots, 1.5\}$. The fraction of masses $s$ is not set to a constant value in this experiment. Instead, for the first 2500 iterations, its value increases linearly from $0.01$ to $0.325$, then remains constant for the last 2500 iterations.

\subsection{Deep Generative Model}
\label{subsec:setting_GAN}

\textbf{Datasets:} We train generators on CIFAR10~\cite{krizhevsky2009learning} and CelebA~\cite{liu2015faceattributes} datasets. The CIFAR10 dataset contains $10$ classes, with $50,000$ training and $10,000$ testing color images of size $32 \times 32$. CelebA is a large-scale face attributes dataset with more than $200K$ celebrity images. 

\textbf{FID scores: } We use 11000 samples from the generative model and all images from the training set to compute the FID score by the official code from authors in \cite{heusel2017gans}.

\textbf{Parameter settings: } We chose a learning rate equal to $0.0005$. When the mini-batch size $m$ is $100$, the number of epochs is set to $200$ for the CIFAR10 dataset and $100$ for the CelebA dataset. When increasing the mini-batch size to $200$, we double the number of epochs to keep the same number of gradient steps. 
 
\textbf{Neural network architectures: }
We used CNNs for both generators and discriminators on the CelebA and CIFAR10 datasets.

% Generator architecture was used for MNIST  dataset:\\
% $z \in \mathbb{R}^{32} \rightarrow FC_{100} \rightarrow ReLU \rightarrow FC_{200} \rightarrow ReLU \rightarrow FC_{400} \rightarrow ReLU \rightarrow FC_{784} \rightarrow ReLU $
% \\
% \\

Generator architecture was used for CelebA:\\
$z \in \mathbb{R}^{32} \rightarrow TransposeConv_{512} \rightarrow BatchNorm \rightarrow ReLU  \rightarrow TransposeConv_{256} \rightarrow BatchNorm \rightarrow ReLU \rightarrow TransposeConv_{128} \rightarrow BatchNorm \rightarrow ReLU \rightarrow TransposeConv_{64} \rightarrow BatchNorm \rightarrow ReLU \rightarrow TransposeConv_{3}  \rightarrow Tanh$

Discriminator architecture was used for CelebA:\\
$x \in \mathbb{R}^{64\times 64 \times 3} \rightarrow Conv_{64}  \rightarrow LeakyReLU_{0.2} \rightarrow Conv_{128} \rightarrow BatchNorm \rightarrow LearkyReLU_{0.2} \rightarrow Conv_{256} \rightarrow BatchNorm\rightarrow  LearkyReLU_{0.2}  \rightarrow Conv_{512} \rightarrow BatchNorm\rightarrow Tanh$ 
%%%%%%%%%%%%%%%%%%%%%%%%%%%%%%%%%%%%%%%%%%%%%%%%%%%%%%%%%%%%
%\clearpage

Generator architecture was used for CIFAR10:\\
$z \in \mathbb{R}^{32} \rightarrow TransposeConv_{256} \rightarrow BatchNorm \rightarrow ReLU  \rightarrow TransposeConv_{128} \rightarrow BatchNorm \rightarrow ReLU \rightarrow TransposeConv_{64} \rightarrow BatchNorm \rightarrow ReLU  \rightarrow TransposeConv_{3}  \rightarrow Tanh$

Discriminator architecture was used for CIFAR10:\\
$x \in \mathbb{R}^{32\times 32 \times 3} \rightarrow Conv_{64}  \rightarrow LeakyReLU_{0.2} \rightarrow Conv_{128} \rightarrow BatchNorm \rightarrow LearkyReLU_{0.2} \rightarrow Conv_{256} \rightarrow Tanh$ 

\subsection{Computational infrastructure}

Color transfer and gradient flow applications are conducted on a MacBook Pro 11 inch M1. While other deep learning experiments are done on NVIDIA Tesla V100 GPUs.

\end{document}